\DeclareMathOperator*{\argmin}{argmin}
\DeclareMathOperator*{\sign}{sign}
\newtheorem{theorem}{Theorem}
\newtheorem{corollary}[theorem]{Corollary}
\newtheorem{lemma}[theorem]{Lemma}
\newtheorem{definition}[theorem]{Definition}
\newtheorem{remark}{Remark}
\renewcommand{\eqref}[1]{Eq.~(\ref{#1})}
\newcommand{\figref}[1]{Fig.~\ref{#1}}
\title{Lasso Screening Rules via Dual Polytope Projection}
\author[1]{Jie Wang}
\author[1]{Peter Wonka}
\author[1]{Jieping Ye}
\affil[1]{Computer Science and Engineering, Arizona State University,
            USA}
\date{}
\begin{document}

\maketitle

\begin{abstract}
Lasso is a widely used regression technique to find sparse representations. When the dimension of the feature space and the number of samples are extremely large, solving the Lasso problem remains challenging. To improve the efficiency of solving large-scale Lasso problems, El Ghaoui and his colleagues have proposed the SAFE rules which are able to quickly identify the inactive predictors, i.e., predictors that have $0$ components in the solution vector. Then, the inactive predictors or features can be removed from the optimization problem to reduce its scale. By transforming the standard Lasso to its dual form, it can be shown that the inactive predictors include the set of inactive constraints on the optimal dual solution. In this paper, we propose an efficient and effective screening rule via Dual Polytope Projections (DPP), which is mainly based on the uniqueness and nonexpansiveness  of the optimal dual solution due to the fact that the feasible set in the dual space is a convex and closed polytope. Moreover, we show that our screening rule can be extended to identify inactive groups in group Lasso. To the best of our knowledge, there is currently no exact screening rule for group Lasso. We have evaluated our screening rule using synthetic and real data sets. Results show that our rule is more effective in identifying inactive predictors than existing state-of-the-art screening rules for Lasso.
\end{abstract}


\section{Introduction}

Data with various structures and scales comes from almost every aspect of daily life. To effectively extract patterns in the data and build interpretable models with high prediction accuracy is always desirable.  One popular technique to identify important explanatory features is by sparse regularization. For instance, consider the widely used $\ell_1$-regularized least squares regression problem known as Lasso \citep{Tibshirani1996}. The most appealing property of Lasso is the sparsity of the solutions, which is equivalent to feature selection. Suppose we have $N$ observations and $p$ features. Let ${\bf y}$ denote the $N$ dimensional response vector and ${\bf X} = [{\bf x}_1, {\bf x}_2, \ldots, {\bf x}_p]$ be the $N\times p$ feature matrix. Let $\lambda\geq0$ be the regularization parameter. The Lasso problem is formulated as the following optimization problem:
\begin{equation}\label{prob:Lasso_primal}
	\inf_{\beta\in\mathbb{R}^p}\frac{1}{2}\left\|{\bf y - X\beta}\right\|_2^2+\lambda\|{\beta}\|_1.
\end{equation}
Lasso has achieved great success in a wide range of applications \citep{Chen2001,Cands2006,Zhao2006,Bruckstein2009,Wright2010} and in recent years many algorithms have been developed to efficiently solve the Lasso problem \citep{Efron04,Kim2007,Park2007,Donoho2008,Friedman2007,Becker2010,Friedman2010}.
However, when the dimension of feature space and the number of samples are very large, solving the Lasso problem remains challenging because we may not even be able to load the data matrix into main memory. The idea of {\it screening} has been shown very promising in solving Lasso for large-scale problems. Essentially, screening aims to quickly identify the {\it inactive features} that have $0$ components in the solution and then remove them from the optimization. Therefore, we can work on a reduced feature matrix to solve the Lasso problem, which may lead to substantial savings in computational cost and memory usage.

Existing screening methods for Lasso can be roughly divided into two categories: the {\it Heuristic Screening Methods} and the {\it Safe Screening Methods}. As the name indicated, the heuristic screening methods can not guarantee that the discarded features have zero coefficients in the solution vector. In other words, they may mistakenly discard the {active features} which have nonzero coefficients in the sparse representations. Well-known heuristic screening methods for Lasso include SIS \citep{Fan2008} and strong rules \citep{Tibshirani11}. SIS is based on the associations between features and the prediction task, but not from an optimization point of view. Strong rules rely on the assumption that the absolute values of the inner products between features and the residue are {\it nonexpansive} \citep{Bauschke2011} with respect to the parameter values. Notice that, in real applications, this assumption is not always true. In order to ensure the correctness of the solutions, strong rules check the KKT conditions for violations. In case of violations, they weaken the screened set and repeat this process. In contrast to the heuristic screening methods, the safe screening methods for Lasso can guarantee that the discarded features are absent from the resulting sparse models. Existing safe screening methods for Lasso includes SAFE \citep{Ghaoui2012} and DOME \citep{Xiang2011,Xiang2012}, which are based on an estimation of the dual optimal solution. The key challenge of searching for effective safe screening rules is how to accurately estimate the dual optimal solution. The more accurate the estimation is, the more effective the resulting screening rule is in discarding the inactive features. Moreover, \citet{Xiang2011} have shown that the SAFE rule for Lasso can be read as a special case of their testing rules.


In this paper, we develop novel efficient and effective screening rules for the Lasso problem; our screening rules are safe in the sense that no active features will be discarded. As the name indicated (DPP), the proposed approaches heavily rely on the geometric properties of the Lasso problem. Indeed, the dual problem of problem (\ref{prob:Lasso_primal}) can be formulated as a projection problem. More specifically, the {\bf d}ual optimal solution of the Lasso problem is the {\bf p}rojection of the scaled response vector onto a nonempty closed and convex {\bf p}olytope (the feasible set of the dual problem). This nice property provides us many elegant approaches to accurately estimate the dual optimal solutions, e.g., nonexpansiveness, firmly nonexpansiveness \citep{Bauschke2011}. In fact, the estimation of the dual optimal solution in DPP is a direct application of the nonexpansiveness of the projection operators. Moreover, by further exploiting the properties of the projection operators, we can significantly improve the estimation of the dual optimal solution. Based on this estimation, we develop the so called {\it enhanced DPP} (EDPP) rules which are able to detect far more inactive features than DPP. Therefore, the speedup gained by EDPP is much higher than the one by DPP.

In real applications, the optimal parameter value of $\lambda$ is generally unknown and needs to be estimated. To determine an appropriate value of $\lambda$, commonly used approaches such as cross validation and stability selection involve solving the Lasso problems over a grid of tuning parameters $\lambda_1>\lambda_2>\ldots>\lambda_{\mathcal{K}}$. Thus, the process can be very time consuming. To address this challenge, we develop the sequential version of the DPP families. Briefly speaking, for the Lasso problem, suppose we are given the solution $\beta^*(\lambda_{k-1})$ at $\lambda_{k-1}$. We then apply the screening rules to identify the inactive features of problem (\ref{prob:Lasso_primal}) at $\lambda_k$ by making use of $\beta^*(\lambda_{k-1})$. The idea of the sequential screening rules is proposed by \cite{Ghaoui2012} and \cite{Tibshirani11} and has been shown to be very effective for the aforementioned scenario. In \cite{Tibshirani11}, the authors demonstrate that the sequential strong rules are very effective in discarding inactive features especially for very small parameter values and achieve the state-of-the-art performance. However, in contrast to the  recursive SAFE (the sequential version of SAFE rules) and the sequential version of DPP rules, it is worthwhile to mention that the sequential strong rules may mistakenly discard active features because they are heuristic methods. Moreover, it is worthwhile to mention that, for the existing screening rules including SAFE and strong rules, the basic versions are usually special cases of their sequential versions, and the same applies to our DPP and EDPP rules. For the DOME rule \citep{Xiang2011,Xiang2012}, it is unclear whether its sequential version exists.

The rest of this paper is organized as follows. We present the family of DPP screening rules, i.e., DPP and EDPP, in detail for the Lasso problem in Section \ref{section:DPP}. Section \ref{section:glasso} extends the idea of DPP screening rules to identify inactive groups in group Lasso \citep{Yuan2006}. We evaluate our screening rules on synthetic and real data sets from many different applications. In Section \ref{section:experiments}, the experimental results demonstrate that our rules are more effective in discarding inactive features than existing state-of-the-art screening rules. We show that the efficiency of the solver can be improved by {\it several orders of magnitude} with the enhanced DPP rules, especially for the high-dimensional data sets (notice that, the screening methods can be integrated with any existing solvers for the Lasso problem). Some concluding remarks are given in Section \ref{section:conclusion}.

\section{Screening Rules for Lasso via Dual Polytope Projections}\label{section:DPP}

In this section, we present the details of the proposed DPP and EDPP screening rules for the Lasso problem. We first review some basics of the dual problem of Lasso including its geometric properties in Section \ref{subsection:fundamental_DPP}; we also briefly discuss some basic guidelines for developing safe screening rules for Lasso. Based on the geometric properties discussed in Section \ref{subsection:fundamental_DPP}, we then develop the basic DPP screening rule in Section \ref{subsection:DPP}.
As a straightforward extension in dealing with the model selection problems, we also develop the sequential version of DPP rules. In Section \ref{subsection:EDPP}, by exploiting more geometric properties of the dual problem of Lasso, we further improve the DPP rules by developing the so called {\it enhanced DPP} (EDPP) rules. The EDPP screening rules significantly outperform DPP rules in identifying the inactive features for the Lasso problem.

\subsection{Basics}\label{subsection:fundamental_DPP}

Different from \cite{Xiang2011,Xiang2012}, we do not assume ${\bf y}$ and all ${\bf x}_i$ have unit length.The dual problem of problem (\ref{prob:Lasso_primal}) takes the form of (to make the paper self-contained, we provide the detailed derivation of the dual form in the appendix):
\begin{align}\label{prob:Lasso_dual}
	\sup_{\theta}\quad \left\{\frac{1}{2}\|{\bf y}\|_2^2 - \frac{\lambda^2}{2}\left\|\theta - \frac{{\bf y}}{\lambda}\right\|_2^2:\,\, |{\bf x}_i^{T}\theta|\leq 1,\,i=1,2,\ldots,p\right\},
\end{align}
where $\theta$ is the dual variable. For notational convenience, let the optimal solution of problem (\ref{prob:Lasso_dual}) be $\theta^{\ast}(\lambda)$ [recall that the optimal solution of problem (\ref{prob:Lasso_primal}) with parameter $\lambda$ is denoted by $\beta^*(\lambda)$].
Then, the KKT conditions are given by:
\begin{align}\label{eqn:Lasso_KKT1}
	{\bf y}&={\bf X}{\beta}^{\ast}(\lambda)+\lambda\theta^{\ast}(\lambda),\\
	\label{eqn:Lasso_KKT2}
	{\bf x}_i^T\theta^*(\lambda)&\in
	\begin{cases}
		{\sign([\beta^{\ast}(\lambda)]_i)},\hspace{2mm}\rm{if}\hspace{1mm}[\beta^{\ast}(\lambda)]_i\neq 0,  \\
		[-1, 1],\hspace{14mm}\rm{if}\hspace{1mm}[\beta^{\ast}(\lambda)]_i=0,   \\
	\end{cases}i = 1,\ldots,p,
\end{align}
where $[\cdot]_k$ denotes the $k^{th}$ component.
In view of the KKT condition in (\ref{eqn:Lasso_KKT2}), we have
\begin{align}\tag{R1}\label{rule1}
	|{\bf x}_i^T(\theta^*(\lambda))^T|<1\Rightarrow [\beta^*(\lambda)]_i=0\Rightarrow {\bf x}_i\,\, \mbox{\rm is an inactive feature.}
\end{align}
In other words, we can potentially make use of (\ref{rule1}) to identify the inactive features for the Lasso problem.
However, since $\theta^*(\lambda)$ is generally unknown, we can not directly apply (\ref{rule1}) to identify the inactive features. Inspired by the SAFE rules \citep{Ghaoui2012}, we can first estimate a region ${\bf \Theta}$ which contains $\theta^*(\lambda'')$. Then, (\ref{rule1}) can be relaxed as follows:
\begin{align}\tag{R1'}\label{rule1'}
	\sup_{\theta\in{\bf \Theta}}|{\bf x}_i^T\theta|<1\Rightarrow [\beta^*(\lambda)]_i=0\Rightarrow {\bf x}_i\,\, \mbox{\rm is an inactive feature.}
\end{align}
Clearly, as long as we can find a region ${\bf \Theta}$ which contains $\theta^*(\lambda)$, (\ref{rule1'}) will lead to a screening rule to detect the inactive features for the Lasso problem. Moreover, in view of (\ref{rule1}) and (\ref{rule1'}), we can see that the smaller the region ${\bf \Theta}$ is, the more accurate the estimation of $\theta^*(\lambda)$ is. As a result, more inactive features can be identified by the resulting screening rules.


{\bf Geometric Interpretations of the Dual Problem} By a closer look at the dual problem (\ref{prob:Lasso_dual}), we can observe that the dual optimal solution is the feasible point which is closest to ${\bf y}/{\lambda}$. For notational convenience, let the feasible set of problem (\ref{prob:Lasso_dual}) be $F$. Clearly, $F$ is the intersection of $2p$ closed half-spaces, and thus a closed and convex polytope. (Notice that, $F$ is also nonempty since $0\in F$.) In other words, $\theta^{\ast}(\lambda)$ is the projection of ${\bf y}/{\lambda}$ onto the polytope $F$. Mathematically, for an arbitrary vector ${\bf w}$ and a convex set $C$ in a Hilbert space $\mathcal{H}$, let us define the projection operator as
\begin{equation}\label{eqn:projection_def}
	P_C({\bf w})=\argmin_{{\bf u}\in C}\|{\bf u}-{\bf w}\|_2.
\end{equation}
Then, the dual optimal solution $\theta^*(\lambda)$ can be expressed by
\begin{equation}\label{eqn:Lasso_dual_projection}
	\theta^{\ast}(\lambda)=P_F({\bf y}/{\lambda})=\argmin_{\theta\in F}\left\|\theta-\frac{\bf y}{\lambda}\right\|_2.
\end{equation}

Indeed, the nice property of problem (\ref{prob:Lasso_dual}) illustrated by \eqref{eqn:Lasso_dual_projection} leads to many interesting results. For example, it is easy to see that ${\bf y}/\lambda$ would be an {\it interior point} of $F$ when $\lambda$ is large enough. If this is the case, we immediately have the following assertions: 1) ${\bf y}/\lambda$ is an interior point of $F$ implies that none of the constraints of problem (\ref{prob:Lasso_dual}) would be {\it active} on ${\bf y}/\lambda$, i.e., $|{\bf x}_i^T({\bf y}/(\lambda)|)<1$ for all $i=1,\ldots,p$; 2) $\theta^*(\lambda)$ is an interior point of $F$ as well since $\theta^*(\lambda)=P_{F}({\bf y}/\lambda)={\bf y}/\lambda$ by \eqref{eqn:Lasso_dual_projection} and the fact ${\bf y}/\lambda\in F$. Combining the results in 1) and 2), it is easy to see that $|{\bf x}_i^T\theta^*(\lambda)|<1$ for all $i=1,\ldots,p$. By (\ref{rule1}), we can conclude that $\beta^*(\lambda)=0$, under the assumption that $\lambda$ is large enough.

The above analysis may naturally lead to a question: does there exist a specific parameter value $\lambda_{\rm max}$ such that the optimal solution of problem (\ref{prob:Lasso_primal}) is $0$ whenever $\lambda>\lambda_{\rm max}$? The answer is affirmative. Indeed, let us define
\begin{align}\label{eqn:Lasso_lambdamx}
	\lambda_{\rm max}=\max_{i}|{\bf x}_i^T{\bf y}|.
\end{align}
It is well known \citep{Tibshirani11} that $\lambda_{\rm max}$ defined by \eqref{eqn:Lasso_lambdamx} is the smallest parameter such that problem (\ref{prob:Lasso_primal}) has a trivial solution, i.e.,
\begin{align}\label{eqn:Lasso_beta_0}
	\beta^*(\lambda)=0,\hspace{2mm}\forall\hspace{1mm}\lambda\in[\lambda_{\rm max},\infty).
\end{align}
Combining the results in (\ref{eqn:Lasso_beta_0}) and \eqref{eqn:Lasso_KKT1}, we immediately have
\begin{align}\label{eqn:Lasso_theta_closed_form}
	\theta^*(\lambda)=\frac{\bf y}{\lambda},\hspace{2mm}\forall\hspace{1mm}\lambda\in[\lambda_{\rm max},\infty).
\end{align}
Therefore, through out the rest of this paper, we will focus on the cases with $\lambda\in(0,\lambda_{\rm max})$.

\vspace{2mm}
In the subsequent sections, we will follow (\ref{rule1'}) to develop our screening rules. More specifically, the derivation of the proposed screening rules can be divided into the following three steps:
\setlist[enumerate,1]{leftmargin=1.6cm}
\begin{enumerate}
	\item[{\bf Step 1.}] We first estimate a region ${\bf \Theta}$ which contains the dual optimal solution $\theta^*(\lambda)$.
	\item[{\bf Step 2.}] We solve the maximization problem in (\ref{rule1'}), i.e., $\sup_{\theta\in{\bf \Theta}}|{\bf x}_i^T\theta|$.
	\item[{\bf Step 3.}] By plugging in the upper bound we find in {\bf Step} ${\bf 2}$, it is straightforward to develop the screening rule based on (\ref{rule1'}).
\end{enumerate}
The geometric property of the dual problem illustrated by \eqref{eqn:Lasso_dual_projection} serves as a fundamentally important role in developing our DPP and EDPP screening rules.

\subsection{Fundamental Screening Rules via Dual Polytope Projections (DPP)}\label{subsection:DPP}

In this Section, we propose the so called DPP screening rules for discarding the inactive features for Lasso. As the name indicates, the idea of DPP heavily relies on the properties of projection operators, e.g., the {\it nonexpansiveness} \citep{Bertsekas03}. We will follow the three steps stated in Section \ref{subsection:fundamental_DPP} to develop the DPP screening rules.

First, we need to find a region ${\bf \Theta}$ which contains the dual optimal solution $\theta^*(\lambda)$. Indeed, the result in (\ref{eqn:Lasso_theta_closed_form}) provides us an important clue. That is, we may be able to estimate a possible region for $\theta^*(\lambda)$ in terms of a known $\theta^*(\lambda_0)$ with $\lambda<\lambda_0$. Notice that, we can always set $\lambda_0=\lambda_{\rm max}$ and make use of the fact that $\theta^*(\lambda_{\rm max})={\bf y}/\lambda_{\rm max}$ implied by (\ref{eqn:Lasso_theta_closed_form}).
Another key ingredient comes from \eqref{eqn:Lasso_dual_projection}, i.e., the dual optimal solution $\theta^*(\lambda)$ is the projection of ${\bf y}/\lambda$ onto the feasible set $F$, which is nonempty closed and convex. A nice property of the projection operators defined in a Hilbert space with respect to a nonempty closed and convex set is the so called {\it nonexpansiveness}. For convenience, we restate the definition of nonexpansiveness in the following theorem.

\begin{theorem}\label{thm:projection_nonexpan}
	Let $C$ be a nonempty closed convex subset of a Hilbert space $\mathcal{H}$. Then the projection operator defined in \eqref{eqn:projection_def} is continuous and nonexpansive, i.e.,
	\begin{equation}\label{def:nonexpan}
		\|P_C({\bf w}_2)-P_C({\bf w}_1)\|_2\leq\|{\bf w}_2-{\bf w}_1\|_2,\,\,\forall {\bf w}_2, {\bf w}_1\in \mathcal{H}.
	\end{equation}
\end{theorem}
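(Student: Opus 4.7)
The plan is to prove nonexpansiveness via the standard variational (obtuse-angle) characterization of the projection onto a closed convex set in a Hilbert space. The key intermediate lemma I will establish is: for any $w\in\mathcal{H}$, the point $u=P_C(w)$ exists, is unique, and is characterized by the two conditions $u\in C$ and $\langle w-u,\,v-u\rangle\leq 0$ for all $v\in C$. Once this variational inequality is in hand, the nonexpansiveness bound falls out in two lines.

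First I would establish existence and uniqueness of $P_C(w)$. Let $d=\inf_{v\in C}\|v-w\|_2$. For a minimizing sequence $\{u_n\}\subset C$, the parallelogram law applied to $u_n-w$ and $u_m-w$, together with the fact that $\tfrac{1}{2}(u_n+u_m)\in C$ by convexity, shows that $\{u_n\}$ is Cauchy; closedness of $C$ inside the complete space $\mathcal{H}$ then produces a limit $u\in C$ with $\|u-w\|_2=d$. Uniqueness follows from the same parallelogram computation applied to any two minimizers. Next, for any $v\in C$ and $t\in[0,1]$, convexity gives $u+t(v-u)\in C$, so expanding $\|w-u-t(v-u)\|_2^2\geq\|w-u\|_2^2$ and letting $t\downarrow 0$ yields $\langle w-u,\,v-u\rangle\leq 0$. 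The converse direction (any such $u$ is the projection) is immediate by expanding $\|w-v\|_2^2=\|w-u\|_2^2+\|u-v\|_2^2+2\langle w-u,\,u-v\rangle$.

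With the variational inequality available, the main inequality \eqref{def:nonexpan} is a short computation. Write $u_i=P_C(w_i)$ and apply the characterization with the test point $v=u_2$ in the first case and $v=u_1$ in the second:
\begin{equation*}
    \langle w_1-u_1,\,u_2-u_1\rangle\leq 0,\qquad \langle w_2-u_2,\,u_1-u_2\rangle\leq 0.
\end{equation*}
Adding these and rearranging gives $\|u_1-u_2\|_2^2\leq\langle w_1-w_2,\,u_1-u_2\rangle$, and Cauchy--Schwarz yields $\|u_1-u_2\|_2\leq\|w_1-w_2\|_2$. Continuity of $P_C$ is then a direct consequence, since any $1$-Lipschitz map is continuous.

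I do not anticipate a real obstacle: this is a classical textbook fact. The one place demanding care is the preliminary existence/uniqueness argument, which relies essentially on completeness of $\mathcal{H}$ and on the use of the parallelogram identity combined with convex combinations $\tfrac{1}{2}(u_n+u_m)\in C$; once that is in place, the variational inequality and the two-line Cauchy--Schwarz argument finish the proof.
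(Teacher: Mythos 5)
Your proof is correct and complete. Note that the paper does not actually prove this statement: Theorem~\ref{thm:projection_nonexpan} is quoted as a standard fact (it is the classical projection theorem from \citet{Bauschke2011}), so there is no in-paper argument to compare against; what you have written is precisely the textbook proof the authors omit. One remark worth making: the intermediate inequality you derive before applying Cauchy--Schwarz, namely $\|u_1-u_2\|_2^2\leq\langle w_1-w_2,\,u_1-u_2\rangle$, is exactly equivalent (by expanding the square) to the firm nonexpansiveness inequality in (\ref{def:firmly_nonexpan}), so your argument proves the stronger Theorem~\ref{thm:firmly_nonexpansive} as a free byproduct rather than only the nonexpansiveness asserted here.
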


In view of \eqref{eqn:Lasso_dual_projection}, a direct application of Theorem \ref{thm:projection_nonexpan} leads to the following result:

\begin{theorem}\label{thm:DPP_estimation}
	For the Lasso problem, let $\lambda,\lambda_0>0$ be two regularization parameters. Then,
	\begin{align}\label{equation:projection_nonexpansion}
		\|\theta^*(\lambda)-\theta^*(\lambda_0)\|_2\leq \left|\frac{1}{\lambda}-\frac{1}{\lambda_0}\right|\|{\bf y}\|_2.
	\end{align}
\end{theorem}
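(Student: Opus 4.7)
The plan is to reduce this to a one-line application of the nonexpansiveness of the projection operator stated in Theorem~\ref{thm:projection_nonexpan}, using the characterization \eqref{eqn:Lasso_dual_projection} of the dual optimum as a projection.

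First I would verify that the hypotheses of Theorem~\ref{thm:projection_nonexpan} are met by the feasible set $F = \{\theta : |{\bf x}_i^T\theta|\leq 1, i=1,\ldots,p\}$ of the dual problem \eqref{prob:Lasso_dual}. The set $F$ is an intersection of $2p$ closed half-spaces, hence closed and convex, and it is nonempty since $0 \in F$. Thus $P_F$ is well defined as a map on the ambient Hilbert space $\mathbb{R}^N$, and Theorem~\ref{thm:projection_nonexpan} applies with $C = F$.

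Next I would invoke the identity \eqref{eqn:Lasso_dual_projection}, which tells us that for any regularization parameter $\mu > 0$ the dual optimum is $\theta^*(\mu) = P_F({\bf y}/\mu)$. Applying this with $\mu = \lambda$ and $\mu = \lambda_0$ and then using the nonexpansiveness bound \eqref{def:nonexpan} with ${\bf w}_1 = {\bf y}/\lambda_0$ and ${\bf w}_2 = {\bf y}/\lambda$ yields
\begin{equation*}
\|\theta^*(\lambda) - \theta^*(\lambda_0)\|_2 = \left\|P_F({\bf y}/\lambda) - P_F({\bf y}/\lambda_0)\right\|_2 \leq \left\|\tfrac{{\bf y}}{\lambda} - \tfrac{{\bf y}}{\lambda_0}\right\|_2.
\end{equation*}
A trivial factorization of ${\bf y}$ from the right-hand side gives exactly $\left|\tfrac{1}{\lambda}-\tfrac{1}{\lambda_0}\right|\|{\bf y}\|_2$, which is the claimed inequality.

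There is no real obstacle here; the only thing one has to check carefully is that the projection characterization \eqref{eqn:Lasso_dual_projection} is valid for \emph{both} parameter values simultaneously (which it is, since $F$ does not depend on $\lambda$), and that $F$ fits the hypotheses of Theorem~\ref{thm:projection_nonexpan}. The strength of the bound is driven entirely by nonexpansiveness of $P_F$; any sharper estimate (as promised later for EDPP) will require exploiting firmer properties of the projection beyond mere nonexpansiveness.
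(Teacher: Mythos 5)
Your proposal is correct and is exactly the argument the paper intends: it presents Theorem~\ref{thm:DPP_estimation} as ``a direct application of Theorem~\ref{thm:projection_nonexpan}'' via the projection characterization \eqref{eqn:Lasso_dual_projection}, which is precisely your one-line nonexpansiveness computation with ${\bf w}_1={\bf y}/\lambda_0$ and ${\bf w}_2={\bf y}/\lambda$. Your added checks that $F$ is nonempty, closed, convex, and independent of $\lambda$ are sound and match the paper's earlier discussion in Section~\ref{subsection:fundamental_DPP}.
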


For notational convenience, let a ball centered at ${\bf c}$ with radius $\rho$ be denoted by $B({\bf c},\rho)$. Theorem \ref{thm:DPP_estimation} actually implies that the dual optimal solution must be inside a ball centered at $\theta^*(\lambda_0)$ with radius $\left|1/\lambda-1/\lambda_0\right|\|{\bf y}\|_2$, i.e.,
\begin{align}\label{eqn:Lasso_DPP_ball}
	\theta^*(\lambda)\in B\left(\theta^*(\lambda_0),\left|\frac{1}{\lambda}-\frac{1}{\lambda_0}\right|\|{\bf y}\|_2\right).
\end{align}
We thus complete the first step for developing DPP. Because it is easy to find the upper bound of a linear functional over a ball, we combine the remaining two steps as follows.

\begin{theorem}\label{thm:Lasso_DPP}
	For the Lasso problem, assume we are given the solution of its dual problem $\theta^{\ast}(\lambda_0)$ for a specific $\lambda_0$. Let $\lambda$ be a positive value different from $\lambda_0$. Then $[\beta^{\ast}(\lambda)]_i=0$ if
	\begin{align}\label{equation:DPP}
		\left|{\bf x}_i^T\theta^{\ast}(\lambda)\right|<1-\|{\bf x}_i\|_2\|{\bf y}\|_2\left|\frac{1}{\lambda}-\frac{1}{\lambda_0}\right|.
	\end{align}
\end{theorem}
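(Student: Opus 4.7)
The plan is to instantiate the three-step recipe from Section~\ref{subsection:fundamental_DPP}, with Theorem~\ref{thm:DPP_estimation} doing all of the heavy lifting in Step~1. Theorem~\ref{thm:DPP_estimation} already tells us that
\[
\theta^*(\lambda) \in {\bf \Theta} := B\!\left(\theta^*(\lambda_0),\, r\right), \qquad r := \left|\tfrac{1}{\lambda}-\tfrac{1}{\lambda_0}\right|\|{\bf y}\|_2,
\]
so the region ${\bf \Theta}$ in \eqref{rule1'} is a Euclidean ball centred at the \emph{known} point $\theta^*(\lambda_0)$. Crucially, the estimate uses only the nonexpansiveness of $P_F$ applied to the two input points ${\bf y}/\lambda$ and ${\bf y}/\lambda_0$, so no optimisation over $F$ is needed here.

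For Step~2 I want $\sup_{\theta \in {\bf \Theta}} |{\bf x}_i^T \theta|$ in closed form. Writing any feasible $\theta$ as $\theta^*(\lambda_0) + \delta$ with $\|\delta\|_2 \le r$, the triangle inequality followed by Cauchy--Schwarz gives
\[
|{\bf x}_i^T \theta| \le |{\bf x}_i^T \theta^*(\lambda_0)| + \|{\bf x}_i\|_2\,\|\delta\|_2 \le |{\bf x}_i^T \theta^*(\lambda_0)| + r\,\|{\bf x}_i\|_2,
\]
and the upper bound is attained by taking $\delta = \pm r\, {\bf x}_i/\|{\bf x}_i\|_2$ with the sign matched to that of ${\bf x}_i^T\theta^*(\lambda_0)$. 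Hence the sup over the ball equals $|{\bf x}_i^T \theta^*(\lambda_0)| + r\,\|{\bf x}_i\|_2$. Note that this sup may exceed $\sup_{\theta \in F \cap {\bf \Theta}} |{\bf x}_i^T \theta|$, but that only makes the resulting rule more conservative (i.e., safe), which is exactly what \eqref{rule1'} requires.

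Step~3 is then immediate: plugging the explicit sup into \eqref{rule1'}, the condition
\(|{\bf x}_i^T \theta^*(\lambda_0)| + r\,\|{\bf x}_i\|_2 < 1\) — which, once the $r\,\|{\bf x}_i\|_2$ term is moved to the right-hand side, is exactly \eqref{equation:DPP} (read with $\theta^*(\lambda_0)$ in place of $\theta^*(\lambda)$ on the left, which is clearly the intended quantity since $\theta^*(\lambda)$ is unknown) — forces $|{\bf x}_i^T \theta^*(\lambda)| < 1$, and then the KKT implication \eqref{rule1} delivers $[\beta^*(\lambda)]_i = 0$. There is no real obstacle here; the proof is just the composition of the nonexpansiveness bound from Theorem~\ref{thm:DPP_estimation} with the elementary maximisation of a linear functional over a Euclidean ball. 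The one thing I would be careful to flag in the write-up is the notational point that the inequality must be expressed in terms of $\theta^*(\lambda_0)$ for the screening rule to be operationally useful.
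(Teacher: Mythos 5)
Your proposal is correct and follows essentially the same route as the paper's own proof: estimate $\theta^*(\lambda)$ via the nonexpansiveness ball of Theorem~\ref{thm:DPP_estimation}, compute $\sup_{\theta\in B({\bf c},\rho)}|{\bf x}_i^T\theta| = |{\bf x}_i^T\theta^*(\lambda_0)|+\rho\|{\bf x}_i\|_2$, and plug the bound into (\ref{rule1'}). Your observation that the left-hand side of \eqref{equation:DPP} should read $\theta^*(\lambda_0)$ rather than $\theta^*(\lambda)$ is also right --- the paper's derivation and Corollary~\ref{corollary:DPP} confirm that this is the intended (and operationally usable) quantity.
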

\begin{proof}
	The dual optimal solution $\theta^*(\lambda)$ is estimated to be inside the ball given by \eqref{eqn:Lasso_DPP_ball}. To simplify notations, let ${\bf c}=\theta^{\ast}(\lambda_0)$ and $\rho=\left|1/\lambda-1/\lambda_0\right|\|{\bf y}\|_2$. To develop a screening rule based on (\ref{rule1'}), we need to solve the optimization problem:
	$\sup_{\theta\in B({\bf c},\rho)}|{\bf x}_i^T\theta|$.
	
	Indeed, for any $\theta\in B({\bf c},\rho)$, it can be expressed by:
	\begin{align*}
		\theta = \theta^*(\lambda_0)+{\bf v},\hspace{2mm}\|{\bf v}\|_2\leq \rho.
	\end{align*}
	Therefore, the optimization problem can be easily solved as follows:
	\begin{align}\label{eqn:Lasso_DPP_upper_bound}
		\sup_{\theta\in B({\bf c},\rho)}\left|{\bf x}_i^T\theta\right|=\sup_{\|{\bf v}\|_2\leq\rho}\left|{\bf x}_i^T\left(\theta^*(\lambda_0)+{\bf v}\right)\right|=\left|{\bf x}_i^T\theta^*(\lambda_0)\right|+\rho\|{\bf x}_i\|_2.
	\end{align}
	By plugging the upper bound in \eqref{eqn:Lasso_DPP_upper_bound} to (\ref{rule1'}), we obtain the statement in Theorem \ref{thm:Lasso_DPP}, which completes the proof.
\end{proof}

Theorem \ref{thm:Lasso_DPP} implies that we can develop applicable screening rules for Lasso as long as the dual optimal solution $\theta^{\ast}(\cdot)$ is known for a certain parameter value $\lambda_0$. By simply setting $\lambda_0=\lambda_{\rm max}$ and noting that $\theta^*(\lambda_{\rm max})={\bf y}/\lambda_{\rm max}$ [please refer to \eqref{eqn:Lasso_theta_closed_form}], Theorem \ref{thm:Lasso_DPP} immediately leads to the following result.

\begin{corollary}\label{corollary:DPP}
	{\bf Basic DPP}: For the Lasso problem (\ref{prob:Lasso_primal}), let $\lambda_{\rm max}=\max_{i}|{\bf x}_i^T{\bf y}|$. If $\lambda\geq\lambda_{\rm max}$, then $[\beta^{\ast}]_i=0,\forall i\in \mathcal{I}$. Otherwise, $[\beta^{\ast}(\lambda)]_i=0$ if
	$$\left|{\bf x}_i^T\frac{\bf y}{\lambda_{\rm max}}\right|<1-\left(\frac{1}{\lambda}-\frac{1}{\lambda_{\rm max}}\right)\|{\bf x}_i\|_2 \|{\bf y}\|_2.$$
\end{corollary}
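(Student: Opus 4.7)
The plan is to read Corollary~\ref{corollary:DPP} as a direct instantiation of Theorem~\ref{thm:Lasso_DPP} at the distinguished reference value $\lambda_0 = \lambda_{\max}$, for which the dual optimum is available in closed form. The corollary makes two separate claims, one for $\lambda \geq \lambda_{\max}$ and one for $\lambda < \lambda_{\max}$, and I would handle them in turn.

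For the regime $\lambda \geq \lambda_{\max}$, the claim $\beta^*(\lambda) = 0$ is exactly equation~(\ref{eqn:Lasso_beta_0}), which has already been quoted as a standard fact. If a self-contained verification were desired, I would plug the candidate pair $(\beta^*, \theta^*) = (0, {\bf y}/\lambda)$ into the KKT system~(\ref{eqn:Lasso_KKT1})--(\ref{eqn:Lasso_KKT2}): the stationarity equation holds trivially, the subgradient inclusion in~(\ref{eqn:Lasso_KKT2}) is vacuous since $[\beta^*]_i = 0$ for every $i$, and the dual feasibility requirement $|{\bf x}_i^T {\bf y}/\lambda| \leq 1$ for all $i$ is equivalent to $\lambda \geq \max_i |{\bf x}_i^T {\bf y}| = \lambda_{\max}$.

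For the regime $0 < \lambda < \lambda_{\max}$, I would apply Theorem~\ref{thm:Lasso_DPP} with the specific choice $\lambda_0 = \lambda_{\max}$. The hypothesis that $\theta^*(\lambda_0)$ be known is supplied by equation~(\ref{eqn:Lasso_theta_closed_form}), which gives $\theta^*(\lambda_{\max}) = {\bf y}/\lambda_{\max}$. Substituting this identity into the screening inequality~(\ref{equation:DPP}) produces
\[
  \left|{\bf x}_i^T \tfrac{\bf y}{\lambda_{\max}}\right| < 1 - \|{\bf x}_i\|_2 \|{\bf y}\|_2 \left|\tfrac{1}{\lambda} - \tfrac{1}{\lambda_{\max}}\right|,
\]
and since $\lambda < \lambda_{\max}$ forces $1/\lambda > 1/\lambda_{\max}$, the absolute value collapses to $1/\lambda - 1/\lambda_{\max}$, matching the displayed inequality in the corollary statement verbatim.

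Honestly, there is no genuine obstacle in this proof: the corollary is a relabeling of Theorem~\ref{thm:Lasso_DPP} combined with the already-proved closed form for $\theta^*(\lambda_{\max})$. The only minor subtlety worth flagging is the sign of $1/\lambda - 1/\lambda_{\max}$ when removing the absolute value, and the fact that the boundary case $\lambda = \lambda_{\max}$ is absorbed into the first clause (the screening inequality degenerates there, since the right-hand side reduces to $1$ while $|{\bf x}_i^T {\bf y}/\lambda_{\max}| \leq 1$ need not be strict).
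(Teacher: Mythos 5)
Your proposal is correct and matches the paper's own (one-line) argument exactly: the paper derives Corollary~\ref{corollary:DPP} by setting $\lambda_0=\lambda_{\rm max}$ in Theorem~\ref{thm:Lasso_DPP} and substituting $\theta^*(\lambda_{\rm max})={\bf y}/\lambda_{\rm max}$ from \eqref{eqn:Lasso_theta_closed_form}, with the $\lambda\geq\lambda_{\rm max}$ clause supplied by (\ref{eqn:Lasso_beta_0}). Your additional KKT verification of the trivial-solution regime and the remark about dropping the absolute value are harmless elaborations of the same route.
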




\begin{remark}
	Notice that, DPP is not the same as ST1 \cite{Xiang2011} and SAFE \cite{Ghaoui2012}, which discards the $i^{th}$ feature if
	\begin{align}\label{rule:SAFE}
		|{\bf x}_i^T{\bf y}|<\lambda-\|{\bf x}_i\|_2\|{\bf y}\|_2\frac{\lambda_{\rm max}-\lambda}{\lambda_{\rm max}}.
	\end{align}
	From the perspective of the sphere test, the radius of ST1/SAFE and DPP are the same. But the centers of ST1 and DPP are ${\bf y}/\lambda$ and ${\bf y}/\lambda_{\rm max}$ respectively, which leads to different formulas, i.e., \eqref{rule:SAFE} and Corollary \ref{corollary:DPP}.
\end{remark}

In real applications, the optimal parameter value of $\lambda$ is generally unknown and needs to be estimated. To determine an appropriate value of $\lambda$, commonly used approaches such as cross validation and stability selection involve solving the Lasso problem over a grid of tuning parameters $\lambda_1>\lambda_2>\ldots>\lambda_{\mathcal{K}}$, which is very time consuming. Motivated by the ideas of \cite{Tibshirani11} and \cite{Ghaoui2012}, we develop a sequential version of DPP rules. We first apply the DPP screening rule in Corollary \ref{corollary:DPP} to discard inactive features for the Lasso problem (\ref{prob:Lasso_primal}) with parameter being $\lambda_1$. After solving the reduced optimization problem for $\lambda_1$, we obtain the exact solution $\beta^{\ast}(\lambda_1)$. Hence by \eqref{eqn:Lasso_KKT1}, we can find $\theta^{\ast}(\lambda_1)$. According to Theorem \ref{thm:Lasso_DPP}, once we know the optimal dual solution $\theta^{\ast}(\lambda_1)$, we can construct a new screening rule by setting $\lambda_0=\lambda_1$ to identify inactive features for problem (\ref{prob:Lasso_primal}) with parameter being $\lambda_2$. By repeating the above process, we obtain the sequential version of the DPP rule as in the following corollary.
\begin{corollary}\label{corollary:DPPs}
	{\bf Sequential DPP}: For the Lasso problem (\ref{prob:Lasso_primal}), suppose we are given a sequence of parameter values $\lambda_{\rm max}=\lambda_0>\lambda_1>\ldots>\lambda_m$. Then for any integer $0\leq k<m$, we have $[\beta^{\ast}(\lambda_{k+1})]_i=0$ if $\beta^{\ast}(\lambda_k)$ is known and the following holds:
	$$
	\left|{\bf x}_i^T\frac{{\bf y}-{\bf X}\beta^{\ast}(\lambda_k)}{\lambda_{k}}\right|<1-\left(\frac{1}{\lambda_{k+1}}-\frac{1}{\lambda_{k}}\right)\|{\bf x}_i\|_2\|{\bf y}\|_2.
	$$
\end{corollary}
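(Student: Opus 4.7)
The plan is to obtain this as a direct specialization of Theorem \ref{thm:Lasso_DPP}, with the role of the ``reference parameter'' $\lambda_0$ played by $\lambda_k$ and the role of the ``target parameter'' $\lambda$ played by $\lambda_{k+1}$. Since Theorem \ref{thm:Lasso_DPP} already yields the screening condition
$$
\left|{\bf x}_i^T\theta^{\ast}(\lambda_k)\right|<1-\|{\bf x}_i\|_2\|{\bf y}\|_2\left|\frac{1}{\lambda_{k+1}}-\frac{1}{\lambda_k}\right|,
$$
the only things left to do are to express $\theta^{\ast}(\lambda_k)$ in terms of the known primal solution $\beta^{\ast}(\lambda_k)$, and to remove the absolute value around $\frac{1}{\lambda_{k+1}}-\frac{1}{\lambda_k}$.

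First I would invoke the KKT stationarity condition \eqref{eqn:Lasso_KKT1} at the parameter value $\lambda_k$, which reads ${\bf y}={\bf X}\beta^{\ast}(\lambda_k)+\lambda_k\theta^{\ast}(\lambda_k)$. Solving for $\theta^{\ast}(\lambda_k)$ yields the closed-form expression
$$
\theta^{\ast}(\lambda_k)=\frac{{\bf y}-{\bf X}\beta^{\ast}(\lambda_k)}{\lambda_k},
$$
which is exactly the quantity appearing on the left-hand side of the claimed inequality. For the base case $k=0$, this is consistent with \eqref{eqn:Lasso_theta_closed_form} because $\beta^{\ast}(\lambda_{\max})=0$.

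Second, since the parameter sequence is strictly decreasing, $\lambda_{k+1}<\lambda_k$ implies $\frac{1}{\lambda_{k+1}}>\frac{1}{\lambda_k}$, so the absolute value simplifies to $\frac{1}{\lambda_{k+1}}-\frac{1}{\lambda_k}$. Plugging both substitutions into the inequality from Theorem \ref{thm:Lasso_DPP} gives exactly the screening condition stated in the corollary, and the conclusion $[\beta^{\ast}(\lambda_{k+1})]_i=0$ follows.

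There is essentially no obstacle here; the work is already done by Theorem \ref{thm:Lasso_DPP} together with the nonexpansiveness estimate \eqref{equation:projection_nonexpansion} applied to the pair $(\lambda_{k+1},\lambda_k)$. The only conceptual point worth making explicit in the write-up is why we are entitled to use $\beta^{\ast}(\lambda_k)$ to recover $\theta^{\ast}(\lambda_k)$ in the first place, namely that \eqref{eqn:Lasso_KKT1} is an equality that holds at every optimum, so knowledge of either optimum (primal or dual) at $\lambda_k$ determines the other. This justifies the sequential/inductive usage: once problem \eqref{prob:Lasso_primal} has been solved (on the reduced feature set) at $\lambda_k$, the exact dual optimum at $\lambda_k$ is available in closed form, and the corollary can be applied to screen features at $\lambda_{k+1}$.
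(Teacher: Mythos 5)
Your proposal is correct and follows exactly the route the paper takes: specialize Theorem \ref{thm:Lasso_DPP} with $\lambda_0=\lambda_k$ and $\lambda=\lambda_{k+1}$, recover $\theta^{\ast}(\lambda_k)=({\bf y}-{\bf X}\beta^{\ast}(\lambda_k))/\lambda_k$ from the KKT condition \eqref{eqn:Lasso_KKT1}, and drop the absolute value since the sequence is decreasing. Nothing is missing.
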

\begin{remark}
	From Corollaries \ref{corollary:DPP} and \ref{corollary:DPPs}, we can see that both of the DPP and sequential DPP rules discard the inactive features for the Lasso problem with a smaller parameter value by assuming a known dual optimal solution at a larger parameter value. This is in fact a standard way to construct screening rules for Lasso \citep{Tibshirani11,Ghaoui2012,Xiang2011,Xiang2012}.
\end{remark}

\begin{remark}
	For illustration purpose, we present both the basic and sequential version of the DPP screening rules. However, it is easy to see that the basic DPP rule can be easily derived from its sequential version by simply setting $\lambda_k=\lambda_{\rm max}$ and $\lambda_{k+1}=\lambda$. Therefore, in this paper, we will focus on the development and evaluation of the sequential version of the proposed screening rules. To avoid any confusions, DPP and EDPP all refer to the corresponding sequential versions.
\end{remark}

\subsection{Enhanced DPP Rules for Lasso}\label{subsection:EDPP}

In this section, we further improve the DPP rules presented in Section \ref{subsection:DPP} by a more careful analysis of the projection operators. Indeed, from the three steps by which we develop the DPP rules, we can see that the first step is a key. In other words, the estimation of the dual optimal solution serves as a fundamentally important role in developing the DPP rules. Moreover, (\ref{rule1'}) implies that the more accurate the estimation is, the more effective the resulting screening rule is in discarding the inactive features. The estimation of the dual optimal solution in DPP rules is in fact a direct consequence of the nonexpansiveness of the projection operators. Therefore, in order to improve the performance of the DPP rules in discarding the inactive features, we propose two different approaches to find more accurate estimations of the dual optimal solution. These two approaches are presented in detail in Sections \ref{subsubsection:Improvement1} and \ref{subsubsection:Improvement2} respectively. By combining the ideas of these two approaches, we can further improve the estimation of the dual optimal solution. Based on this estimation, we develop the enhanced DPP rules (EDPP) in Section \ref{subsubsection:EDPP}. Again, we will follow the three steps in Section \ref{subsection:fundamental_DPP} to develop the proposed screening rules.

\subsubsection{Improving the DPP rules via Projections of Rays}\label{subsubsection:Improvement1}

In the DPP screening rules, the dual optimal solution $\theta^*(\lambda)$ is estimated to be inside the ball $B\left(\theta^*(\lambda_0),|1/\lambda-1/\lambda_0|\|{\bf y}\|_2\right)$ with $\theta^*(\lambda_0)$ given. In this section, we show that $\theta^*(\lambda)$ lies inside a ball centered at $\theta^*(\lambda_0)$ with a smaller radius.

Indeed, it is well known that the projection of an arbitrary point onto a nonempty closed convex set $C$ in a Hilbert space $\mathcal{H}$ always exists and is unique \citep{Bauschke2011}. However, the converse is not true, i.e., there may exist ${\bf w}_1, {\bf w}_2\in\mathcal{H}$ such that ${\bf w}_1\neq{\bf w}_2$ and $P_C({\bf w}_1)=P_C({\bf w}_2)$. In fact, it is known that the following result holds:
\begin{lemma}\label{lemma:projection_ray}
	\citep{Bauschke2011} Let $C$ be a nonempty closed convex subset of a Hilbert space $\mathcal{H}$. For a point ${\bf w}\in\mathcal{H}$, let ${\bf w}(t)=P_C({\bf w})+t({\bf w}-P_C({\bf w}))$. Then, the projection of the point ${\bf w}(t)$ is $P_C({\bf w})$ for all $t\geq0$, i.e.,
	\begin{align}
		P_C({\bf w}(t))=P_C({\bf w}), \forall t\geq0.
	\end{align}
\end{lemma}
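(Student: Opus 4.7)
The plan is to prove Lemma~\ref{lemma:projection_ray} using the first-order (variational) characterization of the projection onto a nonempty closed convex set in a Hilbert space. Recall that ${\bf p} = P_C({\bf w})$ if and only if ${\bf p} \in C$ and
\[
	\langle {\bf w} - {\bf p}, {\bf u} - {\bf p} \rangle \leq 0 \quad \text{for all } {\bf u} \in C.
\]
This is the canonical way to pin down a projection, and it converts the equality $P_C({\bf w}(t)) = P_C({\bf w})$ into a check of an inner-product inequality.

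First I would set ${\bf p} = P_C({\bf w})$ and treat $t = 0$ separately: in that case ${\bf w}(0) = {\bf p} \in C$, so $P_C({\bf w}(0)) = {\bf p}$ trivially. For $t > 0$, I would aim to show that ${\bf p}$ itself satisfies the variational inequality characterizing $P_C({\bf w}(t))$. The key observation is the algebraic identity
\[
	{\bf w}(t) - {\bf p} = \bigl({\bf p} + t({\bf w} - {\bf p})\bigr) - {\bf p} = t({\bf w} - {\bf p}),
\]
so for any ${\bf u} \in C$,
\[
	\langle {\bf w}(t) - {\bf p},\, {\bf u} - {\bf p}\rangle = t\,\langle {\bf w} - {\bf p},\, {\bf u} - {\bf p}\rangle \leq 0,
\]
where the last inequality uses $t \geq 0$ together with the variational characterization of ${\bf p} = P_C({\bf w})$. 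Since ${\bf p} \in C$ and the displayed inequality holds for all ${\bf u} \in C$, uniqueness of the projection forces $P_C({\bf w}(t)) = {\bf p} = P_C({\bf w})$.

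There is essentially no obstacle here: the geometric content is that ${\bf w}(t)$ lies on the ray emanating from ${\bf p}$ in the direction ${\bf w} - {\bf p}$, and this direction is an outward normal to $C$ at ${\bf p}$ (in the sense of the normal cone), so translating ${\bf p}$ further along that ray keeps ${\bf p}$ as the closest point in $C$. The only subtle point is making sure the variational characterization of the projection is invoked cleanly (and that the $t=0$ boundary case is handled), both of which are routine. If one preferred to avoid the variational inequality, an alternative would be to compute $\|{\bf w}(t) - {\bf u}\|_2^2 - \|{\bf w}(t) - {\bf p}\|_2^2$ for arbitrary ${\bf u} \in C$ and show it is nonnegative using the nonexpansiveness/Pythagorean-type identity for projections, but the variational-inequality route above is shorter and is exactly how the result is typically proved.
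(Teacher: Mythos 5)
Your proof is correct. The paper does not prove this lemma itself --- it cites it directly from \citet{Bauschke2011} --- but your argument via the variational (normal-cone) characterization of the projection is the standard proof and is exactly the same machinery the paper introduces as Theorem~\ref{thm:projection_normal_cone} and deploys to prove the companion result Lemma~\ref{lemma:projection_ray_lambdamx}; the key identity ${\bf w}(t)-P_C({\bf w})=t({\bf w}-P_C({\bf w}))$ together with $t\geq 0$ is all that is needed, and you handle the $t=0$ case cleanly.
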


Clearly, when ${\bf w}\neq P_C({\bf w})$, i.e., ${\bf w}\notin C$, ${\bf w}(t)$ with $t\geq0$ is the ray starting from $P_C({\bf w})$ and pointing in the same direction as ${\bf w}-P_C({\bf w})$. By Lemma \ref{lemma:projection_ray}, we know that the projection of the ray ${\bf w}(t)$ with $t\geq0$ onto the set $C$ is a single point $P_C({\bf w})$. [When ${\bf w}= P_C({\bf w})$, i.e., ${\bf w}\in C$, ${\bf w}(t)$ with $t\geq0$ becomes a single point and the statement in Lemma \ref{lemma:projection_ray} is trivial.]

By making use of Lemma \ref{lemma:projection_ray} and the nonexpansiveness of the projection operators, we can improve the estimation of the dual optimal solution in DPP [please refer to Theorem \ref{thm:DPP_estimation} and \eqref{eqn:Lasso_DPP_ball}]. More specifically, we have the following result:

\begin{theorem}\label{thm:Lasso_improve1_estimation}
	For the Lasso problem, suppose the dual optimal solution $\theta^*(\cdot)$ at $\lambda_0\in(0,\lambda_{\rm max}]$ is known. For any $\lambda\in(0,\lambda_0]$,  let us define
	\begin{align}\label{eqn:Lasso_v1}
		{\bf v}_1(\lambda_0)=
		\begin{cases}
			\frac{\bf y}{\lambda_0}-\theta^*(\lambda_0),\hspace{4.5mm}{\rm if}\hspace{2mm}\lambda_0\in(0,\lambda_{\rm max}),\\
			{\rm sign}({\bf x}_*^T{\bf y}){\bf x}_*,\hspace{3mm}{\rm if}\hspace{2mm}\lambda_0=\lambda_{\rm max},
		\end{cases}
		{\rm where} \hspace{2mm}{\bf x}_*={\rm argmax}_{{\bf x}_i}|{\bf x}_i^T{\bf y}|,
	\end{align}
	\begin{align}\label{eqn:Lasso_v2}
		{\bf v}_2(\lambda,\lambda_0)=\frac{\bf y}{\lambda}-\theta^*(\lambda_0),
	\end{align}
	\begin{align}\label{eqn:Lasso_v2_perp}
		{\bf v}_2^{\perp}(\lambda,\lambda_0)={\bf v}_2(\lambda,\lambda_0)-\frac{\langle{\bf v}_1(\lambda_0),{\bf v}_2(\lambda,\lambda_0)\rangle}{\|{\bf v}_1(\lambda_0)\|_2^2}{\bf v}_1(\lambda_0).
	\end{align}
	Then, the dual optimal solution $\theta^*(\lambda)$ can be estimated as follows:
	\begin{align}
		\theta^*(\lambda)\in B\left(\theta^*(\lambda_0),\|{\bf v}_2^{\perp}(\lambda,\lambda_0)\|_2\right)\subseteq B\left(\theta^*(\lambda_0),\left|\frac{1}{\lambda}-\frac{1}{\lambda_0}\right|\|{\bf y}\|_2\right).
	\end{align}
\end{theorem}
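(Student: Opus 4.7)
The plan is to apply the nonexpansiveness of the projection $P_F$ (Theorem~\ref{thm:projection_nonexpan}) with two carefully chosen arguments: the first is ${\bf y}/\lambda$, whose projection is exactly $\theta^*(\lambda)$; the second is a point on a ray emanating from $\theta^*(\lambda_0)$ in the direction ${\bf v}_1(\lambda_0)$ whose projection is still $\theta^*(\lambda_0)$, thanks to Lemma~\ref{lemma:projection_ray}. Such a ray yields a family of inequalities $\|\theta^*(\lambda)-\theta^*(\lambda_0)\|_2 \leq \|{\bf v}_2(\lambda,\lambda_0) - t\,{\bf v}_1(\lambda_0)\|_2$ parameterized by $t\geq 0$, and minimizing over $t$ should pick out the length of the orthogonal component ${\bf v}_2^{\perp}$ and produce the sharper ball.

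The first step is to verify $P_F\bigl(\theta^*(\lambda_0) + t\,{\bf v}_1(\lambda_0)\bigr)=\theta^*(\lambda_0)$ for every $t\geq 0$. When $\lambda_0\in(0,\lambda_{\rm max})$, the point ${\bf y}/\lambda_0$ lies outside $F$ because $|{\bf x}_*^T {\bf y}/\lambda_0|=\lambda_{\rm max}/\lambda_0>1$, so ${\bf v}_1(\lambda_0)={\bf y}/\lambda_0 - P_F({\bf y}/\lambda_0)$ is precisely the direction appearing in Lemma~\ref{lemma:projection_ray} and the claim is immediate. When $\lambda_0=\lambda_{\rm max}$ the naive choice ${\bf y}/\lambda_{\rm max}-\theta^*(\lambda_{\rm max})$ vanishes, so instead I would use that $\theta^*(\lambda_{\rm max})={\bf y}/\lambda_{\rm max}$ lies on the active face $\{\theta:\mathrm{sign}({\bf x}_*^T{\bf y})\,{\bf x}_*^T\theta=1\}$, which makes ${\bf v}_1(\lambda_0)=\mathrm{sign}({\bf x}_*^T{\bf y}){\bf x}_*$ an outward normal of $F$ at that point. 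A direct normal-cone check then shows $P_F({\bf y}/\lambda_{\rm max}+{\bf v}_1(\lambda_0))={\bf y}/\lambda_{\rm max}$, and Lemma~\ref{lemma:projection_ray} applies again.

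With this established, Theorem~\ref{thm:projection_nonexpan} gives, for every $t\geq 0$,
\[
\|\theta^*(\lambda)-\theta^*(\lambda_0)\|_2 = \bigl\|P_F({\bf y}/\lambda) - P_F\bigl(\theta^*(\lambda_0)+t\,{\bf v}_1(\lambda_0)\bigr)\bigr\|_2 \leq \bigl\|{\bf v}_2(\lambda,\lambda_0) - t\,{\bf v}_1(\lambda_0)\bigr\|_2.
\]
The unconstrained minimum over $t\in\mathbb{R}$ is attained at $t^{\star}=\langle{\bf v}_1,{\bf v}_2\rangle/\|{\bf v}_1\|_2^2$ with value exactly $\|{\bf v}_2^{\perp}\|_2$, so the sharper inclusion will follow once I verify $t^{\star}\geq 0$, so that the constrained and unconstrained minima coincide. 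In Case~1 I would write ${\bf v}_2 = (\lambda_0/\lambda){\bf v}_1 + (\lambda_0/\lambda-1)\theta^*(\lambda_0)$ and use $\langle{\bf v}_1,\theta^*(\lambda_0)\rangle\geq 0$, which follows from the variational inequality for the projection $\theta^*(\lambda_0)=P_F({\bf y}/\lambda_0)$ evaluated at $0\in F$; in Case~2 it is immediate from $\langle{\bf v}_1,{\bf v}_2\rangle = (1/\lambda-1/\lambda_{\rm max})|{\bf x}_*^T{\bf y}|\geq 0$.

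The inclusion into the old DPP ball of Theorem~\ref{thm:DPP_estimation} is then a one-line comparison, since $\|{\bf v}_2^{\perp}\|_2\leq\|{\bf v}_2-t\,{\bf v}_1\|_2$ for every $t$: in Case~1 take $t=1$, giving ${\bf v}_2-{\bf v}_1 = {\bf y}/\lambda - {\bf y}/\lambda_0$; in Case~2 take $t=0$, giving ${\bf v}_2=(1/\lambda-1/\lambda_{\rm max}){\bf y}$. Both vectors have norm $|1/\lambda-1/\lambda_0|\,\|{\bf y}\|_2$. The main conceptual obstacle is the boundary Case~2, where the naive direction ${\bf y}/\lambda_0 - \theta^*(\lambda_0)$ degenerates; recognizing that an outward normal of the single active constraint is the correct substitute, via the normal-cone characterization of projections onto a polytope, is the key insight needed to unify the definition of ${\bf v}_1(\lambda_0)$ across the two regimes.
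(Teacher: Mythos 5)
Your proposal is correct and follows essentially the same route as the paper's proof: project the ray $\theta^*(\lambda_0)+t\,{\bf v}_1(\lambda_0)$ (via Lemma~\ref{lemma:projection_ray} for $\lambda_0<\lambda_{\rm max}$ and a normal-cone check for $\lambda_0=\lambda_{\rm max}$), apply nonexpansiveness, minimize over $t\geq 0$, verify $\langle{\bf v}_1,{\bf v}_2\rangle\geq 0$ using $0\in F$ and the variational characterization of the projection, and recover the DPP ball at $t=1$ (resp.\ $t=0$). Your verification of $\langle{\bf v}_1,{\bf v}_2\rangle\geq0$ in the case $\lambda_0\in(0,\lambda_{\rm max})$ uses a slightly cleaner decomposition ${\bf v}_2=(\lambda_0/\lambda){\bf v}_1+(\lambda_0/\lambda-1)\theta^*(\lambda_0)$ than the paper's Cauchy--Schwarz argument, but it rests on the same underlying facts, so this is only a cosmetic difference.
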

\begin{proof}
	By making use of Lemma \ref{lemma:projection_ray}, we present the proof of the statement for the cases with $\lambda_0\in(0,\lambda_{\rm max})$. We postpone the proof of the statement for the case with $\lambda_0=\lambda_{\rm max}$ after we introduce more general technical results.
	
	In view of the assumption $\lambda_0\in(0,\lambda_{\rm max})$, it is easy to see that \begin{align}\label{eqn:Lasso_projection_infeasible}
		\frac{\bf y}{\lambda_0}\notin F\Rightarrow\frac{\bf y}{\lambda_0}\neq P_F\left(\frac{\bf y}{\lambda_0}\right)=\theta^*(\lambda_0)\Rightarrow \frac{\bf y}{\lambda_0}-\theta^*(\lambda_0)\neq0.
	\end{align}
	For each $\lambda_0\in(0,\lambda_{\rm max})$, let us define
	\begin{align}\label{eqn:ray}
		\theta_{\lambda_0}(t)=\theta^*(\lambda_0)+t{\bf v}_1(\lambda_0)=\theta^*(\lambda_0)+t\left(\frac{\bf y}{\lambda_0}-\theta^*(\lambda_0)\right),\hspace{2mm}t\geq0.
	\end{align}
	By the result in (\ref{eqn:Lasso_projection_infeasible}), we can see that $\theta_{\lambda_0}(\cdot)$ defined by \eqref{eqn:ray} is a ray which starts at $\theta^*(\lambda_0)$ and points in the same direction as ${\bf y}/\lambda_0-\theta^*(\lambda_0)$. In view of \eqref{eqn:Lasso_dual_projection}, a direct application of Lemma \ref{lemma:projection_ray} leads to that:
	\begin{align}\label{eqn:Lasso_projection_ray}
		P_{F}(\theta_{\lambda_0}(t))=\theta^*(\lambda_0),\hspace{2mm}\forall\hspace{1mm}t\geq0.
	\end{align}
	By applying Theorem \ref{thm:projection_nonexpan} again, we have
	\begin{align}\label{ineqn:nonexp_g}
		\|\theta^*(\lambda)-\theta^*(\lambda_0)\|_2&=\left\|P_F\left(\frac{\bf y}{\lambda}\right)-P_F(\theta_{\lambda_0}(t))\right\|_2\\ \nonumber
		&\leq\left\|\frac{\bf y}{\lambda}-\theta_{\lambda_0}(t)\right\|_2=\left\|t\left(\frac{\bf y}{\lambda_0}-\theta^*(\lambda_0)\right)-\left(\frac{\bf y}{\lambda}-\theta^*(\lambda_0)\right)\right\|_2\\ \nonumber
		&=\|t{\bf v}_1(\lambda_0)-{\bf v}_2(\lambda,\lambda_0)\|_2,\hspace{2mm}\forall\hspace{1mm}t\geq0.
	\end{align}
	Because the inequality in (\ref{ineqn:nonexp_g}) holds for all $t\geq0$, it is easy to see that
	\begin{align}\label{ineqn:Lasso_improve1_ball}
		\|\theta^*(\lambda)-\theta^*(\lambda_0)\|_2&\leq\min_{t\geq0}\,\,\|t{\bf v}_1(\lambda_0)-{\bf v}_2(\lambda,\lambda_0)\|_2\\ \nonumber
		&=
		\begin{cases}
			\|{\bf v}_2(\lambda,\lambda_0)\|_2,\hspace{5mm}\mbox{if }\langle{\bf v}_1(\lambda_0),{\bf v}_2(\lambda,\lambda_0)\rangle<0,\\
			\left\|{\bf v}_2^{\perp}(\lambda,\lambda_0)\right\|_2,\hspace{3mm}\mbox{otherwise}.
		\end{cases}
	\end{align}
	The inequality in (\ref{ineqn:Lasso_improve1_ball}) implies that, to prove the first half of the statement, i.e., $$\theta^*(\lambda)\in B(\theta^*(\lambda_0),\|{\bf v}_2^{\perp}(\lambda,\lambda_0)\|_2),$$ we only need to show that $\langle{\bf v}_1(\lambda_0),{\bf v}_2(\lambda,\lambda_0)\rangle\geq0$.
	
	Indeed, it is easy to see that $0\in F$. Therefore, in view of \eqref{eqn:Lasso_projection_ray}, the distance between $\theta_{\lambda_0}(t)$ and $\theta^*(\lambda_0)$ must be shorter than the one between $\theta_{\lambda_0}(t)$ and $0$ for all $t\geq0$, i.e.,
	\begin{align}\label{ineqn:Lasso_projection_shrink}
		&\|\theta_{\lambda_0}(t)-\theta^*(\lambda_0)\|_2^2\leq \|\theta_{\lambda_0}(t)-0\|_2^2\\ \nonumber
		\Rightarrow&\hspace{2mm} 0\leq \|\theta^*(\lambda_0)\|_2^2+2t\left(\left\langle\theta^*(\lambda_0),\frac{\bf y}{\lambda_0}\right\rangle-\|\theta^*(\lambda_0)\|_2^2\right),\hspace{2mm}\forall\hspace{1mm}t\geq0.
	\end{align}
	Since the inequality in (\ref{ineqn:Lasso_projection_shrink}) holds for all $t\geq0$, we can conclude that:
	\begin{align}\label{ineqn:Lasso_projection_shrink1}
		\left\langle\theta^*(\lambda_0),\frac{\bf y}{\lambda_0}\right\rangle-\|\theta^*(\lambda_0)\|_2^2\geq0\Rightarrow \frac{\|{\bf y}\|_2}{\lambda_0}\geq \|\theta^*(\lambda_0)\|_2.
	\end{align}
	Therefore, we can see that:
	\begin{align}\label{ineqn:v1v2_ip2}
		\langle{\bf v}_1(\lambda_0),{\bf v}_2(\lambda,\lambda_0)\rangle&=\left\langle\frac{\bf y}{\lambda_0}-\theta^*(\lambda_0),\frac{\bf y}{\lambda}-\frac{\bf y}{\lambda_0}+\frac{\bf y}{\lambda_0}-\theta^*(\lambda_0)\right\rangle\\ \nonumber
		&\geq\left(\frac{1}{\lambda}-\frac{1}{\lambda_0}\right)\left\langle\frac{\bf y}{\lambda_0}-\theta^*(\lambda_0),{\bf y}\right\rangle\\ \nonumber
		&=\left(\frac{1}{\lambda}-\frac{1}{\lambda_0}\right)\left(\frac{\|{\bf y}\|_2^2}{\lambda_0}-\langle\theta^*(\lambda_0),{\bf y}\rangle\right)\\ \nonumber
		&\geq\left(\frac{1}{\lambda}-\frac{1}{\lambda_0}\right)\left(\frac{\|{\bf y}\|_2^2}{\lambda_0}-\|\theta^*(\lambda_0)\|_2\|{\bf y}\|_2\right)\geq0.
	\end{align}
	The last inequality in (\ref{ineqn:v1v2_ip2}) is due to the result in (\ref{ineqn:Lasso_projection_shrink1}).
	
	Clearly, in view of (\ref{ineqn:Lasso_improve1_ball}) and (\ref{ineqn:v1v2_ip2}), we can see that the first half of the statement holds, i.e., $\theta^*(\lambda)\in B(\theta^*(\lambda_0),\|{\bf v}_2^{\perp}(\lambda,\lambda_0)\|_2)$. The second half of the statement, i.e., $B(\theta^*(\lambda_0),\|{\bf v}_2^{\perp}(\lambda,\lambda_0)\|_2)\subseteq B(\theta^*(\lambda_0),|1/\lambda-1/\lambda_0|\|{\bf y}\|_2)$, can be easily obtained by noting that the inequality in (\ref{ineqn:nonexp_g}) reduces to the one in (\ref{eqn:Lasso_DPP_ball}) when $t=1$. This completes the proof of the statement with $\lambda_0\in(0,\lambda_{\rm max})$.
\end{proof}

Before we present the proof of Theorem \ref{thm:Lasso_improve1_estimation} for the case with $\lambda_0=\lambda_{\rm max}$, let us briefly review some technical results from convex analysis first.
\begin{definition}
	\cite{Ruszczynski2006} Let $C$ be a nonempty closed convex subset of a Hilbert space $\mathcal{H}$ and ${\bf w}\in C$. The set
	\begin{align}
		N_C({\bf w}):=\{{\bf v}:\langle{\bf v},{\bf u}-{\bf w}\rangle\leq0,\forall {\bf u}\in C\}
	\end{align}
	is called the normal cone to $C$ at ${\bf w}$.
\end{definition}

In terms of the normal cones, the following theorem provides an elegant and useful characterization of the projections onto nonempty closed convex subsets of a Hilbert space.
\begin{theorem}\label{thm:projection_normal_cone}
	\citep{Bauschke2011}
	Let $C$ be a nonempty closed convex subset of a Hilbert space $\mathcal{H}$. Then, for every ${\bf w}\in \mathcal{H}$ and ${\bf w}_0\in C$, ${\bf w}_0$ is the projection of ${\bf w}$ onto $C$ if and only if ${\bf w}-{\bf w}_0\in N_C({\bf w}_0)$, i.e.,
	\begin{align}
		{\bf w}_0=P_C({\bf w})\Leftrightarrow \langle{\bf w}-{\bf w}_0,{\bf u}-{\bf w}_0\rangle\leq0,\forall {\bf u}\in C.
	\end{align}
\end{theorem}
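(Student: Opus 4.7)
The plan is to derive both implications as the first-order optimality conditions for the convex minimization problem $\min_{\mathbf{u} \in C} \tfrac{1}{2}\|\mathbf{u} - \mathbf{w}\|_2^2$, whose minimizer is $P_C(\mathbf{w})$ by definition in \eqref{eqn:projection_def}. The claimed variational inequality $\langle \mathbf{w} - \mathbf{w}_0, \mathbf{u} - \mathbf{w}_0\rangle \leq 0$ is exactly the statement that the gradient $\mathbf{w}_0 - \mathbf{w}$ of this quadratic at $\mathbf{w}_0$ makes a nonpositive inner product with every feasible direction $\mathbf{u} - \mathbf{w}_0$ into $C$, so the whole argument reduces to making this convexity/optimality observation rigorous.

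For the forward direction, I would fix an arbitrary $\mathbf{u} \in C$ and use convexity of $C$ to note that the segment $\mathbf{u}_\alpha := \mathbf{w}_0 + \alpha(\mathbf{u} - \mathbf{w}_0)$ lies in $C$ for every $\alpha \in [0,1]$. Since $\mathbf{w}_0 = P_C(\mathbf{w})$ minimizes the squared distance over $C$, the inequality $\|\mathbf{u}_\alpha - \mathbf{w}\|_2^2 \geq \|\mathbf{w}_0 - \mathbf{w}\|_2^2$ holds for all such $\alpha$. Expanding the left side yields $2\alpha \langle \mathbf{w}_0 - \mathbf{w}, \mathbf{u} - \mathbf{w}_0\rangle + \alpha^2 \|\mathbf{u} - \mathbf{w}_0\|_2^2 \geq 0$. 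Dividing by $\alpha > 0$ and letting $\alpha \downarrow 0$ isolates the linear term and gives $\langle \mathbf{w} - \mathbf{w}_0, \mathbf{u} - \mathbf{w}_0 \rangle \leq 0$, which is exactly the normal-cone condition.

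For the converse direction, I assume $\langle \mathbf{w} - \mathbf{w}_0, \mathbf{u} - \mathbf{w}_0\rangle \leq 0$ for every $\mathbf{u} \in C$, pick an arbitrary such $\mathbf{u}$, and decompose $\mathbf{u} - \mathbf{w} = (\mathbf{u} - \mathbf{w}_0) - (\mathbf{w} - \mathbf{w}_0)$. Squaring and expanding produces $\|\mathbf{u} - \mathbf{w}\|_2^2 = \|\mathbf{u} - \mathbf{w}_0\|_2^2 - 2\langle \mathbf{w} - \mathbf{w}_0, \mathbf{u} - \mathbf{w}_0\rangle + \|\mathbf{w} - \mathbf{w}_0\|_2^2$. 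The middle term is nonnegative by hypothesis and the first term is manifestly nonnegative, so $\|\mathbf{u} - \mathbf{w}\|_2^2 \geq \|\mathbf{w} - \mathbf{w}_0\|_2^2$. Since $\mathbf{w}_0 \in C$ by assumption, this identifies $\mathbf{w}_0$ as the minimizer of the distance to $\mathbf{w}$ over $C$, i.e.\ $\mathbf{w}_0 = P_C(\mathbf{w})$.

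There is no genuinely hard step here; the theorem is a classical variational characterization whose entire content sits in the first-order optimality argument. The only subtlety worth flagging is that the limit $\alpha \downarrow 0$ merely probes a one-sided directional derivative at $\mathbf{w}_0$, so convexity of $C$ (which keeps $\mathbf{u}_\alpha$ feasible along the whole interval $[0,1]$) is exactly the hypothesis needed, and no interior-point or differentiability assumption on $\mathbf{w}_0$ is required.
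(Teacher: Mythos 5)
Your proof is correct and complete. Note, however, that the paper does not prove this theorem at all: it is quoted verbatim from \citet{Bauschke2011} as a classical characterization of projections onto closed convex sets, so there is no in-paper argument to compare against. Your two directions are the standard ones — the forward implication via perturbing $\mathbf{w}_0$ along the segment toward an arbitrary $\mathbf{u}\in C$ and sending $\alpha\downarrow 0$ to extract the first-order condition, and the converse via the expansion $\|\mathbf{u}-\mathbf{w}\|_2^2=\|\mathbf{u}-\mathbf{w}_0\|_2^2-2\langle\mathbf{w}-\mathbf{w}_0,\mathbf{u}-\mathbf{w}_0\rangle+\|\mathbf{w}-\mathbf{w}_0\|_2^2$ — and both are carried out correctly, with the right observation that convexity of $C$ is exactly what keeps the perturbed points feasible. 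The only cosmetic point worth adding is that in the converse direction the inequality is strict whenever $\mathbf{u}\neq\mathbf{w}_0$ (since then $\|\mathbf{u}-\mathbf{w}_0\|_2^2>0$), which confirms $\mathbf{w}_0$ is the \emph{unique} minimizer and hence genuinely equals $P_C(\mathbf{w})$ as defined by the argmin in \eqref{eqn:projection_def}.
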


In view of the proof of Theorem \ref{thm:Lasso_improve1_estimation}, we can see that \eqref{eqn:Lasso_projection_ray} is a key step. When $\lambda_0=\lambda_{\rm max}$, similar to \eqref{eqn:ray}, let us define
\begin{align}\label{eqn:Lasso_ray_lambdamx}
	\theta_{\lambda_{\rm max}}(t)=\theta^*(\lambda_{\rm max})+t{\bf v}_1(\lambda_{\rm max}),\hspace{2mm}\forall\hspace{1mm}t\geq0.
\end{align}
By Theorem \ref{thm:projection_normal_cone}, the following lemma shows that \eqref{eqn:Lasso_projection_ray} also holds for $\lambda_0=\lambda_{\rm max}$.
\begin{lemma}\label{lemma:projection_ray_lambdamx}
	For the Lasso problem, let ${\bf v}_1(\cdot)$ and $\theta_{\lambda_{\rm max}}(\cdot)$ be given by \eqref{eqn:Lasso_v1} and \eqref{eqn:Lasso_ray_lambdamx}, then the following result holds:
	\begin{align}\label{eqn:Lasso_projection_ray_lambdamx}
		P_F(\theta_{\lambda_{\rm max}}(t))=\theta^*(\lambda_{\rm max}), \hspace{1mm} \forall\,\, t\geq0.
	\end{align}
\end{lemma}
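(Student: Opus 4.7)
The plan is to apply the normal cone characterization of projections from Theorem \ref{thm:projection_normal_cone} directly. By that theorem, to verify $P_F(\theta_{\lambda_{\rm max}}(t)) = \theta^*(\lambda_{\rm max})$ it suffices to check that $\theta^*(\lambda_{\rm max}) \in F$ (which holds because $\lambda_{\rm max}$ is defined so that $|{\bf x}_i^T {\bf y}/\lambda_{\rm max}| \leq 1$ for all $i$) and that
\begin{align*}
\langle \theta_{\lambda_{\rm max}}(t) - \theta^*(\lambda_{\rm max}),\, \theta - \theta^*(\lambda_{\rm max}) \rangle \leq 0, \quad \forall\, \theta \in F.
\end{align*}
Since $\theta_{\lambda_{\rm max}}(t) - \theta^*(\lambda_{\rm max}) = t\,{\bf v}_1(\lambda_{\rm max})$, the case $t = 0$ is trivial, and for $t > 0$ the inequality reduces to showing ${\bf v}_1(\lambda_{\rm max}) \in N_F(\theta^*(\lambda_{\rm max}))$.

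The next step is to plug in the explicit formulas ${\bf v}_1(\lambda_{\rm max}) = \sign({\bf x}_*^T {\bf y})\, {\bf x}_*$ and $\theta^*(\lambda_{\rm max}) = {\bf y}/\lambda_{\rm max}$, and use the key identity
\begin{align*}
{\bf x}_*^T \frac{\bf y}{\lambda_{\rm max}} = \frac{{\bf x}_*^T {\bf y}}{|{\bf x}_*^T {\bf y}|} = \sign({\bf x}_*^T {\bf y}),
\end{align*}
which follows from $\lambda_{\rm max} = |{\bf x}_*^T {\bf y}|$. The inequality to verify then becomes
\begin{align*}
\sign({\bf x}_*^T {\bf y})\,{\bf x}_*^T \theta \;-\; 1 \;\leq\; 0, \quad \forall\, \theta \in F.
\end{align*}

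The final step is immediate from the definition of $F$: since $\theta \in F$ enforces $|{\bf x}_*^T \theta| \leq 1$, we have $\sign({\bf x}_*^T {\bf y})\,{\bf x}_*^T \theta \leq |{\bf x}_*^T \theta| \leq 1$, completing the proof. Geometrically, the content of the argument is that ${\bf y}/\lambda_{\rm max}$ lies on the active face of the polytope $F$ corresponding to the constraint indexed by ${\bf x}_*$, and $\sign({\bf x}_*^T {\bf y})\,{\bf x}_*$ is precisely the outward normal to that face. There is no real obstacle here; the only subtlety is recognizing that the $\lambda_0 = \lambda_{\rm max}$ case sits exactly on the boundary of $F$, which is why Lemma \ref{lemma:projection_ray} (an interior-point style argument) does not apply and the normal cone characterization is needed in its place.
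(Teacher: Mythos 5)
Your proof is correct and follows essentially the same route as the paper: both invoke the normal cone characterization (Theorem \ref{thm:projection_normal_cone}), reduce to showing $\langle {\bf v}_1(\lambda_{\rm max}), \theta - \theta^*(\lambda_{\rm max})\rangle \leq 0$ for all $\theta \in F$, and verify this by computing $\langle {\bf v}_1(\lambda_{\rm max}), \theta^*(\lambda_{\rm max})\rangle = |{\bf x}_*^T{\bf y}|/\lambda_{\rm max} = 1$ while bounding $\langle {\bf v}_1(\lambda_{\rm max}), \theta\rangle \leq |{\bf x}_*^T\theta| \leq 1$ from the feasibility constraint. Your explicit factoring out of $t > 0$ and the closing geometric remark are harmless additions that the paper leaves implicit.
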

\begin{proof}
	To prove the statement, Theorem \ref{thm:projection_normal_cone} implies that we only need to show:
	\begin{align}\label{ineqn:Lasso_ip_v1mx_nc}
		\langle{\bf v}_1(\lambda_{\rm max}),\theta-\theta^*(\lambda_{\rm max})\rangle\leq 0,\hspace{2mm}\forall\hspace{1mm}\theta\in F.
	\end{align}
	Recall that ${\bf v}_1(\lambda_{\rm max})={\rm sign}({\bf x}_*^T{\bf y}){\bf x}_*$, ${\bf x}_*={\rm argmax}_{{\bf x}_i}|{\bf x}_i^T{\bf y}|$ [\eqref{eqn:Lasso_v1}],
	and $\theta^*(\lambda_{\rm max})={\bf y}/\lambda_{\rm max}$ [\eqref{eqn:Lasso_theta_closed_form}].
	It is easy to see that
	\begin{align}\label{eqn:Lasso_ip_v1mx_thetamx}
		\langle{\bf v}_1(\lambda_{\rm max}),\theta^*(\lambda_{\rm max})\rangle=\left\langle{\rm sign}({\bf x}_*^T{\bf y}){\bf x}_*,\frac{\bf y}{\lambda_{\rm max}}\right\rangle=\frac{|{\bf x}_*^T{\bf y}|}{\lambda_{\rm max}}=1.
	\end{align}
	Moreover, assume $\theta$ is an arbitrary point of $F$. Then, we have $|\langle{\bf x}_*,\theta\rangle|\leq1$, and thus
	\begin{align}\label{ineqn:Lasso_ip_v1mx_F}
		\langle{\bf v}_1(\lambda_{\rm max}),\theta\rangle=\langle{\rm sign}({\bf x}_*^T{\bf y}){\bf x}_*,\theta\rangle\leq|\langle{\bf x}_*,\theta\rangle|\leq1.
	\end{align}
	Therefore, the inequality in (\ref{ineqn:Lasso_ip_v1mx_nc}) easily follows by combing the results in (\ref{eqn:Lasso_ip_v1mx_thetamx}) and (\ref{ineqn:Lasso_ip_v1mx_F}), which completes the proof.
\end{proof}

We are now ready to give the proof of Theorem \ref{thm:Lasso_improve1_estimation} for the case with $\lambda_0=\lambda_{\rm max}$.

\begin{proof}
	In view of Theorem \ref{thm:projection_nonexpan} and Lemma \ref{lemma:projection_ray_lambdamx}, we have
	\begin{align}\label{ineqn:Lasso_improve1_estimate_lambdamx}
		\|\theta^*(\lambda)-\theta^*(\lambda_{\rm max})\|_2&=\left\|P_F\left(\frac{\bf y}{\lambda}\right)-P_F(\theta_{\lambda_{\rm max}}(t))\right\|_2\\ \nonumber
		&\leq\left\|\frac{\bf y}{\lambda}-\theta_{\lambda_{\rm max}}(t)\right\|_2=\left\|t{\bf v}_1(\lambda_{\rm max})-\left(\frac{\bf y}{\lambda}-\theta^*(\lambda_{\rm max})\right)\right\|_2\\ \nonumber
		&=\|t{\bf v}_1(\lambda_{\rm max})-{\bf v}_2(\lambda,\lambda_{\rm max})\|_2,\hspace{2mm}\forall\hspace{1mm}t\geq0.
	\end{align}
	Because the inequality in (\ref{ineqn:Lasso_improve1_estimate_lambdamx}) holds for all $t\geq0$, we can see that
	\begin{align}\label{ineqn:Lasso_improve1_ball_lambdamx}
		\|\theta^*(\lambda)-\theta^*(\lambda_{\rm max})\|_2&\leq\min_{t\geq0}\,\,\|t{\bf v}_1(\lambda_{\rm max})-{\bf v}_2(\lambda,\lambda_{\rm max})\|_2\\ \nonumber
		&=
		\begin{cases}
			\|{\bf v}_2(\lambda,\lambda_{\rm max})\|_2,\hspace{5mm}\mbox{if }\langle{\bf v}_1(\lambda_{\rm max}),{\bf v}_2(\lambda,\lambda_{\rm max})\rangle<0,\\
			\left\|{\bf v}_2^{\perp}(\lambda,\lambda_{\rm max})\right\|_2,\hspace{3mm}\mbox{otherwise}.
		\end{cases}
	\end{align}
	Clearly, we only need to show that $\langle{\bf v}_1(\lambda_{\rm max}),{\bf v}_2(\lambda,\lambda_{\rm max})\rangle\geq0$.
	
	Indeed, Lemma \ref{lemma:projection_ray_lambdamx} implies that ${\bf v}_1(\lambda_{\rm max})\in N_F(\theta^*(\lambda_{\rm max}))$ [please refer to the inequality in (\ref{ineqn:Lasso_ip_v1mx_nc})]. By noting that $0\in F$, we have
	\begin{align}
		\left\langle{\bf v}_1(\lambda_{\rm max}),0-\frac{\bf y}{\lambda_{\rm max}}\right\rangle\leq 0\Rightarrow\langle{\bf v}_1(\lambda_{\rm max}),{\bf y}\rangle\geq0.
	\end{align}
	Moreover, because ${\bf y}/\lambda_{\rm max}=\theta^*(\lambda_{\rm max})$, it is easy to see that
	\begin{align}\label{ineqn:Lasso_improve1_v1mxv2}
		\langle{\bf v}_1(\lambda_{\rm max}),{\bf v}_2(\lambda,\lambda_{\rm max})\rangle&=\left\langle{\bf v}_1(\lambda_{\rm max}),\frac{\bf y}{\lambda}-\frac{\bf y}{\lambda_{\rm max}}\right\rangle\\ \nonumber
		&=\left(\frac{1}{\lambda}-\frac{1}{\lambda_{\rm max}}\right)\langle{\bf v}_1(\lambda_{\rm max}),{\bf y}\rangle\geq0.
	\end{align}
	
	Therefore, in view of (\ref{ineqn:Lasso_improve1_ball_lambdamx}) and (\ref{ineqn:Lasso_improve1_v1mxv2}), we can see that the first half of the statement holds, i.e., $\theta^*(\lambda)\in B(\theta^*(\lambda_{\rm max}),\|{\bf v}_2^{\perp}(\lambda,\lambda_{\rm max})\|_2)$. The second half of the statement, i.e., $$B(\theta^*(\lambda_{\rm max}),\|{\bf v}_2^{\perp}(\lambda,\lambda_{\rm max})\|_2)\subseteq B(\theta^*(\lambda_{\rm max}),|1/\lambda-1/\lambda_{\rm max}|\|{\bf y}\|_2),$$ can be easily obtained by noting that the inequality in (\ref{ineqn:Lasso_improve1_ball_lambdamx}) reduces to the one in (\ref{eqn:Lasso_DPP_ball}) when $t=0$. This completes the proof of the statement with $\lambda_0=\lambda_{\rm max}$. Thus, the proof of Theorem \ref{thm:Lasso_improve1_estimation} is completed.
\end{proof}

Theorem \ref{thm:Lasso_improve1_estimation} in fact provides a more accurate estimation of the dual optimal solution than the one in DPP, i.e., $\theta^*(\lambda)$ lies inside a ball centered at $\theta^*(\lambda_0)$ with a radius $\|{\bf v}_2^{\perp}(\lambda,\lambda_0)\|_2$. Based on this improved estimation and (\ref{rule1'}), we can develop the following screening rule to discard the inactive features for Lasso.
\begin{theorem}\label{thm:Lasso_improve1_fundamental_rule}
	For the Lasso problem, assume the dual optimal solution $\theta^{\ast}(\cdot)$ at $\lambda_0\in(0,\lambda_{\rm max}]$ is known. Then, for each $\lambda\in(0,\lambda_0)$, we have $[\beta^{\ast}(\lambda)]_i=0$ if
	\begin{align*}
		|{\bf x}_i^T\theta^{\ast}(\lambda_0)|<1-\|{\bf v}_2^{\perp}(\lambda,\lambda_0)\|_2\|{\bf x}_i\|_2.
	\end{align*}
\end{theorem}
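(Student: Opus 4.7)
The plan is to follow the same three-step template that was used to derive the basic DPP rule in Theorem \ref{thm:Lasso_DPP}, but now with the sharper ball estimate supplied by Theorem \ref{thm:Lasso_improve1_estimation}. Since the heavy lifting (showing that $\theta^*(\lambda)$ lies in the smaller ball centered at $\theta^*(\lambda_0)$ with radius $\|{\bf v}_2^{\perp}(\lambda,\lambda_0)\|_2$) has already been done, what remains is essentially bookkeeping: solve a linear maximization over a ball, then plug the result into the relaxed KKT criterion (\ref{rule1'}).

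First, I invoke Theorem \ref{thm:Lasso_improve1_estimation} to obtain $\theta^*(\lambda)\in B({\bf c},\rho)$, where ${\bf c}=\theta^*(\lambda_0)$ and $\rho=\|{\bf v}_2^{\perp}(\lambda,\lambda_0)\|_2$. This covers both sub-cases $\lambda_0\in(0,\lambda_{\rm max})$ and $\lambda_0=\lambda_{\rm max}$ in one shot, so no separate argument for the endpoint is required.

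Next, I carry out Step 2 by evaluating
\[
\sup_{\theta\in B({\bf c},\rho)}|{\bf x}_i^T\theta|=|{\bf x}_i^T\theta^*(\lambda_0)|+\rho\|{\bf x}_i\|_2,
\]
which is exactly the computation performed in \eqref{eqn:Lasso_DPP_upper_bound}: write any $\theta\in B({\bf c},\rho)$ as $\theta^*(\lambda_0)+{\bf v}$ with $\|{\bf v}\|_2\le\rho$, and maximize $|{\bf x}_i^T({\bf v})|$ by Cauchy-Schwarz, with equality attained at ${\bf v}=\pm\rho\,{\bf x}_i/\|{\bf x}_i\|_2$ (choosing the sign consistent with $\mathrm{sign}({\bf x}_i^T\theta^*(\lambda_0))$).

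Finally, Step 3 is to substitute this supremum into (\ref{rule1'}): if $|{\bf x}_i^T\theta^*(\lambda_0)|+\|{\bf v}_2^{\perp}(\lambda,\lambda_0)\|_2\|{\bf x}_i\|_2<1$, then $\sup_{\theta\in B({\bf c},\rho)}|{\bf x}_i^T\theta|<1$, which by $\theta^*(\lambda)\in B({\bf c},\rho)$ forces $|{\bf x}_i^T\theta^*(\lambda)|<1$, and hence by (\ref{rule1}) we conclude $[\beta^*(\lambda)]_i=0$. Rearranging yields the claimed inequality. I do not anticipate any real obstacle here, since every nontrivial piece (especially the non-obvious inner-product sign analysis needed to justify the ``$\|{\bf v}_2^{\perp}\|_2$'' rather than ``$\|{\bf v}_2\|_2$'' radius) is already folded into Theorem \ref{thm:Lasso_improve1_estimation}; the present theorem is a routine corollary.
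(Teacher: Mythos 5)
Your proposal is correct and is exactly the argument the paper intends: the paper omits this proof precisely because it is the same three-step computation as in Theorem \ref{thm:Lasso_DPP}, with the ball radius $\left|1/\lambda-1/\lambda_0\right|\|{\bf y}\|_2$ replaced by the sharper radius $\|{\bf v}_2^{\perp}(\lambda,\lambda_0)\|_2$ from Theorem \ref{thm:Lasso_improve1_estimation}. You also correctly observe that both endpoint cases are already handled inside Theorem \ref{thm:Lasso_improve1_estimation}, so nothing further is needed.
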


We omit the proof of Theorem \ref{thm:Lasso_improve1_fundamental_rule} since it is very similar to the one of Theorem \ref{thm:Lasso_DPP}. By Theorem \ref{thm:Lasso_improve1_fundamental_rule}, we can easily develop the following sequential screening rule.

\vspace{5mm}
{\bf Improvement 1}: {\it For the Lasso problem (\ref{prob:Lasso_primal}), suppose we are given a sequence of parameter values $\lambda_{\rm max}=\lambda_0>\lambda_1>\ldots>\lambda_{\mathcal{K}}$. Then for any integer $0\leq k<\mathcal{K}$, we have $[\beta^{\ast}(\lambda_{k+1})]_i=0$ if $\beta^{\ast}(\lambda_k)$ is known and the following holds:
	$$\left|{\bf x}_i^T\frac{{\bf y}-{\bf X}\beta^{\ast}(\lambda_k)}{\lambda_{k}}\right|<1-\|{\bf v}_2^{\perp}(\lambda_{k+1},\lambda_k)\|_2\|{\bf x}_i\|_2.$$
}

The screening rule in Improvement 1 is developed based on (\ref{rule1'}) and the estimation of the dual optimal solution in Theorem \ref{thm:Lasso_improve1_estimation}, which is more accurate than the one in DPP. Therefore, in view of (\ref{rule1'}), the screening rule in Improvement 1 are more effective in discarding the inactive features than the DPP rule.


%

\subsubsection{Improving the DPP rules via Firmly Nonexpansiveness}\label{subsubsection:Improvement2}

In Section \ref{subsubsection:Improvement1}, we improve the estimation of the dual optimal solution in DPP by making use of the projections of properly chosen rays. (\ref{rule1'}) implies that the resulting screening rule stated in Improvement 1 is more effective in discarding the inactive features than DPP. In this Section, we present another approach to improve the estimation of the dual optimal solution in DPP by making use of the so called {\it firmly nonexpansiveness} of the projections onto nonempty closed convex subset of a Hilbert space.

\begin{theorem}\label{thm:firmly_nonexpansive}
	\citep{Bauschke2011} Let $C$ be a nonempty closed convex subset of a Hilbert space $\mathcal{H}$. Then the projection operator defined in \eqref{eqn:projection_def} is continuous and firmly nonexpansive. In other words, for any ${\bf w}_1, {\bf w}_2\in\mathcal{H}$, we have
	\begin{align}\label{def:firmly_nonexpan}
		\|P_C({\bf w}_1)-P_C({\bf w}_2)\|_2^2+\|({\rm Id}-P_C)({\bf w}_1)-({\rm Id}-P_C)({\bf w}_2)\|_2^2\leq\|{\bf w}_1-{\bf w}_2\|_2^2,
	\end{align}
	where ${\rm Id}$ is the identity operator.
\end{theorem}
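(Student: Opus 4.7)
The plan is to derive \eqref{def:firmly_nonexpan} from the variational characterization of the projection given in Theorem \ref{thm:projection_normal_cone}. For brevity, write $p_i = P_C({\bf w}_i)$ for $i=1,2$. Applying the characterization at ${\bf w}_1$ with test point ${\bf u} = p_2 \in C$, and at ${\bf w}_2$ with test point ${\bf u} = p_1 \in C$, I would obtain the two inequalities
\[
\langle {\bf w}_1 - p_1,\, p_2 - p_1\rangle \leq 0 \quad \text{and} \quad \langle {\bf w}_2 - p_2,\, p_1 - p_2\rangle \leq 0.
\]
Adding these and rearranging gives the sharpened estimate
\[
\|p_1 - p_2\|_2^2 \leq \langle {\bf w}_1 - {\bf w}_2,\, p_1 - p_2\rangle,
\]
which already implies ordinary nonexpansiveness (and hence continuity) via Cauchy--Schwarz, but in fact says strictly more.

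To upgrade this to firm nonexpansiveness, I would set ${\bf a} = p_1 - p_2$ and ${\bf b} = ({\rm Id} - P_C)({\bf w}_1) - ({\rm Id} - P_C)({\bf w}_2) = ({\bf w}_1 - {\bf w}_2) - (p_1 - p_2)$, so that ${\bf a} + {\bf b} = {\bf w}_1 - {\bf w}_2$. Expanding $\|{\bf a}+{\bf b}\|_2^2 = \|{\bf a}\|_2^2 + \|{\bf b}\|_2^2 + 2\langle {\bf a},{\bf b}\rangle$ shows that \eqref{def:firmly_nonexpan} is equivalent to the single scalar inequality $\langle {\bf a}, {\bf b}\rangle \geq 0$. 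A direct expansion gives $\langle {\bf a},{\bf b}\rangle = \langle p_1 - p_2,\, {\bf w}_1 - {\bf w}_2\rangle - \|p_1 - p_2\|_2^2$, which is nonnegative by the sharpened estimate above, completing the proof.

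There is essentially no genuine obstacle here; the whole argument is a short bookkeeping exercise once Theorem \ref{thm:projection_normal_cone} is in hand. The only step requiring a bit of care is recognizing the decomposition ${\bf w}_1 - {\bf w}_2 = (p_1 - p_2) + \bigl(({\bf w}_1 - p_1) - ({\bf w}_2 - p_2)\bigr)$, which is precisely what converts the strengthened Cauchy--Schwarz-type inequality into the Pythagoras-type estimate demanded by \eqref{def:firmly_nonexpan}. Continuity then follows \emph{a fortiori}, since firm nonexpansiveness implies the ordinary $1$-Lipschitz bound on $P_C$.
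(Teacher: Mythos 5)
Your argument is correct. The paper itself gives no proof of this theorem --- it is quoted from \citet{Bauschke2011} as a known fact --- so there is no internal proof to compare against; what you have written is the standard textbook derivation, and it is sound. Each step checks out: applying the variational characterization of Theorem \ref{thm:projection_normal_cone} at ${\bf w}_1$ with test point $p_2$ and at ${\bf w}_2$ with test point $p_1$, then summing, does yield $\|p_1-p_2\|_2^2 \leq \langle {\bf w}_1-{\bf w}_2,\, p_1-p_2\rangle$, and the decomposition ${\bf w}_1-{\bf w}_2 = {\bf a}+{\bf b}$ with $\langle {\bf a},{\bf b}\rangle = \langle p_1-p_2,{\bf w}_1-{\bf w}_2\rangle - \|p_1-p_2\|_2^2 \geq 0$ converts this into exactly the Pythagorean form \eqref{def:firmly_nonexpan}. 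Two minor observations: first, your proof presupposes Theorem \ref{thm:projection_normal_cone}, which the paper also only cites rather than proves, but this creates no circularity since that characterization follows directly from the first-order optimality condition for the minimization in \eqref{eqn:projection_def} and does not rely on firm nonexpansiveness; second, it is worth noting explicitly (as the paper does elsewhere) that well-definedness of $P_C$ --- existence and uniqueness of the minimizer for a nonempty closed convex set in a Hilbert space --- is being taken as given before the inequality can even be stated. Your closing remark that the middle identity $\|\theta^*(\lambda)-\theta^*(\lambda_0)\|_2^2 \leq \langle \cdot,\cdot\rangle$ form ``says strictly more'' than nonexpansiveness is precisely the point the paper exploits in the proof of Theorem \ref{thm:Lasso_improve2_estimate}, where the same inner-product inequality is completed to a square to obtain the half-radius ball.
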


In view of the inequalities in (\ref{def:firmly_nonexpan}) and (\ref{def:nonexpan}), it is easy to see that firmly nonexpansiveness implies nonexpansiveness. But the converse is not true. Therefore, firmly nonexpansiveness of the projection operators is a stronger property than the nonexpansiveness. A direct application of Theorem \ref{thm:firmly_nonexpansive} leads to the following result.

\begin{theorem}\label{thm:Lasso_improve2_estimate}
	For the Lasso problem, let $\lambda,\lambda_0>0$ be two parameter values. Then
	\begin{align}\label{eqn:Lasso_improve2_ball}
		\theta^*(\lambda)\in B\left(\theta^*(\lambda_0)+\frac{1}{2}\left(\frac{1}{\lambda}-\frac{1}{\lambda_0}\right){\bf y}, \frac{1}{2}\left|\frac{1}{\lambda}-\frac{1}{\lambda_0}\right|\|{\bf y}\|_2\right)\subset B\left(\theta^*(\lambda_0),\left|\frac{1}{\lambda}-\frac{1}{\lambda_0}\right|\|{\bf y}\|_2\right).
	\end{align}
\end{theorem}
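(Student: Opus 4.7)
The plan is to apply Theorem \ref{thm:firmly_nonexpansive} directly to the projection formula $\theta^*(\lambda) = P_F({\bf y}/\lambda)$ from \eqref{eqn:Lasso_dual_projection}, taking ${\bf w}_1 = {\bf y}/\lambda$ and ${\bf w}_2 = {\bf y}/\lambda_0$. With these choices, $P_F({\bf w}_1) = \theta^*(\lambda)$ and $P_F({\bf w}_2) = \theta^*(\lambda_0)$, while $({\rm Id}-P_F)({\bf w}_1) - ({\rm Id}-P_F)({\bf w}_2) = ({\bf w}_1-{\bf w}_2) - (\theta^*(\lambda)-\theta^*(\lambda_0))$.

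Introducing the shorthand ${\bf a} = \theta^*(\lambda) - \theta^*(\lambda_0)$ and ${\bf d} = (1/\lambda - 1/\lambda_0){\bf y} = {\bf w}_1 - {\bf w}_2$, the firmly nonexpansive inequality \eqref{def:firmly_nonexpan} becomes $\|{\bf a}\|_2^2 + \|{\bf d}-{\bf a}\|_2^2 \leq \|{\bf d}\|_2^2$. Expanding the middle term and cancelling $\|{\bf d}\|_2^2$ yields $\|{\bf a}\|_2^2 \leq \langle {\bf a},{\bf d}\rangle$, which I would then rewrite by completing the square as
\begin{align*}
	\left\|{\bf a} - \tfrac{1}{2}{\bf d}\right\|_2^2 \leq \tfrac{1}{4}\|{\bf d}\|_2^2.
\end{align*}
Substituting back the definitions of ${\bf a}$ and ${\bf d}$ gives exactly
\begin{align*}
	\left\|\theta^*(\lambda) - \theta^*(\lambda_0) - \tfrac{1}{2}\left(\tfrac{1}{\lambda} - \tfrac{1}{\lambda_0}\right){\bf y}\right\|_2 \leq \tfrac{1}{2}\left|\tfrac{1}{\lambda} - \tfrac{1}{\lambda_0}\right|\|{\bf y}\|_2,
\end{align*}
which is the first ball membership claim in \eqref{eqn:Lasso_improve2_ball}.

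For the second (containment) assertion, I would use a simple triangle inequality argument. Given any $\theta$ in the smaller ball with center ${\bf c}_{\rm new} = \theta^*(\lambda_0) + \tfrac{1}{2}(1/\lambda - 1/\lambda_0){\bf y}$ and radius $r_{\rm new} = \tfrac{1}{2}|1/\lambda - 1/\lambda_0|\|{\bf y}\|_2$, the triangle inequality gives $\|\theta - \theta^*(\lambda_0)\|_2 \leq \|\theta - {\bf c}_{\rm new}\|_2 + \|{\bf c}_{\rm new} - \theta^*(\lambda_0)\|_2 \leq r_{\rm new} + r_{\rm new} = |1/\lambda - 1/\lambda_0|\|{\bf y}\|_2$, which is precisely the radius of the DPP ball from \eqref{eqn:Lasso_DPP_ball}. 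Strict containment follows since the two balls have different centers (when $\lambda \neq \lambda_0$) yet the new center lies in the interior of the DPP ball.

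There is no serious obstacle: the whole argument is a single application of firmly nonexpansiveness followed by completing the square, and a triangle-inequality comparison of the two balls. The only thing to be careful about is making sure the sign conventions work out regardless of whether $\lambda > \lambda_0$ or $\lambda < \lambda_0$, but this is handled automatically by the absolute values in the radii.
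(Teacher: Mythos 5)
Your proposal is correct and takes essentially the same route as the paper: the paper likewise applies the firmly nonexpansive inequality to ${\bf y}/\lambda$ and ${\bf y}/\lambda_0$, cancels terms to obtain $\|{\bf a}\|_2^2\leq\langle{\bf a},{\bf d}\rangle$, and completes the square to get the half-radius ball. The only immaterial difference is that you spell out the triangle-inequality argument for the ball containment, which the paper simply asserts as evident.
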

\begin{proof}
	In view of \eqref{eqn:Lasso_dual_projection} and the firmly nonexpansiveness in (\ref{def:firmly_nonexpan}), we have
	\begin{align}\label{ineqn:Lasso_improve2_firm_nonexpan}
		&\|\theta^*(\lambda)-\theta^*(\lambda_0)\|_2^2+\left\|\left(\frac{\bf y}{\lambda}-\theta^*(\lambda)\right)-\left(\frac{\bf y}{\lambda_0}-\theta^*(\lambda_0)\right)\right\|_2^2\leq\left\|\frac{\bf y}{\lambda}-\frac{\bf y}{\lambda_0}\right\|_2^2\\ \nonumber
		\Leftrightarrow\hspace{2mm}&\|\theta^*(\lambda)-\theta^*(\lambda_0)\|_2^2\leq\left\langle\theta^*(\lambda)-\theta^*(\lambda_0),\frac{\bf y}{\lambda}-\frac{\bf y}{\lambda_0}\right\rangle\\ \nonumber
		\Leftrightarrow\hspace{2mm}&\left\|\theta^*(\lambda)-\left(\theta^*(\lambda_0)+\frac{1}{2}\left(\frac{1}{\lambda}-\frac{1}{\lambda_0}\right){\bf y}\right)\right\|_2\leq\frac{1}{2}\left|\frac{1}{\lambda}-\frac{1}{\lambda_0}\right|\|{\bf y}\|_2,
	\end{align}
	which completes the proof of the first half of the statement. The second half of the statement is trivial by noting that the first inequality in (\ref{ineqn:Lasso_improve2_firm_nonexpan}) (firmly nonexpansiveness) implies the inequality in (\ref{equation:projection_nonexpansion}) (nonexpansiveness) but not vice versa. Indeed, it is easy to see that the ball in the middle of (\ref{eqn:Lasso_improve2_ball}) is inside the right one and has only a half radius.
\end{proof}

Clearly, Theorem \ref{thm:Lasso_improve2_estimate} provides a more accurate estimation of the dual optimal solution than the one in DPP, i.e., the dual optimal solution must be inside a ball which is a subset of the one in DPP and has only a half radius. Again, based on the estimation in Theorem \ref{thm:Lasso_improve2_estimate} and (\ref{rule1'}), we have the following result.

\begin{theorem}\label{thm:Lasso_improve2_fundamental_rule}
	For the Lasso problem, assume the dual optimal solution $\theta^{\ast}(\cdot)$ at $\lambda_0\in(0,\lambda_{\rm max}]$ is known. Then, for each $\lambda\in(0,\lambda_0)$, we have $[\beta^{\ast}(\lambda)]_i=0$ if
	\begin{align*}
		\left|{\bf x}_i^T\left(\theta^*(\lambda_0)+\frac{1}{2}\left(\frac{1}{\lambda}-\frac{1}{\lambda_0}\right){\bf y}\right)\right|<1-\frac{1}{2}\left(\frac{1}{\lambda}-\frac{1}{\lambda_0}\right)\|{\bf y}\|_2\|{\bf x}_i\|_2.
	\end{align*}
\end{theorem}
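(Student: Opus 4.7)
The plan is to mirror the three-step recipe laid out in Section~\ref{subsection:fundamental_DPP}, using the tighter ball from Theorem~\ref{thm:Lasso_improve2_estimate} in place of the one from Theorem~\ref{thm:DPP_estimation}. Since Step 1 (the estimation of $\theta^*(\lambda)$) has already been carried out for us in Theorem~\ref{thm:Lasso_improve2_estimate}, I only need to execute Steps 2 and 3.

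First, I would set, for notational brevity,
\begin{align*}
{\bf c} = \theta^*(\lambda_0) + \tfrac{1}{2}\bigl(\tfrac{1}{\lambda}-\tfrac{1}{\lambda_0}\bigr){\bf y},\qquad \rho = \tfrac{1}{2}\bigl|\tfrac{1}{\lambda}-\tfrac{1}{\lambda_0}\bigr|\,\|{\bf y}\|_2,
\end{align*}
so that Theorem~\ref{thm:Lasso_improve2_estimate} gives $\theta^*(\lambda)\in B({\bf c},\rho)$. Note that the assumption $\lambda\in(0,\lambda_0)$ lets me drop the absolute value and write $\rho = \tfrac{1}{2}\bigl(\tfrac{1}{\lambda}-\tfrac{1}{\lambda_0}\bigr)\|{\bf y}\|_2$, which matches the form appearing in the claimed inequality.

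Next, for Step 2, I would solve the maximization over the ball exactly as in the proof of Theorem~\ref{thm:Lasso_DPP}: writing an arbitrary $\theta\in B({\bf c},\rho)$ as $\theta = {\bf c} + {\bf v}$ with $\|{\bf v}\|_2\leq\rho$, the supremum
\begin{align*}
\sup_{\theta\in B({\bf c},\rho)}|{\bf x}_i^T\theta| = \sup_{\|{\bf v}\|_2\leq\rho}\bigl|{\bf x}_i^T({\bf c}+{\bf v})\bigr| = |{\bf x}_i^T{\bf c}| + \rho\,\|{\bf x}_i\|_2
\end{align*}
follows from the Cauchy--Schwarz inequality (attained by aligning ${\bf v}$ with $\pm {\bf x}_i$ and matching the sign of ${\bf x}_i^T{\bf c}$).

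Finally, for Step 3, I plug this upper bound into the relaxed screening rule (\ref{rule1'}): if $|{\bf x}_i^T{\bf c}|+\rho\|{\bf x}_i\|_2<1$, then $\sup_{\theta\in B({\bf c},\rho)}|{\bf x}_i^T\theta|<1$, which forces $|{\bf x}_i^T\theta^*(\lambda)|<1$ since $\theta^*(\lambda)\in B({\bf c},\rho)$, and hence $[\beta^*(\lambda)]_i=0$ by (\ref{rule1}). Substituting the definitions of ${\bf c}$ and $\rho$ recovers exactly the inequality in the theorem statement. There is no real obstacle here—the work was done in Theorem~\ref{thm:Lasso_improve2_estimate}; this theorem is purely a packaging of that estimation into the safe-screening framework, and the proof is essentially identical to (and as short as) that of Theorem~\ref{thm:Lasso_DPP}.
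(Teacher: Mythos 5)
Your proof is correct and follows exactly the route the paper intends: the paper omits this proof, noting it is ``very similar to the proof of Theorem~\ref{thm:Lasso_DPP},'' and your argument is precisely that proof with the ball of Theorem~\ref{thm:Lasso_improve2_estimate} substituted for the one in \eqref{eqn:Lasso_DPP_ball}. Nothing is missing.
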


We omit the proof of Theorem \ref{thm:Lasso_improve2_fundamental_rule} since it is very similar to the proof of Theorem \ref{thm:Lasso_DPP}. A direct application of Theorem \ref{thm:Lasso_improve2_fundamental_rule} leads to the following sequential screening rule.

\vspace{5mm}
{\bf Improvement 2}: {\it For the Lasso problem (\ref{prob:Lasso_primal}), suppose we are given a sequence of parameter values $\lambda_{\rm max}=\lambda_0>\lambda_1>\ldots>\lambda_{\mathcal{K}}$. Then for any integer $0\leq k<\mathcal{K}$, we have $[\beta^{\ast}(\lambda_{k+1})]_i=0$ if $\beta^{\ast}(\lambda_k)$ is known and the following holds:
	$$\left|{\bf x}_i^T\left(\frac{{\bf y}-{\bf X}\beta^{\ast}(\lambda_k)}{\lambda_{k}}+\frac{1}{2}\left(\frac{1}{\lambda_{k+1}}-\frac{1}{\lambda_k}\right){\bf y}\right)\right|<1-\frac{1}{2}\left(\frac{1}{\lambda_{k+1}}-\frac{1}{\lambda_k}\right)\|{\bf y}\|_2\|{\bf x}_i\|_2.$$
}

Because the screening rule in Improvement 2 is developed based on (\ref{rule1'}) and the estimation in Theorem \ref{thm:Lasso_improve2_estimate}, it is easy to see that Improvement 2 is more effective in discarding the inactive features than DPP.

%
%

\subsubsection{The Proposed Enhanced DPP Rules}\label{subsubsection:EDPP}

In Sections \ref{subsubsection:Improvement1} and \ref{subsubsection:Improvement2}, we present two different approaches to improve the estimation of the dual optimal solution in DPP. In view of (\ref{rule1'}), we can see that the resulting screening rules, i.e., Improvements 1 and 2, are more effective in discarding the inactive features than DPP. In this section, we give a more accurate estimation of the dual optimal solution than the ones in Theorems \ref{thm:Lasso_improve1_estimation} and \ref{thm:Lasso_improve2_estimate} by combining the aforementioned two approaches together. The resulting screening rule for Lasso is the so called enhanced DPP rule (EDPP). Again, (\ref{rule1'}) implies that EDPP is more effective in discarding the inactive features than the screening rules in Improvements 1 and 2. We also present several experiments to demonstrate that EDPP is able to identify more inactive features than the screening rules in Improvements 1 and 2. Therefore, in the subsequent sections, we will focus on the generalizations and evaluations of EDPP.

To develop the EDPP rules, we still follow the three steps in Section \ref{subsection:fundamental_DPP}. Indeed, by combining the two approaches proposed in Sections \ref{subsubsection:Improvement1} and \ref{subsubsection:Improvement2}, we can further improve the estimation of the dual optimal solution in the following theorem.

\begin{theorem}\label{thm:Lasso_EDPP_estimation}
	For the Lasso problem, suppose the dual optimal solution $\theta^*(\cdot)$ at $\lambda_0\in(0,\lambda_{\rm max}]$ is known, and $\forall\hspace{1mm}\lambda\in(0,\lambda_0]$, let ${\bf v}_2^{\perp}(\lambda,\lambda_0)$ be given by \eqref{eqn:Lasso_v2_perp}.
	Then, we have
	\begin{align}
		\left\|\theta^*(\lambda)-\left(\theta^*(\lambda_0)+\frac{1}{2}{\bf v}_2^{\perp}(\lambda,\lambda_0)\right)\right\|_2\leq\frac{1}{2}\|{\bf v}_2^{\perp}(\lambda,\lambda_0)\|_2.
	\end{align}
\end{theorem}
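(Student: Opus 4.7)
The plan is to fuse the two sharpenings developed in the two preceding subsections. From Improvement 1 I borrow the geometric fact that every point on the ray $\theta^*(\lambda_0)+t\,{\bf v}_1(\lambda_0)$, $t\ge 0$, projects onto $\theta^*(\lambda_0)$; this is Lemma \ref{lemma:projection_ray} when $\lambda_0\in(0,\lambda_{\rm max})$ and Lemma \ref{lemma:projection_ray_lambdamx} when $\lambda_0=\lambda_{\rm max}$. From Improvement 2 I borrow the analytic tool: firm nonexpansiveness of $P_F$ (Theorem \ref{thm:firmly_nonexpansive}), which in the equivalent form used in the proof of Theorem \ref{thm:Lasso_improve2_estimate} reads $\|P_F({\bf w}_1)-P_F({\bf w}_2)\|_2^2\le \langle P_F({\bf w}_1)-P_F({\bf w}_2),{\bf w}_1-{\bf w}_2\rangle$.

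First I would apply this inequality to the pair ${\bf w}_1={\bf y}/\lambda$ and ${\bf w}_2=\theta^*(\lambda_0)+t\,{\bf v}_1(\lambda_0)$ with $t\ge 0$. By \eqref{eqn:Lasso_dual_projection} we have $P_F({\bf w}_1)=\theta^*(\lambda)$, and by the ray lemmas we have $P_F({\bf w}_2)=\theta^*(\lambda_0)$. Writing ${\bf w}_1-{\bf w}_2={\bf v}_2(\lambda,\lambda_0)-t\,{\bf v}_1(\lambda_0)$, this yields the one-parameter family of bounds
$$
\|\theta^*(\lambda)-\theta^*(\lambda_0)\|_2^2\;\le\;\bigl\langle\theta^*(\lambda)-\theta^*(\lambda_0),\,{\bf v}_2(\lambda,\lambda_0)-t\,{\bf v}_1(\lambda_0)\bigr\rangle,\qquad\forall\,t\ge 0.
$$

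Second, I would optimize the right-hand side over $t\ge 0$. Since only the factor ${\bf v}_2-t\,{\bf v}_1$ depends on $t$, the tightest bound (in the Cauchy–Schwarz sense) is obtained at $t^\star=\langle{\bf v}_1(\lambda_0),{\bf v}_2(\lambda,\lambda_0)\rangle/\|{\bf v}_1(\lambda_0)\|_2^2$, which by definition \eqref{eqn:Lasso_v2_perp} gives ${\bf v}_2-t^\star{\bf v}_1={\bf v}_2^{\perp}(\lambda,\lambda_0)$. For this choice of $t^\star$ to be admissible I must check $t^\star\ge 0$, i.e.\ $\langle{\bf v}_1(\lambda_0),{\bf v}_2(\lambda,\lambda_0)\rangle\ge 0$; but this nonnegativity is precisely what was established in \eqref{ineqn:v1v2_ip2} for $\lambda_0\in(0,\lambda_{\rm max})$ and in \eqref{ineqn:Lasso_improve1_v1mxv2} for $\lambda_0=\lambda_{\rm max}$, so both cases are covered by results already in hand.

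Third, I would rewrite the resulting inequality
$$
\|\theta^*(\lambda)-\theta^*(\lambda_0)\|_2^2\;\le\;\bigl\langle\theta^*(\lambda)-\theta^*(\lambda_0),\,{\bf v}_2^{\perp}(\lambda,\lambda_0)\bigr\rangle
$$
by completing the square: subtracting $\langle\theta^*(\lambda)-\theta^*(\lambda_0),{\bf v}_2^{\perp}\rangle$ from both sides and then adding $\tfrac{1}{4}\|{\bf v}_2^{\perp}\|_2^2$ gives exactly
$$
\Bigl\|\theta^*(\lambda)-\bigl(\theta^*(\lambda_0)+\tfrac{1}{2}{\bf v}_2^{\perp}(\lambda,\lambda_0)\bigr)\Bigr\|_2^2\;\le\;\tfrac{1}{4}\|{\bf v}_2^{\perp}(\lambda,\lambda_0)\|_2^2,
$$
which is the stated conclusion.

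The only real obstacle is the sign check $t^\star\ge 0$, because if $t^\star$ were negative the minimization over the constrained range $t\ge 0$ would force $t=0$ and we would recover only the weaker firm-nonexpansive ball of Theorem \ref{thm:Lasso_improve2_estimate}. Fortunately, the inner-product nonnegativity needed here is exactly what was proved en route to Theorem \ref{thm:Lasso_improve1_estimation}, so no new work is required and the two improvements compose cleanly.
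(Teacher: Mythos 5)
Your proposal is correct and follows essentially the same route as the paper's proof: apply firm nonexpansiveness to ${\bf y}/\lambda$ and the ray point $\theta_{\lambda_0}(t)$, use the ray-projection lemmas to identify $P_F(\theta_{\lambda_0}(t))=\theta^*(\lambda_0)$, complete the square, and select $t^\star=\langle{\bf v}_1,{\bf v}_2\rangle/\|{\bf v}_1\|_2^2\ge 0$ using the nonnegativity of $\langle{\bf v}_1,{\bf v}_2\rangle$ already established for Improvement 1. The only cosmetic difference is that the paper completes the square for every $t\ge 0$ and then minimizes the resulting radius, whereas you fix $t=t^\star$ first; since any admissible $t$ yields a valid ball, this does not affect correctness.
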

\begin{proof}
	Recall that $\theta_{\lambda_0}(t)$ is defined by \eqref{eqn:ray} and \eqref{eqn:Lasso_ray_lambdamx}. In view of (\ref{def:firmly_nonexpan}), we have
	\begin{align}\label{ineqn:firmly_nonexpan_lasso}
		\left\|P_F\left(\frac{\bf y}{\lambda}\right)-P_F\left(\theta_{\lambda_0}(t)\right)\right\|_2^2+\left\|({\rm Id}-P_F)\left(\frac{\bf y}{\lambda}\right)-({\rm Id}-P_F)\left(\theta_{\lambda_0}(t)\right)\right\|_2^2\leq\left\|\frac{\bf y}{\lambda}-\theta_{\lambda_0}(t)\right\|_2^2.
	\end{align}
	By expanding the second term on the left hand side of (\ref{ineqn:firmly_nonexpan_lasso}) and rearranging the terms, we obtain the following equivalent form:
	\begin{align}\label{ineqn:firmly_nonexpan_lasso_1}
		\left\|P_F\left(\frac{\bf y}{\lambda}\right)-P_F\left(\theta_{\lambda_0}(t)\right)\right\|_2^2\leq\left\langle\frac{\bf y}{\lambda}-\theta_{\lambda_0}(t),P_F\left(\frac{\bf y}{\lambda}\right)-P_F\left(\theta_{\lambda_0}(t)\right)\right\rangle.
	\end{align}
	In view of \eqref{eqn:Lasso_dual_projection}, \eqref{eqn:Lasso_projection_ray} and \eqref{eqn:Lasso_projection_ray_lambdamx}, the inequality in (\ref{ineqn:firmly_nonexpan_lasso_1}) can be rewritten as
	\begin{align}\label{ineqn:firmly_nonexpan_lasso_2}
		\|\theta^*(\lambda)-\theta^*(\lambda_0)\|_2^2&\leq\left\langle\frac{\bf y}{\lambda}-\theta_{\lambda_0}(t),\theta^*(\lambda)-\theta^*(\lambda_0)\right\rangle\\ \nonumber
		&=\left\langle\frac{\bf y}{\lambda}-\theta^*(\lambda_0)-t{\bf v}_1(\lambda_0),\theta^*(\lambda)-\theta^*(\lambda_0)\right\rangle\\ \nonumber
		&=\langle{\bf v}_2(\lambda,\lambda_0)-t{\bf v}_1(\lambda_0),\theta^*(\lambda)-\theta^*(\lambda_0)\rangle,\hspace{2mm}\forall t\geq0.
	\end{align}
	[Recall that ${\bf v}_1(\lambda_0)$ and ${\bf v}_2(\lambda,\lambda_0)$ are defined by \eqref{eqn:Lasso_v1} and \eqref{eqn:Lasso_v2} respectively.]
	Clearly, the inequality in (\ref{ineqn:firmly_nonexpan_lasso_2}) is equivalent to
	\begin{align}\label{ineqn:firmly_nonexpan_lasso_3}
		\left\|\theta^*(\lambda)-\left(\theta^*(\lambda_0)+\frac{1}{2}({\bf v}_2(\lambda,\lambda_0)-t{\bf v}_1(\lambda_0))\right)\right\|_2^2\leq\frac{1}{4}\|{\bf v}_2(\lambda,\lambda_0)-t{\bf v}_1(\lambda_0)\|_2^2, \hspace{2mm}\forall t\geq0.
	\end{align}
	The statement follows easily by minimizing the right hand side of the inequality in (\ref{ineqn:firmly_nonexpan_lasso_3}), which has been done in the proof of Theorem \ref{thm:Lasso_improve1_estimation}.
\end{proof}

%

Indeed, Theorem \ref{thm:Lasso_EDPP_estimation} is equivalent to bounding $\theta^*(\lambda)$ in a ball as follows:
\begin{align}\label{eqn:Lasso_EDPP_ball}
	\theta^*(\lambda)\in B\left(\theta^*(\lambda_0)+\frac{1}{2}{\bf v}_2^{\perp}(\lambda,\lambda_0),\frac{1}{2}\|{\bf v}_2^{\perp}(\lambda,\lambda_0)\|_2\right).
\end{align}
Based on this estimation and (\ref{rule1'}), we immediately have the following result.
\begin{theorem}\label{thm:Lasso_EDPP_fundamental_rule}
	For the Lasso problem, assume the dual optimal problem $\theta^*(\cdot)$ at $\lambda_0\in(0,\lambda_{\rm max}]$ is known, and $\lambda\in(0,\lambda_0]$. Then $[\beta^*(\lambda)]_i=0$ if the following holds:
	\begin{align*}
		\left|{\bf x}_i^T\left(\theta^*(\lambda_0)+\frac{1}{2}{\bf v}_2^{\perp}(\lambda,\lambda_0)\right)\right|<1-\frac{1}{2}\|{\bf v}_2^{\perp}(\lambda,\lambda_0)\|_2\|{\bf x}_i\|_2.
	\end{align*}
\end{theorem}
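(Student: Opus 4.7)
The proof is a direct application of the estimation developed in Theorem \ref{thm:Lasso_EDPP_estimation} combined with the generic screening scheme (\ref{rule1'}). Essentially all the hard work has already been done in establishing the ball containment
\begin{align*}
\theta^*(\lambda) \in B\left(\theta^*(\lambda_0) + \tfrac{1}{2} {\bf v}_2^{\perp}(\lambda,\lambda_0), \tfrac{1}{2}\|{\bf v}_2^{\perp}(\lambda,\lambda_0)\|_2\right),
\end{align*}
so the remaining task is just Steps 2 and 3 of the three-step recipe from Section \ref{subsection:fundamental_DPP}.

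My plan is to mirror the proof of Theorem \ref{thm:Lasso_DPP}, which handled the DPP rule via the same template. First I would set ${\bf c} = \theta^*(\lambda_0) + \tfrac{1}{2}{\bf v}_2^{\perp}(\lambda,\lambda_0)$ and $\rho = \tfrac{1}{2}\|{\bf v}_2^{\perp}(\lambda,\lambda_0)\|_2$, and invoke Theorem \ref{thm:Lasso_EDPP_estimation} to assert $\theta^*(\lambda) \in B({\bf c},\rho)$. Then I would solve the maximization required by (\ref{rule1'}) over this ball: parameterize $\theta = {\bf c} + {\bf v}$ with $\|{\bf v}\|_2 \le \rho$, so that
\begin{align*}
\sup_{\theta \in B({\bf c},\rho)} \left|{\bf x}_i^T \theta\right| = \sup_{\|{\bf v}\|_2 \le \rho} \left|{\bf x}_i^T({\bf c} + {\bf v})\right| = \left|{\bf x}_i^T {\bf c}\right| + \rho \|{\bf x}_i\|_2,
\end{align*}
where the second equality uses Cauchy-Schwarz and the tightness of the bound by choosing ${\bf v}$ to be $\pm \rho {\bf x}_i/\|{\bf x}_i\|_2$ aligned with the sign of ${\bf x}_i^T{\bf c}$.

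Finally, substituting this supremum into (\ref{rule1'}) gives precisely the condition in the theorem: if
\begin{align*}
\left|{\bf x}_i^T\left(\theta^*(\lambda_0) + \tfrac{1}{2}{\bf v}_2^{\perp}(\lambda,\lambda_0)\right)\right| + \tfrac{1}{2}\|{\bf v}_2^{\perp}(\lambda,\lambda_0)\|_2 \|{\bf x}_i\|_2 < 1,
\end{align*}
then $\sup_{\theta \in B({\bf c},\rho)} |{\bf x}_i^T \theta| < 1$, which forces $|{\bf x}_i^T \theta^*(\lambda)| < 1$, and hence by (\ref{rule1}) we conclude $[\beta^*(\lambda)]_i = 0$. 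There is no genuine obstacle here; the only thing worth noting is that all the geometric and analytic machinery (firm nonexpansiveness, the ray lemma, and the normal-cone characterization used to handle the boundary case $\lambda_0 = \lambda_{\rm max}$) has been absorbed into Theorem \ref{thm:Lasso_EDPP_estimation}, so this proof is essentially just a one-line supremum computation plus the screening template.
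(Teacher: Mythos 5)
Your proof is correct and is exactly the argument the paper intends: the paper omits the proof of this theorem, noting it is ``very similar to the one of Theorem \ref{thm:Lasso_DPP}'', and you have reproduced precisely that template --- take the ball from Theorem \ref{thm:Lasso_EDPP_estimation}, compute the supremum of the linear functional over it via the parameterization $\theta={\bf c}+{\bf v}$ with $\|{\bf v}\|_2\le\rho$, and plug the resulting bound $|{\bf x}_i^T{\bf c}|+\rho\|{\bf x}_i\|_2$ into (\ref{rule1'}). No gaps; nothing further is needed.
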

We omit the proof of Theorem \ref{thm:Lasso_EDPP_fundamental_rule} since it is very similar to the one of Theorem \ref{thm:Lasso_DPP}. Based on Theorem \ref{thm:Lasso_EDPP_fundamental_rule}, we can develop the EDPP rules as follows.
%

\begin{corollary}\label{corollary:Lasso_EDPPs}
	{\rm\bf EDPP:} For the Lasso problem, suppose we are given a sequence of parameter values $\lambda_{\rm max}=\lambda_0>\lambda_1>\ldots>\lambda_{\mathcal{K}}$. Then for any integer $0\leq k < \mathcal{K}$, we have $[\beta^*(\lambda_{k+1})]_i=0$ if $\beta^*(\lambda_k)$ is known and the following holds:
	\begin{align}\label{ineqn:Lasso_EDPPs}
		\left|{\bf x}_i^T\left(\frac{{\bf y}-{\bf X}\beta^*(\lambda_k)}{\lambda_k}+\frac{1}{2}{\bf v}_2^{\perp}(\lambda_{k+1},\lambda_k)\right)\right|<1-\frac{1}{2}\|{\bf v}_2^{\perp}(\lambda_{k+1},\lambda_k)\|_2\|{\bf x}_i\|_2.
	\end{align}
\end{corollary}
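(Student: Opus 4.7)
The plan is to obtain Corollary \ref{corollary:Lasso_EDPPs} as a direct specialization of Theorem \ref{thm:Lasso_EDPP_fundamental_rule} by invoking the KKT stationarity relation to rewrite the dual optimum at the previous parameter in closed form. More precisely, I would fix $k$ and set $\lambda_0 \leftarrow \lambda_k$ and $\lambda \leftarrow \lambda_{k+1}$ in Theorem \ref{thm:Lasso_EDPP_fundamental_rule}. The hypothesis $\lambda_0\in(0,\lambda_{\rm max}]$ and $\lambda\in(0,\lambda_0]$ of that theorem is satisfied because of the assumed ordering $\lambda_{\rm max}=\lambda_0 > \lambda_1 > \ldots > \lambda_{\mathcal{K}}$.

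The second step is to make $\theta^*(\lambda_k)$ explicit. Since $\beta^*(\lambda_k)$ is assumed known, the KKT condition \eqref{eqn:Lasso_KKT1} yields
\[
\theta^*(\lambda_k)=\frac{{\bf y}-{\bf X}\beta^*(\lambda_k)}{\lambda_k}.
\]
Plugging this identity into the screening inequality of Theorem \ref{thm:Lasso_EDPP_fundamental_rule}, with ${\bf v}_2^{\perp}$ read as ${\bf v}_2^{\perp}(\lambda_{k+1},\lambda_k)$ per \eqref{eqn:Lasso_v2_perp}, produces exactly (\ref{ineqn:Lasso_EDPPs}) and hence the conclusion $[\beta^*(\lambda_{k+1})]_i=0$.

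The one point that deserves a sentence of verification is the base case $k=0$. Here $\lambda_0=\lambda_{\rm max}$, so by \eqref{eqn:Lasso_beta_0} we have $\beta^*(\lambda_{\rm max})=0$ and consequently $({\bf y}-{\bf X}\beta^*(\lambda_0))/\lambda_0={\bf y}/\lambda_{\rm max}=\theta^*(\lambda_{\rm max})$, in agreement with \eqref{eqn:Lasso_theta_closed_form}; the corresponding ${\bf v}_1(\lambda_{\rm max})$ is then the $\mathrm{sign}({\bf x}_*^T{\bf y}){\bf x}_*$ branch of \eqref{eqn:Lasso_v1}, which is exactly the definition Theorem \ref{thm:Lasso_EDPP_estimation} (and hence Theorem \ref{thm:Lasso_EDPP_fundamental_rule}) was built to accommodate via Lemma \ref{lemma:projection_ray_lambdamx}.

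Honestly, there is no real obstacle to overcome: all of the analytic content — firm nonexpansiveness, the projection-of-ray lemma, and the normal-cone argument that extends \eqref{eqn:Lasso_projection_ray} to the endpoint $\lambda_0=\lambda_{\rm max}$ — has already been discharged in Theorems \ref{thm:Lasso_improve1_estimation}, \ref{thm:Lasso_improve2_estimate}, and \ref{thm:Lasso_EDPP_estimation}. The corollary is a mechanical rewriting in which the only care needed is to keep the arguments of ${\bf v}_1$ and ${\bf v}_2^{\perp}$ aligned with the substitution $\lambda_0 \leftarrow \lambda_k$, $\lambda \leftarrow \lambda_{k+1}$, and to match the two cases in \eqref{eqn:Lasso_v1} depending on whether $\lambda_k$ is strictly less than $\lambda_{\rm max}$ or equal to it.
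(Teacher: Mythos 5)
Your proposal is correct and matches the paper's (implicit) derivation exactly: the corollary is obtained from Theorem \ref{thm:Lasso_EDPP_fundamental_rule} by setting $\lambda_0=\lambda_k$, $\lambda=\lambda_{k+1}$, and substituting $\theta^*(\lambda_k)=({\bf y}-{\bf X}\beta^*(\lambda_k))/\lambda_k$ via the KKT condition \eqref{eqn:Lasso_KKT1}. Your extra sentence checking the $k=0$ endpoint against \eqref{eqn:Lasso_theta_closed_form} and the $\lambda_0=\lambda_{\rm max}$ branch of \eqref{eqn:Lasso_v1} is a welcome bit of care that the paper leaves unstated.
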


It is easy to see that the ball in (\ref{eqn:Lasso_EDPP_ball}) has the smallest radius compared to the ones in Theorems \ref{thm:Lasso_improve1_estimation} and \ref{thm:Lasso_improve2_estimate}, and thus it provides the most accurate estimation of the dual optimal solution. According to (\ref{rule1'}), EDPP is more effective in discarding the inactive features than DPP, Improvements 1 and 2.

\begin{figure*}[ht]
	\centering{
		\includegraphics[width=0.31\columnwidth]{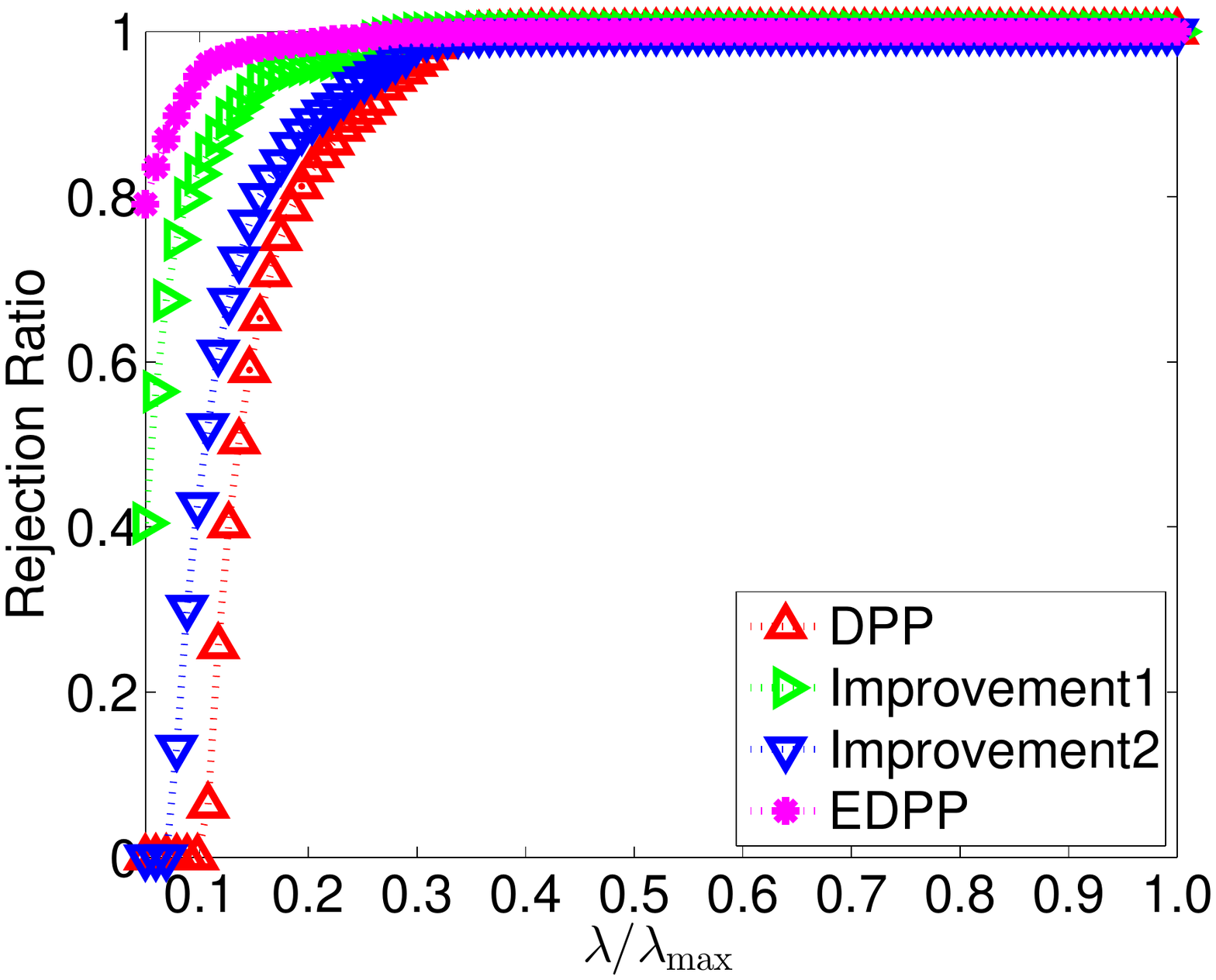}
		\includegraphics[width=0.31\columnwidth]{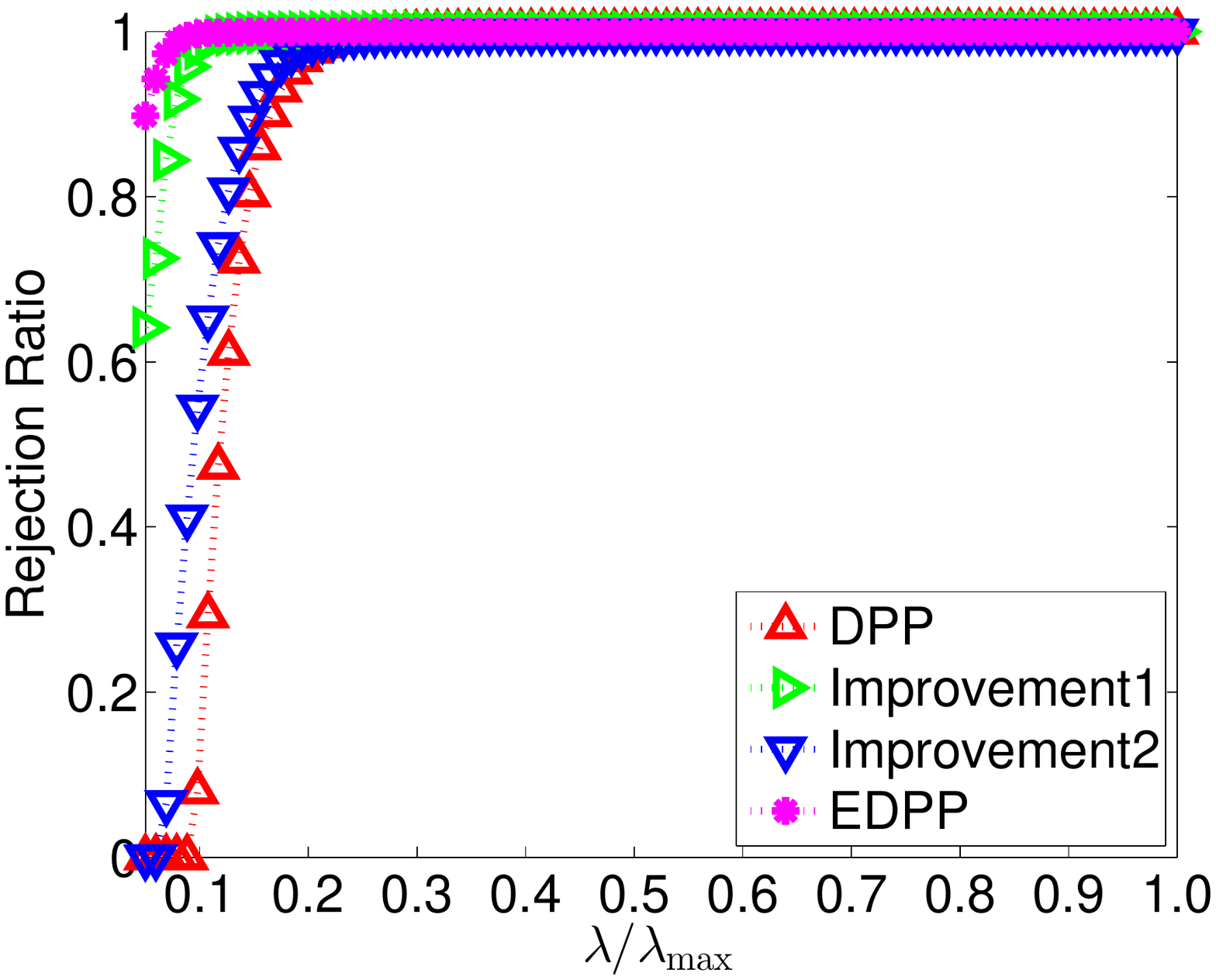}
		\includegraphics[width=0.31\columnwidth]{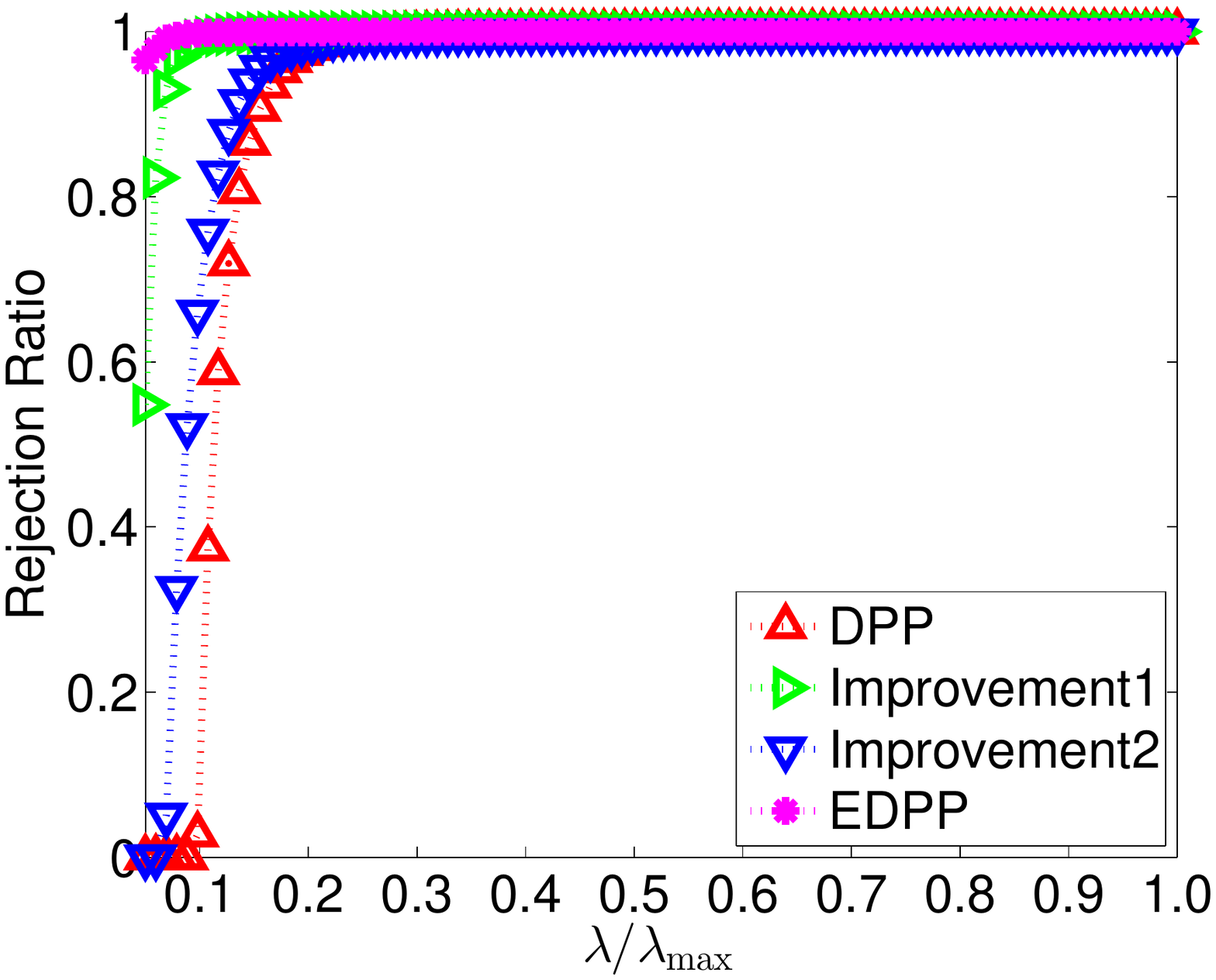}\\
		\subfigure[{\scriptsize Prostate Cancer}, ${\bf X}\in\mathbb{R}^{132\times 15154}$] { \label{fig:prostatecancer_e1}
			\includegraphics[width=0.31\columnwidth]{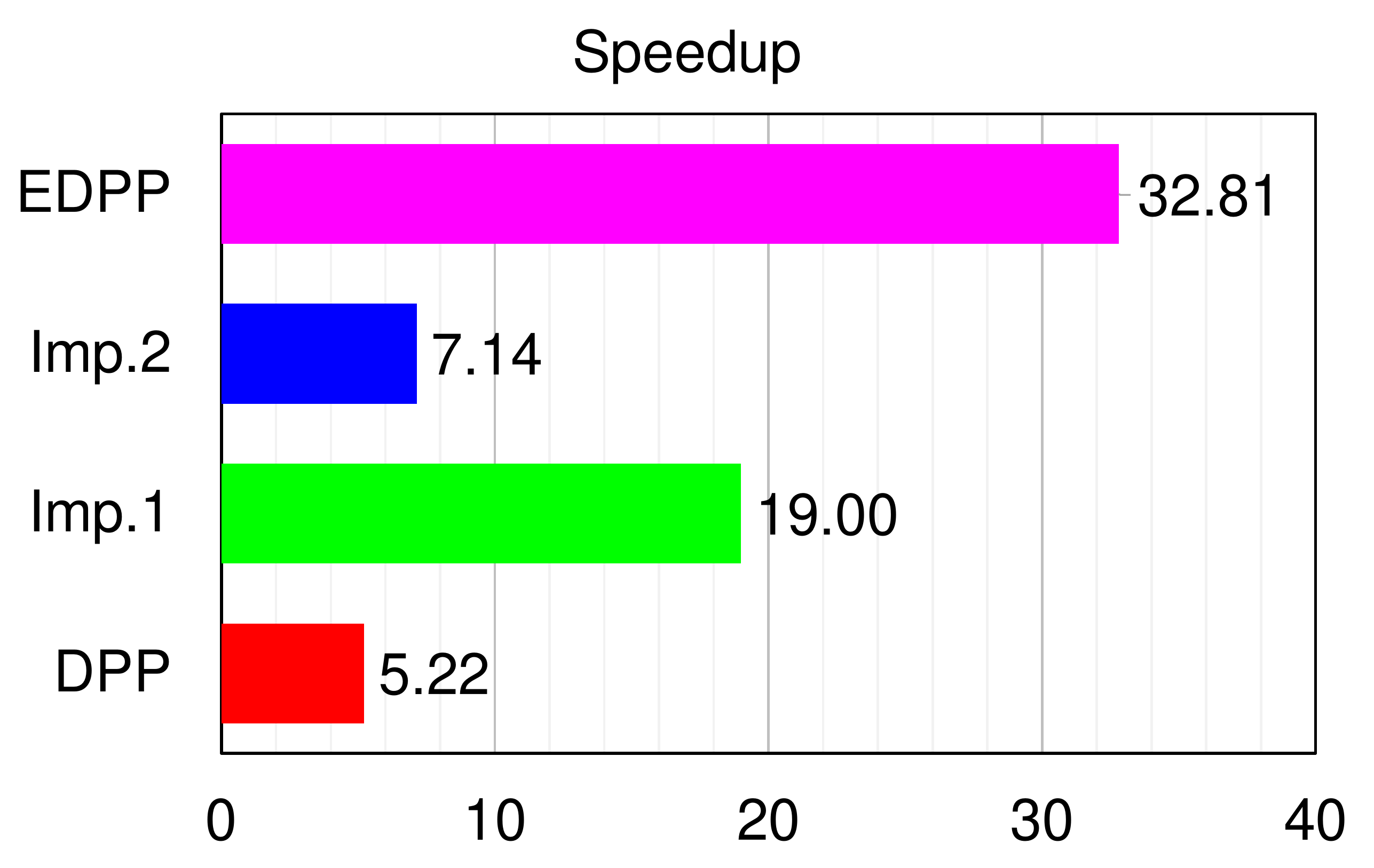}
		}
		\subfigure[PIE, ${\bf X}\in\mathbb{R}^{1024\times 11553}$] { \label{fig:mnist_e1}
			\includegraphics[width=0.31\columnwidth]{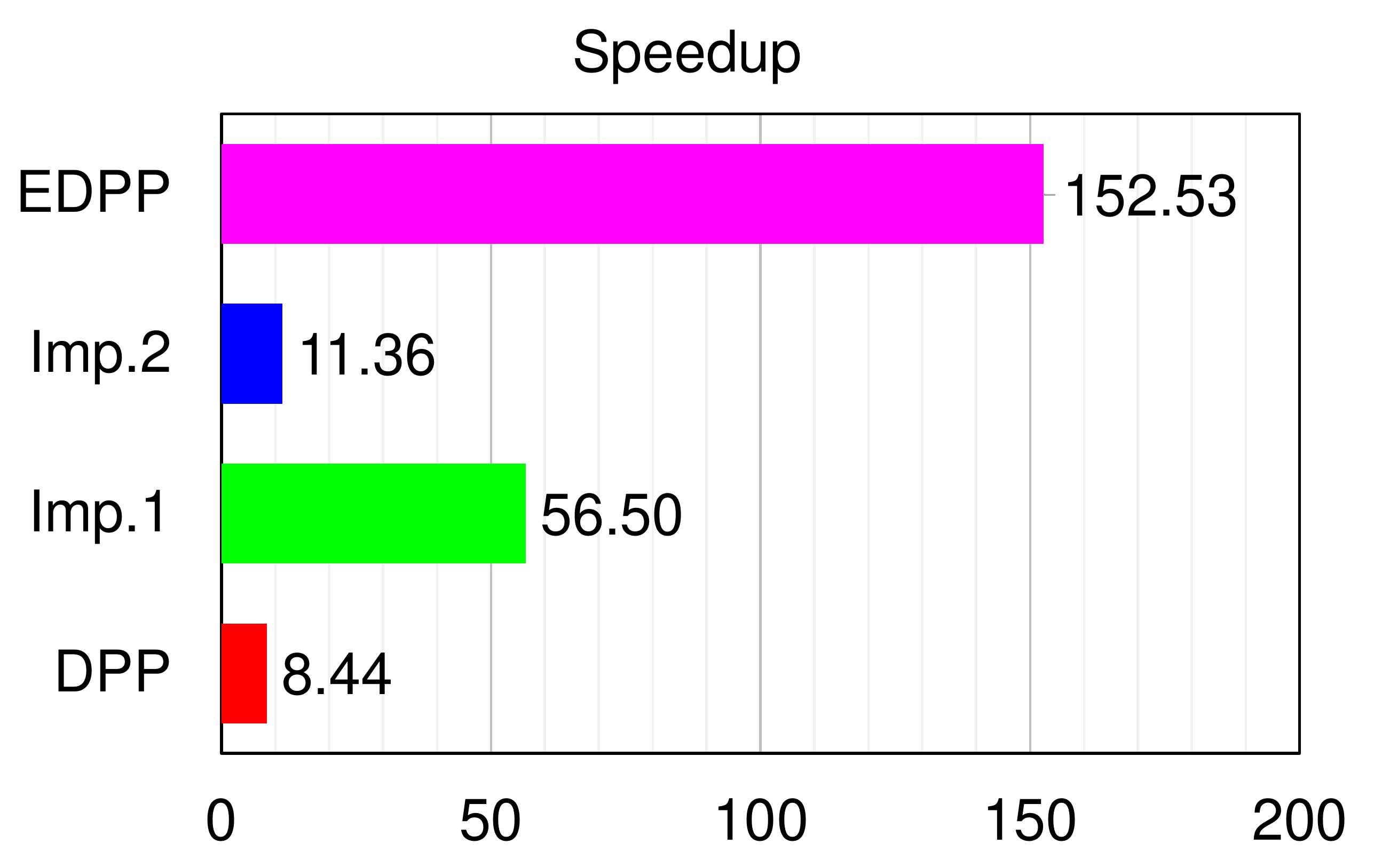}
		}
		\subfigure[MNIST, ${\bf X}\in\mathbb{R}^{784\times 50000}$] { \label{fig:pie_e1}
			\includegraphics[width=0.31\columnwidth]{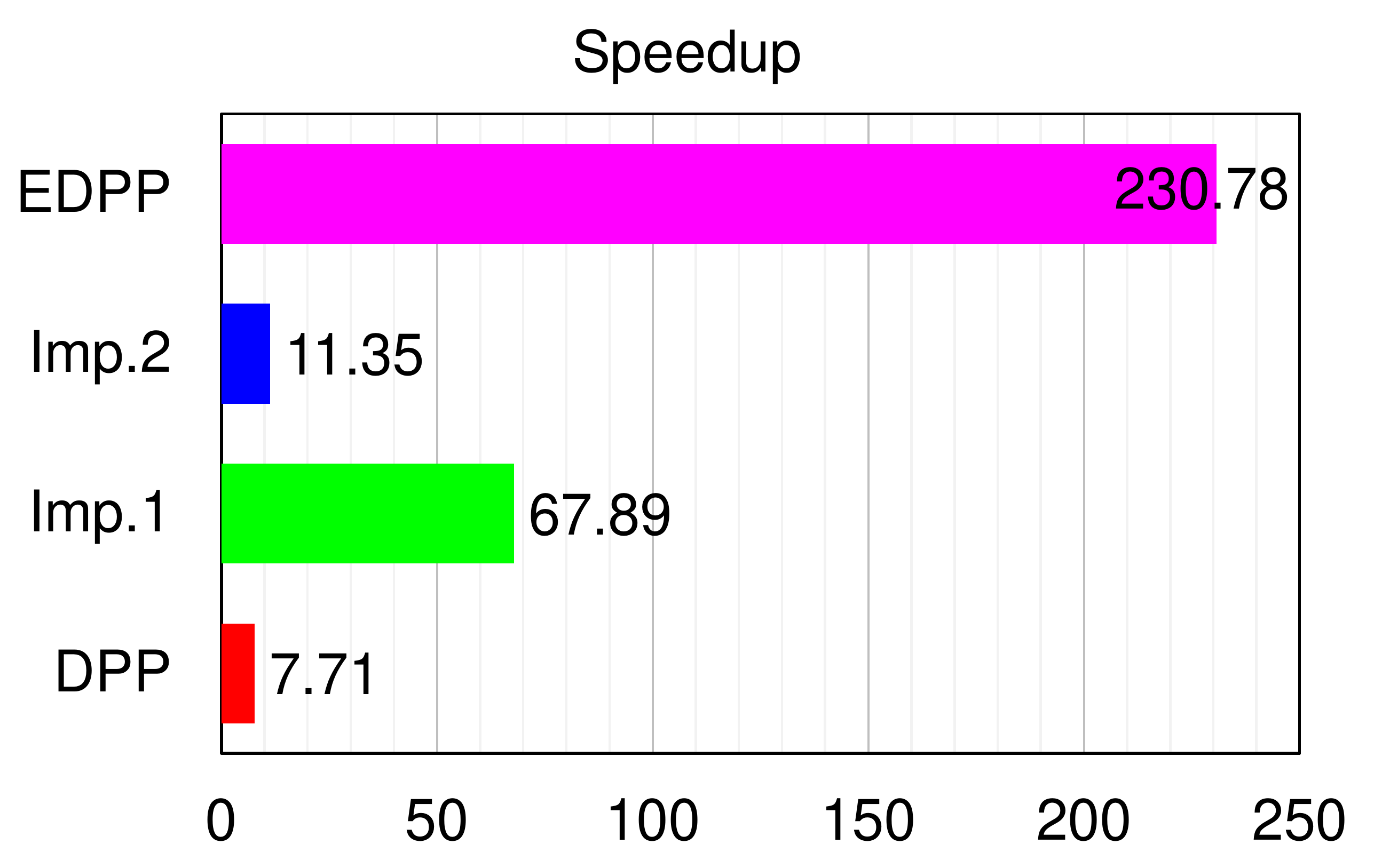}
		}}
		\caption{Comparison of the family of DPP rules on three real data sets: Prostate Cancer digit data set (left), PIE data set (middle) and MNIST image data set (right). The first row shows the rejection ratios of DPP, Improvement 1, Improvement 2 and EDPP. The second row presents the speedup gained by these four methods.}
		\label{fig:lasso_DPP}
	\end{figure*}
	
	\vspace{5mm}
	{\bf Comparisons of the Family of DPP rules} We evaluate the performance of the family of DPP screening rules, i.e., DPP, Improvement 1, Improvement 2 and EDPP, on three real data sets: a) the Prostate Cancer \citep{Petricoin2002}; b) the PIE face image data set \citep{Sim2003}; c) the MNIST handwritten digit data set \citep{Lecun1998}. To measure the performance of the screening rules, we compute the following two quantities:
	\begin{enumerate}
		\item the {\it rejection ratio}, i.e., the ratio of
		the number of features discarded by screening rules to the actual number of zero features in the ground truth;
		\item the {\it speedup}, i.e., the ratio of the running time of the solver with screening rules to the running time of the solver without screening.
	\end{enumerate}
	For each data set, we run the solver with or without the screening rules to solve the Lasso problem along a sequence of $100$ parameter values equally spaced on the $\lambda/\lambda_{\rm max}$ scale from $0.05$ to $1.0$. \figref{fig:lasso_DPP} presents the rejection ratios and speedup by the family of DPP screening rules. Table \ref{table:lasso_DPP_time} reports the running time of the solver with or without the screening rules for solving the $100$ Lasso problems, as well as the time for running  the screening rules.
	
	\setlength{\tabcolsep}{.18em}
	\begin{table}
		\begin{center}
			\begin{footnotesize}
				\def\arraystretch{1.25}
				\begin{tabular}{ |l||c||c|c|c|c||c|c|c|c| }
					\hline
					Data & solver & DPP+solver &  Imp.1+solver & Imp.2+solver & EDPP+solver & DPP & Imp.1 & Imp.2 & EDPP  \\
					\hline\hline
					Prostate Cancer  & 121.41 & 23.36 & 6.39 & 17.00 & 3.70 & 0.30 & 0.27 & 0.28 & 0.23\\\hline
					PIE  & 629.94 & 74.66 & 11.15 & 55.45 & 4.13 & 1.63 & 1.34 & 1.54 & 1.33 \\\hline
					MNIST  & 2566.26 & 332.87 & 37.80 & 226.02 & 11.12 & 5.28 & 4.36 & 4.94 & 4.19 \\\hline
				\end{tabular}
			\end{footnotesize}
		\end{center}
		\caption{Running time (in seconds) for solving the Lasso problems along a sequence of $100$ tuning parameter values equally spaced on the scale of ${\lambda}/{\lambda_{\rm max}}$ from $0.05$ to $1$ by (a): the solver \citep{SLEP} (reported in the second column) without screening; (b): the solver combined with different screening methods (reported in the $3^{rd}$ to the $6^{th}$ columns).
			The last four columns report the total running time (in seconds) for the screening methods.
		}
		\label{table:lasso_DPP_time}
	\end{table}
	
	\vspace{2mm}
	{\bf The Prostate Cancer Data Set} The Prostate Cancer data set \citep{Petricoin2002} is obtained by protein mass spectrometry. The features are indexed by time-of-flight values, which are related to the mass over charge ratios of the constituent proteins in the blood. The data set has $15154$ measurements of $132$ patients. $69$ of the patients have prostate cancer and the rest are healthy. Therefore, the data matrix ${\bf X}$ is of size $132\times 15154$, and the response vector ${\bf y}\in\{1,-1\}^{132}$ contains the binary labels of the patients.
	
	{\bf The PIE Face Image Data Set} The PIE face image data set used in this experiment\footnote{\url{http://www.cad.zju.edu.cn/home/dengcai/Data/FaceData.html}} \citep{Cai2007} contains $11554$ gray face images of $68$ people, taken under different poses,  illumination conditions and expressions. Each of the images has $32\times 32$ pixels. Therefore, in each trial, we first randomly pick an image as the response ${\bf y}\in\mathbb{R}^{1024}$, and then use the remaining images to form the data matrix ${\bf X}\in\mathbb{R}^{1024\times 11553}$. We run $100$ trials and report the average performance of the screening rules.
	
	{\bf The MNIST Handwritten Digit Data Set} This data set contains grey images of scanned handwritten digits, including $60,000$ for training and $10,000$ for testing. The dimension of each image is $28\times 28$. We first randomly select $5000$ images for each digit from the training set (and in total we have $50000$ images) and get a data matrix ${\bf X}\in\mathbb{R}^{784\times 50000}$. Then in each trial, we randomly select an image from the testing set as the response ${\bf y}\in\mathbb{R}^{784}$. We run $100$ trials and report the average performance of the screening rules.
	
	\vspace{2mm}
	From \figref{fig:lasso_DPP}, we can see that both Improvements 1 and 2 are able to discard more inactive features than DPP, and thus lead to a higher speedup. Compared to Improvement 2, we can also observe that Improvement 1 is more effective in discarding the inactive features. For the three data sets, the second row of \figref{fig:lasso_DPP} shows that Improvement 1 leads to about $20$, $60$, $70$ times speedup respectively, which are much higher than the ones gained by Improvement 1 (roughly $10$ times for all the three cases).
	
	Moreover, the EDPP rule, which combines the ideas of both Improvements 1 and 2, is even more effective in discarding the inactive features than Improvement 1. We can see that, for all of the three data sets and most of the $100$ parameter values, the rejection ratios of EDPP are very close to $100\%$. In other words, EDPP is able to discard almost all of the inactive features. Thus, the resulting speedup of EDPP is significantly better than the ones gained by the other three DPP rules. For the PIE and MNIST data sets, we can see that the speedup gained EDPP is about $150$ and $230$ times, which are two orders of magnitude. In view of Table \ref{table:lasso_DPP_time}, for the MNIST data set, the solver without screening needs about $2566.26$ seconds to solve the $100$ Lasso problems. In contrast, the solver with EDPP only needs $11.12$ seconds, leading to substantial savings in the computational cost. Moreover, from the last four columns of Table \ref{table:lasso_DPP_time}, we can also observe that the computational cost of the family of DPP rules are very low. Compared to that of the solver without screening, the computational cost of the family of DPP rules is negligible.
	
	In Section \ref{section:experiments}, we will only compare the performance of EDPP against several other state-of-the-art screening rules.
	
	\section{Extensions to Group Lasso}\label{section:glasso}
	
	To demonstrate the flexibility of the family of DPP rules, we extend the idea of EDPP to the group Lasso problem \citep{Yuan2006} in this section. Although the Lasso and group Lasso problems are very different from each other, we will see that their dual problems share a lot of similarities. For example, both of the dual problems can be formulated as looking for projections onto nonempty closed convex subsets of a Hilbert space. Recall that, the EDPP rule for the Lasso problem is entirely based on the properties of the projection operators. Therefore, the framework of the EDPP screening rule we developed for Lasso is also applicable for the group Lasso problem. In Section \ref{subsection:basics_groupLasso}, we briefly review some basics of the group Lasso problem and explore the geometric properties of its dual problem. In Section \ref{subsection:EDPP_groupLasso}, we develop the EDPP rule for the group Lasso problem.
	
	\subsection{Basics}\label{subsection:basics_groupLasso}
	With the group information available, the group Lasso problem takes the form of:
	\begin{equation}\label{problem:group_lasso_primal_mt}
		\inf_{\beta\in\mathbb{R}^{p}}\frac{1}{2}\left\|{\bf y}-\sum\nolimits_{g=1}^{G}{\bf X}_g\beta_g\right\|_2^2+\lambda\sum\nolimits_{g=1}^{G}\sqrt{n_g}\|\beta_g\|_2,
	\end{equation}
	where ${\bf X}_g\in\mathbb{R}^{N\times n_g}$ is the data matrix for the $g^{th}$ group and $p=\sum_{g=1}^{G}n_g$.
	The  dual problem of (\ref{problem:group_lasso_primal_mt}) is (see detailed derivation in the appendix):
	\begin{align}\label{problem:group_lasso_dual_mt}
		\sup_{\theta}\quad \left\{\frac{1}{2}\|{\bf y}\|_2^2 - \frac{\lambda^2}{2}\left\|\theta - \frac{{\bf y}}{\lambda}\right\|_2^2:\,\, \|{\bf X}_g^{T}\theta\|_2\leq \sqrt{n_g},\,g=1,2,\ldots,G\right\}
	\end{align}
	The KKT conditions are given by
	\begin{align}\label{equation:group_lasso_primal_dual}
		{\bf y}&=\sum\nolimits_{g=1}^{G}{\bf X}_g\beta_g^{\ast}(\lambda)+\lambda\theta^{\ast}(\lambda),\\ \label{equation:group_lasso_KKT}
		(\theta^{\ast}(\lambda))^{T}{\bf X}_g&\in
		\begin{cases}
			\sqrt{n_g}\frac{\beta_g^{\ast}(\lambda)}{\|\beta_g^{\ast}(\lambda)\|_2},\hspace{9mm}{\rm if}\beta_g^{\ast}(\lambda)\neq 0,  \\
			\sqrt{n_g}{\bf u},\,\|{\bf u}\|_2\leq 1,\hspace{2mm}{\rm if}\beta_g^{\ast}(\lambda)= 0.   \\
		\end{cases}
	\end{align}
	for $g=1,2,\ldots,G$.
	Clearly, in view of \eqref{equation:group_lasso_KKT}, we can see that
	\begin{align}\tag{R2}\label{rule2}
		\|(\theta^{\ast}(\lambda))^{T}{\bf X}_g\|_2<\sqrt{n_g}\Rightarrow \beta_g^{\ast}(\lambda)=0
	\end{align}
	However, since $\theta^*(\lambda)$ is generally unknown, (\ref{rule2}) is not applicable to identify the {\it inactive groups}, i.e., the groups which have $0$ coefficients in the solution vector, for the group Lasso problem. Therefore, similar to the Lasso problem, we can first find a region $\overline{\bf \Theta}$ which contains $\theta^*(\lambda)$, and then (\ref{rule2}) can be relaxed as follows:
	\begin{align}\tag{R2$'$}\label{rule2'}
		\sup_{\theta\in\overline{\bf \Theta}}\|(\theta)^{T}{\bf X}_g\|_2<\sqrt{n_g}\Rightarrow \beta_g^{\ast}(\lambda)=0.
	\end{align}
	Therefore, to develop screening rules for the group Lasso problem, we only need to estimate the region $\overline{\bf \Theta}$ which contains $\theta^*(\lambda)$, solve the maximization problem in (\ref{rule2'}), and plug it into (\ref{rule2'}). In other words, the three steps proposed in Section \ref{subsection:fundamental_DPP} can also be applied to develop screening rules for the group Lasso problem. Moreover, (\ref{rule2'}) also implies that the smaller the region $\overline{\bf \Theta}$ is, the more accurate the estimation of the dual optimal solution is. As a result, the more effective the resulting screening rule is in discarding the inactive features.
	
	{\bf Geometric Interpretations}
	For notational convenience, let $\overline{F}$ be the feasible set of problem (\ref{problem:group_lasso_dual_mt}). Similar to the case of Lasso, problem (\ref{problem:group_lasso_dual_mt})implies that the dual optimal $\theta^{\ast}(\lambda)$ is the projection of ${\bf y}/{\lambda}$ onto the feasible set $\overline{F}$,
	i.e.,
	\begin{align}\label{eqn:groupLasso_dual_projection}
		\theta^*(\lambda)=P_{\overline{F}}\left(\frac{\bf y}{\lambda}\right),\hspace{2mm}\forall \hspace{1mm}\lambda>0.
	\end{align}
	Compared to \eqref{eqn:Lasso_dual_projection}, the only difference in \eqref{eqn:groupLasso_dual_projection} is that the feasible set $\overline{F}$ is the intersection of a set of ellipsoids, and thus not a polytope. However, similar to $F$, $\overline{F}$ is also a nonempty closed and convex (notice that $0$ is a feasible point). Therefore, we can make use of all the aforementioned properties of the projection operators, e.g., Lemmas \ref{lemma:projection_ray} and \ref{lemma:projection_ray_lambdamx}, Theorems \ref{thm:projection_normal_cone} and \ref{thm:firmly_nonexpansive}, to develop screening rules for the group Lasso problem.
	
	Moreover, similar to the case of Lasso, we also have a specific parameter value \citep{Tibshirani11} for the group Lasso problem, i.e.,
	\begin{align}\label{eqn:gLasso_lambdamx}
		\overline{\lambda}_{\rm max}=\max_g \frac{\|{\bf X}_g^T{\bf y}\|_2}{\sqrt{n_g}}.
	\end{align}
	Indeed, $\overline{\lambda}_{\rm max}$ is the smallest parameter value such that the optimal solution of problem (\ref{problem:group_lasso_primal_mt}) is 0. More specifically, we have:
	\begin{align}\label{eqn:groupLasso_beta_0}
		\beta^*(\lambda)=0,\hspace{2mm}\forall\hspace{1mm}\lambda\in[\overline{\lambda}_{\rm max},\infty).
	\end{align}
	Combining the result in (\ref{eqn:groupLasso_beta_0}) and \eqref{equation:group_lasso_primal_dual}, we immediately have
	\begin{align}\label{eqn:groupLasso_theta_closed_form}
		\theta^*(\lambda)=\frac{\bf y}{\lambda},\hspace{2mm}\forall\hspace{1mm}\lambda\in[\overline{\lambda}_{\rm max},\infty).
	\end{align}
	Therefore, all through the subsequent sections, we will focus on the cases with $\lambda\in(0,\overline{\lambda}_{\rm max})$.

	\subsection{Enhanced DPP rule for Group Lasso}\label{subsection:EDPP_groupLasso}
	
	In view of (\ref{rule2'}), we can see that the estimation of the dual optimal solution is the key step to develop a screening rule for the group Lasso problem. Because $\theta^*(\lambda)$ is the projection of ${\bf y}/\lambda$ onto the nonempty closed convex set $\overline{F}$ [please refer to \eqref{eqn:groupLasso_dual_projection}], we can make use of all the properties of projection operators, e.g., Lemmas \ref{lemma:projection_ray} and \ref{lemma:projection_ray_lambdamx}, Theorems \ref{thm:projection_normal_cone} and \ref{thm:firmly_nonexpansive}, to estimate the dual optimal solution. First, let us develop a useful technical result as follows.
	
	\begin{lemma}\label{lemma:gLasso_projection_ray}
		For the group Lasso problem, let $\overline{\lambda}_{\rm max}$ be given by \eqref{eqn:gLasso_lambdamx} and
		\begin{align}\label{eqn:groupLasso_Xstar}
			{\bf X}_*:={\rm argmax}_{{\bf X}_g}\frac{\|{\bf X}_g^T{\bf y}\|_2}{\sqrt{n_g}}.
		\end{align}
		Suppose the dual optimal solution $\theta^*(\cdot)$ is known at $\lambda_0\in(0,\overline{\lambda}_{\rm max}]$, let us define
		\begin{align}\label{eqn:v1_gLasso}
			\overline{\bf v}_1(\lambda_0)&=
			\begin{cases}
				\frac{\bf y}{\lambda_0}-\theta^*(\lambda_0),\hspace{2mm}{\rm if}\hspace{2mm}\lambda_0\in(0,\overline{\lambda}_{\rm max}),\\
				{\bf X}_*{\bf X}_*^T{\bf y},\hspace{8.5mm}{\rm if}\hspace{2mm}\lambda_0=\overline{\lambda}_{\rm max}.\\
			\end{cases}\\
			\overline{\theta}_{\lambda_0}(t)&=\theta^*(\lambda_0)+t\overline{\bf v}_1(\lambda_0),\hspace{2mm}t\geq0.
		\end{align}
		Then, we have the following result holds
		\begin{align}\label{eqn:groupLasso_projection_ray}
			P_{\overline{F}}(\overline{\theta}_{\lambda_0}(t))=\theta^*(\lambda_0), \hspace{1mm} \forall \,\,t\geq0.
		\end{align}
	\end{lemma}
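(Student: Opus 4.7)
The plan is to treat the two cases in the definition of $\overline{\bf v}_1(\lambda_0)$ separately, in direct analogy with how Lemma \ref{lemma:projection_ray_lambdamx} was established for the Lasso problem.

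For $\lambda_0\in(0,\overline{\lambda}_{\rm max})$, I would apply Lemma \ref{lemma:projection_ray} directly. First I would observe that since $\lambda_0<\overline{\lambda}_{\rm max}$, the scaled response ${\bf y}/\lambda_0$ violates the constraint indexed by ${\bf X}_*$ (namely $\|{\bf X}_*^T({\bf y}/\lambda_0)\|_2=\|{\bf X}_*^T{\bf y}\|_2/\lambda_0>\sqrt{n_*}$), so ${\bf y}/\lambda_0\notin\overline{F}$. Combining this with \eqref{eqn:groupLasso_dual_projection} gives $\theta^*(\lambda_0)=P_{\overline{F}}({\bf y}/\lambda_0)\neq {\bf y}/\lambda_0$, which in turn means $\overline{\bf v}_1(\lambda_0)={\bf y}/\lambda_0-\theta^*(\lambda_0)\neq 0$. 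Consequently $\overline{\theta}_{\lambda_0}(t)=\theta^*(\lambda_0)+t({\bf y}/\lambda_0-\theta^*(\lambda_0))$ is exactly the ray considered in Lemma \ref{lemma:projection_ray} with $C=\overline{F}$ and ${\bf w}={\bf y}/\lambda_0$, so \eqref{eqn:groupLasso_projection_ray} follows at once.

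For the boundary case $\lambda_0=\overline{\lambda}_{\rm max}$, Lemma \ref{lemma:projection_ray} no longer applies (note that ${\bf y}/\overline{\lambda}_{\rm max}=\theta^*(\overline{\lambda}_{\rm max})\in\overline{F}$), so I would instead invoke the normal cone characterization of Theorem \ref{thm:projection_normal_cone}. By that theorem, \eqref{eqn:groupLasso_projection_ray} is equivalent to showing $\overline{\theta}_{\overline{\lambda}_{\rm max}}(t)-\theta^*(\overline{\lambda}_{\rm max})=t\,{\bf X}_*{\bf X}_*^T{\bf y}\in N_{\overline{F}}(\theta^*(\overline{\lambda}_{\rm max}))$ for every $t\geq 0$. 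Since normal cones are closed under multiplication by non-negative scalars, it suffices to prove
\[
\langle {\bf X}_*{\bf X}_*^T{\bf y},\,\theta-\theta^*(\overline{\lambda}_{\rm max})\rangle\leq 0,\quad \forall\,\theta\in\overline{F}.
\]
I would check this by bounding each side. On the one hand, using the adjoint identity and Cauchy--Schwarz together with the constraint $\|{\bf X}_*^T\theta\|_2\leq\sqrt{n_*}$ that every $\theta\in\overline{F}$ satisfies,
\[
\langle {\bf X}_*{\bf X}_*^T{\bf y},\theta\rangle=\langle {\bf X}_*^T{\bf y},{\bf X}_*^T\theta\rangle\leq \|{\bf X}_*^T{\bf y}\|_2\sqrt{n_*}.
\]
On the other hand, using $\theta^*(\overline{\lambda}_{\rm max})={\bf y}/\overline{\lambda}_{\rm max}$ from \eqref{eqn:groupLasso_theta_closed_form} and the definition $\overline{\lambda}_{\rm max}=\|{\bf X}_*^T{\bf y}\|_2/\sqrt{n_*}$,
\[
\langle {\bf X}_*{\bf X}_*^T{\bf y},\theta^*(\overline{\lambda}_{\rm max})\rangle=\frac{\|{\bf X}_*^T{\bf y}\|_2^2}{\overline{\lambda}_{\rm max}}=\|{\bf X}_*^T{\bf y}\|_2\sqrt{n_*}.
\]
Subtracting these yields the desired inequality, completing the proof.

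The main obstacle is the boundary case $\lambda_0=\overline{\lambda}_{\rm max}$, since the naive ray argument from Lemma \ref{lemma:projection_ray} breaks down when ${\bf y}/\lambda_0$ lies in $\overline{F}$ and the direction $\overline{\bf v}_1$ is chosen from the active constraint rather than from ${\bf y}/\lambda_0-\theta^*(\lambda_0)$; the key observation that makes it go through is that the specific choice ${\bf X}_*{\bf X}_*^T{\bf y}$ aligns with the outward normal of the binding ellipsoidal constraint at $\theta^*(\overline{\lambda}_{\rm max})$, which is captured quantitatively by the Cauchy--Schwarz step above.
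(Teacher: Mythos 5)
Your proposal is correct and follows essentially the same route as the paper: the interior case $\lambda_0\in(0,\overline{\lambda}_{\rm max})$ is handled by noting ${\bf y}/\lambda_0\notin\overline{F}$ and invoking Lemma \ref{lemma:projection_ray}, and the boundary case $\lambda_0=\overline{\lambda}_{\rm max}$ is handled via the normal cone characterization of Theorem \ref{thm:projection_normal_cone} with exactly the same Cauchy--Schwarz computation (the paper writes the common bound as $\overline{\lambda}_{\rm max}\cdot n_*$ where you write $\|{\bf X}_*^T{\bf y}\|_2\sqrt{n_*}$, which are equal). Your explicit remark that normal cones are closed under non-negative scaling is a minor point the paper leaves implicit.
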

	
	\begin{proof}
		Let us first consider the cases with $\lambda_0\in(0,\overline{\lambda}_{\rm max})$. In view of the definition of $\overline{\lambda}_{\rm max}$, it is easy to see that ${\bf y}/\lambda_0\notin \overline{F}$. Therefore, in view of \eqref{eqn:groupLasso_dual_projection} and Lemma \ref{lemma:projection_ray}, the statement in \eqref{eqn:groupLasso_projection_ray} follows immediately.
		
		We next consider the case with $\lambda_0=\overline{\lambda}_{\rm max}$. By Theorem \ref{thm:projection_normal_cone}, we only need to check if
		\begin{align}\label{inclusion:norm_cone_gLasso}
			\overline{\bf v}_1(\overline{\lambda}_{\rm max})\in N_{\overline{F}}(\theta^*(\overline{\lambda}_{\rm max}))\Leftrightarrow \left\langle\overline{\bf v}_1(\overline{\lambda}_{\rm max}),\theta-\theta^*(\overline{\lambda}_{\rm max})\right\rangle\leq0, \hspace{2mm}\forall\hspace{1mm}\theta\in\overline{F}.
		\end{align}
		Indeed, in view of \eqref{eqn:gLasso_lambdamx} and \eqref{eqn:groupLasso_theta_closed_form}, we can see that
		\begin{align}\label{eqn:groupLasso_v1thetamx_ip}
			\langle\overline{\bf v}_1(\overline{\lambda}_{\rm max}),\theta^*(\overline{\lambda}_{\rm max})\rangle=\left\langle{\bf X}_*{\bf X}_*^T{\bf y},\frac{\bf y}{\overline{\lambda}_{\rm max}}\right\rangle=\frac{\|{\bf X}_*^T{\bf y}\|_2^2}{\overline{\lambda}_{\rm max}}.
		\end{align}
		On the other hand, by \eqref{eqn:gLasso_lambdamx} and \eqref{eqn:groupLasso_Xstar}, we can see that
		\begin{align}\label{eqn:groupLasso_norm_Xstary}
			{\|{\bf X}_*^T{\bf y}\|_2}=\overline{\lambda}_{\rm max}\sqrt{n_*},
		\end{align}
		where $n_*$ is the number of columns of ${\bf X}_*$.
		By plugging \eqref{eqn:groupLasso_norm_Xstary} into \eqref{eqn:groupLasso_v1thetamx_ip}, we have
		\begin{align}
			\langle\overline{\bf v}_1(\overline{\lambda}_{\rm max}),\theta^*(\overline{\lambda}_{\rm max})\rangle=\overline{\lambda}_{\rm max}\cdot n_*.
		\end{align}
		Moreover, for any feasible point $\theta\in\overline{F}$, we can see that
		\begin{align}\label{ineqn:groupLasso_Xstar_F}
			\|{\bf X}_*^T\theta\|_2\leq\sqrt{n_*}.
		\end{align}
		In view of the result in (\ref{ineqn:groupLasso_Xstar_F}) and \eqref{eqn:groupLasso_norm_Xstary}, it is easy to see that
		\begin{align}\label{ineqn:groupLasso_v1F_ip}
			\left\langle\overline{\bf v}_1(\overline{\lambda}_{\rm max}),\theta\right\rangle=\left\langle{\bf X}_*{\bf X}_*^T{\bf y},\theta\right\rangle=\left\langle{\bf X}_*^T{\bf y},{\bf X}_*^T\theta\right\rangle\leq\|{\bf X}_*^T{\bf y}\|_2\|{\bf X}_*^T\theta\|_2=\overline{\lambda}_{\rm max}\cdot n_*.
		\end{align}
		Combining the result in \eqref{eqn:groupLasso_v1thetamx_ip} and (\ref{ineqn:groupLasso_v1F_ip}), it is easy to see that the inequality in (\ref{inclusion:norm_cone_gLasso}) holds for all $\theta\in\overline{F}$, which completes the proof.
	\end{proof}
	
	By Lemma \ref{lemma:gLasso_projection_ray}, we can accurately estimate the dual optimal solution of the group Lasso problem in the following theorem. It is easy to see that the result in Theorem \ref{thm:gLasso_estimation} is very similar to the one in Theorem \ref{thm:Lasso_EDPP_estimation} for the Lasso problem.
	
	\begin{theorem}\label{thm:gLasso_estimation}
		For the group Lasso problem, suppose the dual optimal solution $\theta^*(\cdot)$ at $\theta_0\in(0,\overline{\lambda}_{\rm max}]$ is known, and $\overline{\bf v}_1(\lambda_0)$ is given by \eqref{eqn:v1_gLasso}. For any $\lambda\in(0,\lambda_0]$, let us define
		\begin{align}\label{eqn:groupLasso_v2}
			\overline{\bf v}_2(\lambda,\lambda_0)=\frac{\bf y}{\lambda}-\theta^*(\lambda_0),
		\end{align}
		\begin{align}\label{eqn:groupLasso_v2_perp}
			\overline{\bf v}_2^{\perp}(\lambda,\lambda_0)=\overline{\bf v}_2(\lambda,\lambda_0)-\frac{\langle\overline{\bf v}_1(\lambda_0),\overline{\bf v}_2(\lambda,\lambda_0)\rangle}{\|\overline{\bf v}_1(\lambda_0)\|_2^2}\overline{\bf v}_1(\lambda_0).
		\end{align}
		Then, the dual optimal solution $\theta^*(\lambda)$ can be estimated as follows:
		\begin{align}
			\left\|\theta^*(\lambda)-\left(\theta^*(\lambda_0)+\frac{1}{2}\overline{\bf v}_2^{\perp}(\lambda,\lambda_0)\right)\right\|_2\leq\frac{1}{2}\|\overline{\bf v}_2^{\perp}(\lambda,\lambda_0)\|_2.
		\end{align}
	\end{theorem}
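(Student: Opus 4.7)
The plan is to mirror the proof of Theorem~\ref{thm:Lasso_EDPP_estimation} almost verbatim, exploiting the fact that $\overline{F}$ is a nonempty closed convex set (so the projection onto it is firmly nonexpansive) together with the ray-invariance result of Lemma~\ref{lemma:gLasso_projection_ray}. Concretely, I would apply the firmly nonexpansive inequality in Theorem~\ref{thm:firmly_nonexpansive} to the two inputs ${\bf y}/\lambda$ and $\overline{\theta}_{\lambda_0}(t)$. Using \eqref{eqn:groupLasso_dual_projection} to rewrite $P_{\overline{F}}({\bf y}/\lambda)=\theta^*(\lambda)$ and Lemma~\ref{lemma:gLasso_projection_ray} to rewrite $P_{\overline{F}}(\overline{\theta}_{\lambda_0}(t))=\theta^*(\lambda_0)$ for every $t\geq 0$, expanding the $({\rm Id}-P_{\overline{F}})$ term and rearranging should produce
\begin{align*}
\|\theta^*(\lambda)-\theta^*(\lambda_0)\|_2^2\leq\left\langle\overline{\bf v}_2(\lambda,\lambda_0)-t\overline{\bf v}_1(\lambda_0),\,\theta^*(\lambda)-\theta^*(\lambda_0)\right\rangle,\quad \forall\, t\geq0.
\end{align*}

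Completing the square turns this into the family of ball inclusions
\begin{align*}
\left\|\theta^*(\lambda)-\left(\theta^*(\lambda_0)+\tfrac{1}{2}(\overline{\bf v}_2(\lambda,\lambda_0)-t\overline{\bf v}_1(\lambda_0))\right)\right\|_2^2\leq \tfrac{1}{4}\|\overline{\bf v}_2(\lambda,\lambda_0)-t\overline{\bf v}_1(\lambda_0)\|_2^2,\quad\forall\, t\geq0,
\end{align*}
exactly as in \eqref{ineqn:firmly_nonexpan_lasso_3}. To get the sharpest statement, I would minimize the right-hand side over $t\ge 0$. The unconstrained minimizer is $t^{\star}=\langle\overline{\bf v}_1(\lambda_0),\overline{\bf v}_2(\lambda,\lambda_0)\rangle/\|\overline{\bf v}_1(\lambda_0)\|_2^2$, and when $t^{\star}\ge 0$ substituting it in yields $\overline{\bf v}_2(\lambda,\lambda_0)-t^{\star}\overline{\bf v}_1(\lambda_0)=\overline{\bf v}_2^{\perp}(\lambda,\lambda_0)$, which is precisely the bound claimed in the theorem.

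The one nontrivial point, and the main obstacle, is verifying that $\langle\overline{\bf v}_1(\lambda_0),\overline{\bf v}_2(\lambda,\lambda_0)\rangle\geq 0$ so that the minimizer $t^{\star}$ is indeed feasible. This needs a case split mirroring Theorem~\ref{thm:Lasso_improve1_estimation}. For $\lambda_0\in(0,\overline{\lambda}_{\rm max})$, I would use $0\in\overline{F}$ together with Lemma~\ref{lemma:gLasso_projection_ray} (specialized at this $\lambda_0$) to deduce $\|\theta^*(\lambda_0)\|_2\leq\|{\bf y}\|_2/\lambda_0$, then decompose
\begin{align*}
\langle\overline{\bf v}_1(\lambda_0),\overline{\bf v}_2(\lambda,\lambda_0)\rangle=\left(\tfrac{1}{\lambda}-\tfrac{1}{\lambda_0}\right)\left\langle\tfrac{\bf y}{\lambda_0}-\theta^*(\lambda_0),{\bf y}\right\rangle+\left\|\tfrac{\bf y}{\lambda_0}-\theta^*(\lambda_0)\right\|_2^2
\end{align*}
and bound each term by Cauchy--Schwarz together with $\lambda<\lambda_0$. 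For $\lambda_0=\overline{\lambda}_{\rm max}$, I would plug in the explicit forms $\overline{\bf v}_1(\overline{\lambda}_{\rm max})={\bf X}_*{\bf X}_*^T{\bf y}$ and $\theta^*(\overline{\lambda}_{\rm max})={\bf y}/\overline{\lambda}_{\rm max}$ from \eqref{eqn:v1_gLasso} and \eqref{eqn:groupLasso_theta_closed_form}, giving
\begin{align*}
\langle\overline{\bf v}_1(\overline{\lambda}_{\rm max}),\overline{\bf v}_2(\lambda,\overline{\lambda}_{\rm max})\rangle=\left(\tfrac{1}{\lambda}-\tfrac{1}{\overline{\lambda}_{\rm max}}\right)\|{\bf X}_*^T{\bf y}\|_2^2\geq 0.
\end{align*}
Once this nonnegativity is in hand, the optimal-$t^\star$ substitution closes the proof in one line.
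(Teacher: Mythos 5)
Your proposal is correct and follows essentially the same route as the paper, which omits this proof precisely because it is the verbatim adaptation of the argument for Theorem~\ref{thm:Lasso_EDPP_estimation}: firm nonexpansiveness applied to ${\bf y}/\lambda$ and the ray $\overline{\theta}_{\lambda_0}(t)$, completion of the square, and minimization over $t\geq0$ justified by $\langle\overline{\bf v}_1,\overline{\bf v}_2\rangle\geq0$. Your verification of that nonnegativity (including the direct computation $(\tfrac{1}{\lambda}-\tfrac{1}{\overline{\lambda}_{\rm max}})\|{\bf X}_*^T{\bf y}\|_2^2\geq0$ at $\lambda_0=\overline{\lambda}_{\rm max}$) matches the structure of the paper's Lasso-case argument and is sound.
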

	
	We omit the proof of Theorem \ref{thm:gLasso_estimation} since it is exactly the same as the one of Theorem \ref{thm:Lasso_EDPP_estimation}.
	Indeed, Theorem \ref{thm:gLasso_estimation} is equivalent to estimating $\theta^*(\lambda)$ in a ball as follows:
	\begin{align}\label{eqn:gLasso_EDPP_ball}
		\theta^*(\lambda)\in B\left(\theta^*(\lambda_0)+\frac{1}{2}\overline{\bf v}_2^{\perp}(\lambda,\lambda_0),\frac{1}{2}\|\overline{\bf v}_2^{\perp}(\lambda,\lambda_0)\|_2\right).
	\end{align}
	Based on this estimation and (\ref{rule2'}), we immediately have the following result.
	
	\begin{theorem}\label{theorem:fundamental_rule_glasso}
		For the group Lasso problem, assume the dual optimal solution $\theta^*(\cdot)$ is known at $\lambda_0\in(0,\overline{\lambda}_{\rm max}]$, and $\lambda\in(0,\lambda_0]$. Then $\beta_g^{\ast}(\lambda)=0$ if the following holds
		\begin{equation}\label{equation:gDPP}
			\left\|{\bf X}_g^T\left(\theta^*(\lambda_0)+\frac{1}{2}\overline{\bf v}_2^{\perp}(\lambda,\lambda_0)\right)\right\|_2<\sqrt{n_g}-\frac{1}{2}\|\overline{\bf v}_2^{\perp}(\lambda,\lambda_0)\|_2\|{\bf X}_g\|_2.
		\end{equation}
	\end{theorem}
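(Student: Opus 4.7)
The plan is to follow the three-step framework laid out in Section \ref{subsection:fundamental_DPP} and specialised to group Lasso via (\ref{rule2'}). Step 1, locating $\theta^*(\lambda)$, is already supplied by Theorem \ref{thm:gLasso_estimation}, which by \eqref{eqn:gLasso_EDPP_ball} confines $\theta^*(\lambda)$ to the ball $B(\overline{\bf c},\overline{\rho})$ with centre $\overline{\bf c}:=\theta^*(\lambda_0)+\tfrac{1}{2}\overline{\bf v}_2^{\perp}(\lambda,\lambda_0)$ and radius $\overline{\rho}:=\tfrac{1}{2}\|\overline{\bf v}_2^{\perp}(\lambda,\lambda_0)\|_2$. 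So only Steps 2 and 3 remain.

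For Step 2 I need to upper-bound $\sup_{\theta\in B(\overline{\bf c},\overline{\rho})}\|{\bf X}_g^T\theta\|_2$. Parametrising $\theta=\overline{\bf c}+{\bf v}$ with $\|{\bf v}\|_2\leq\overline{\rho}$, the Euclidean triangle inequality combined with the spectral-norm bound $\|{\bf X}_g^T{\bf v}\|_2\leq\|{\bf X}_g\|_2\|{\bf v}\|_2$ yields
\[
\|{\bf X}_g^T\theta\|_2 \leq \|{\bf X}_g^T\overline{\bf c}\|_2+\overline{\rho}\,\|{\bf X}_g\|_2.
\]
This is the vector-valued analogue of the scalar identity \eqref{eqn:Lasso_DPP_upper_bound} used in the proof of Theorem \ref{thm:Lasso_DPP}; the only change is that here the triangle inequality gives merely an upper bound rather than an equality, but that is all a safe screening rule requires.

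Step 3 is then automatic. Plugging the bound into (\ref{rule2'}), whenever $\|{\bf X}_g^T\overline{\bf c}\|_2+\overline{\rho}\,\|{\bf X}_g\|_2<\sqrt{n_g}$ we have $\sup_{\theta\in B(\overline{\bf c},\overline{\rho})}\|{\bf X}_g^T\theta\|_2<\sqrt{n_g}$, and since $\theta^*(\lambda)\in B(\overline{\bf c},\overline{\rho})$ this forces $\|{\bf X}_g^T\theta^*(\lambda)\|_2<\sqrt{n_g}$, which in turn gives $\beta_g^*(\lambda)=0$ by (\ref{rule2}). Rewriting the condition with the definitions of $\overline{\bf c}$ and $\overline{\rho}$ recovers \eqref{equation:gDPP} verbatim.

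I do not foresee any real obstacle: the substantive content has been front-loaded into Theorem \ref{thm:gLasso_estimation} (via Lemma \ref{lemma:gLasso_projection_ray} and the firmly nonexpansive argument), so the remaining argument is a short triangle-inequality calculation parallel to the Lasso proof of Theorem \ref{thm:Lasso_DPP}. The only conceptual adjustment is the replacement of the sharp dual-norm identity $\sup_{\|{\bf v}\|_2\leq r}|{\bf x}_i^T{\bf v}|=r\|{\bf x}_i\|_2$ by the operator-norm upper bound $\sup_{\|{\bf v}\|_2\leq r}\|{\bf X}_g^T{\bf v}\|_2\leq r\|{\bf X}_g\|_2$, which is what makes the right-hand side of \eqref{equation:gDPP} involve the spectral norm $\|{\bf X}_g\|_2$ rather than a column norm.
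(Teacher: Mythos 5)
Your proposal is correct and follows essentially the same route as the paper: the paper's proof likewise decomposes $\|{\bf X}_g^T\theta^*(\lambda)\|_2$ via the triangle inequality around the centre ${\bf o}=\theta^*(\lambda_0)+\tfrac{1}{2}\overline{\bf v}_2^{\perp}(\lambda,\lambda_0)$, bounds the deviation term by $\|{\bf X}_g\|_2\cdot\tfrac{1}{2}\|\overline{\bf v}_2^{\perp}(\lambda,\lambda_0)\|_2$ using Theorem \ref{thm:gLasso_estimation}, and invokes the hypothesis \eqref{equation:gDPP} to conclude the total is below $\sqrt{n_g}$, so that (\ref{rule2}) applies. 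Your remark that the sharp dual-norm identity of the scalar case is replaced by a mere operator-norm upper bound is accurate and matches the paper's computation.
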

	
	\begin{proof}
		In view of (\ref{rule2'}), we only need to check if
		\begin{align*}
			\left\|{\bf X}_g^T\theta^{\ast}(\lambda)\right\|_2<\sqrt{n_g}.
		\end{align*}
		To simplify notations, let
		$$
		{\bf o} = \theta^*(\lambda_0)+\frac{1}{2}\overline{\bf v}_2^{\perp}(\lambda,\lambda_0),\hspace{2mm}r = \frac{1}{2}\|\overline{\bf v}_2^{\perp}(\lambda,\lambda_0)\|_2.
		$$
		It is easy to see that
		\begin{align}\label{equation:proof_g_fundamental}
			\left\|{\bf X}_g^T\theta^*(\lambda)\right\|_2&\leq\|{\bf X}_g^T(\theta^{\ast}(\lambda)-{\bf o})\|_2+\|{\bf X}_g^T{\bf o}\|_2\\ \nonumber
			&<\|{\bf X}_g\|_2\|\theta^{\ast}(\lambda)-{\bf o}\|_2 + \sqrt{n_g}-r\|{\bf X}_g\|_2\\ \nonumber
			&\leq r\|{\bf X}_g\|_2 + \sqrt{n_g}-r\|{\bf X}_g\|_2=\sqrt{n_g},
		\end{align}
		which completes the proof. The second and third inequalities in (\ref{equation:proof_g_fundamental}) are due to (\ref{equation:gDPP}) and Theorem \ref{thm:gLasso_estimation}, respectively.
	\end{proof}
	
	In view of \eqref{equation:group_lasso_primal_dual} and Theorem \ref{theorem:fundamental_rule_glasso}, we can derive the EDPP rule to discard the inactive groups for the group Lasso problem as follows.
	
	
	\begin{corollary}
		{\bf EDPP}: For the group Lasso problem (\ref{problem:group_lasso_primal_mt}), suppose we are given a sequence of parameter values $\overline{\lambda}_{\rm max}=\lambda_0>\lambda_1>\ldots>\lambda_{\mathcal{K}}$. For any integer $0\leq k<\mathcal{K}$, we have $\beta_g^{\ast}(\lambda_{k+1})=0$ if $\beta^{\ast}(\lambda_k)$ is known and the following holds:
		\begin{align*}
			\left\|{\bf X}_g^T\left(\frac{{\bf y}-\sum_{g=1}^{G}{\bf X}_g\beta_g^{\ast}(\lambda_k)}{\lambda_{k}}+\frac{1}{2}\overline{\bf v}_2^{\perp}(\lambda_{k+1},\lambda_k)\right)\right\|_2 <\sqrt{n_g}-\frac{1}{2}\|\overline{\bf v}_2^{\perp}(\lambda_{k+1},\lambda_k)\|_2\|{\bf X}_g\|_2.
		\end{align*}
	\end{corollary}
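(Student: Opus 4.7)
My plan is to obtain this corollary as a direct sequential application of Theorem \ref{theorem:fundamental_rule_glasso}. For each $k$, I will instantiate that theorem with $\lambda_0 := \lambda_k$ and $\lambda := \lambda_{k+1}$, which produces a sufficient condition for $\beta_g^*(\lambda_{k+1}) = 0$ expressed in terms of $\theta^*(\lambda_k)$. The task then reduces to re-expressing $\theta^*(\lambda_k)$ in terms of the known primal solution $\beta^*(\lambda_k)$, so that the rule becomes usable in the sequential setting where only primal solutions are returned by the solver.

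First, I would use the stationarity KKT condition \eqref{equation:group_lasso_primal_dual}, namely ${\bf y} = \sum_{g=1}^{G} {\bf X}_g \beta_g^*(\lambda_k) + \lambda_k \theta^*(\lambda_k)$, and rearrange it to obtain
$$\theta^*(\lambda_k) = \frac{{\bf y} - \sum_{g=1}^{G} {\bf X}_g \beta_g^*(\lambda_k)}{\lambda_k}.$$
Substituting this expression for $\theta^*(\lambda_0)=\theta^*(\lambda_k)$ into the screening inequality in Theorem \ref{theorem:fundamental_rule_glasso} yields exactly the condition stated in the corollary. Note that the quantity $\overline{\bf v}_2^{\perp}(\lambda_{k+1},\lambda_k)$ appearing in the bound depends, via \eqref{eqn:v1_gLasso}, \eqref{eqn:groupLasso_v2}, and \eqref{eqn:groupLasso_v2_perp}, only on ${\bf y}$, $\lambda_k$, $\lambda_{k+1}$, and $\theta^*(\lambda_k)$, so it too is fully computable from $\beta^*(\lambda_k)$ via the same rearrangement.

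Second, I would verify that the hypotheses of Theorem \ref{theorem:fundamental_rule_glasso} hold all along the sequence. Since $\overline{\lambda}_{\rm max} = \lambda_0 > \lambda_1 > \ldots > \lambda_{\mathcal{K}}$, each $\lambda_k$ sits in $(0,\overline{\lambda}_{\rm max}]$ and each $\lambda_{k+1}$ sits in $(0,\lambda_k]$, as required. The base case $k=0$ is the only slightly delicate point, since $\lambda_0 = \overline{\lambda}_{\rm max}$ is the boundary value; here I would invoke \eqref{eqn:groupLasso_beta_0} to get $\beta^*(\overline{\lambda}_{\rm max}) = 0$, and then observe that the rearranged KKT formula above correctly reproduces $\theta^*(\overline{\lambda}_{\rm max}) = {\bf y}/\overline{\lambda}_{\rm max}$, which is consistent with \eqref{eqn:groupLasso_theta_closed_form}. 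Thus one uniform formula covers every $k$, including this boundary case.

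I do not anticipate a genuine obstacle: the derivation is essentially a substitution supported by the stationarity KKT equation and the already-proved Theorem \ref{theorem:fundamental_rule_glasso}. The only conceptual caveat worth flagging is that the sequential rule requires $\beta^*(\lambda_k)$ to be the \emph{exact} primal optimum so that the reconstructed $\theta^*(\lambda_k)$ is exact as well; an inexact preceding solution would invalidate the safety guarantee. This, however, is the standard assumption underlying every sequential screening rule discussed earlier in the paper, so the proof can proceed under it without further comment.
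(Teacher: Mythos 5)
Your proposal is correct and follows exactly the route the paper takes: the corollary is obtained by instantiating Theorem \ref{theorem:fundamental_rule_glasso} with $\lambda_0=\lambda_k$, $\lambda=\lambda_{k+1}$, and substituting $\theta^*(\lambda_k)=({\bf y}-\sum_{g}{\bf X}_g\beta_g^*(\lambda_k))/\lambda_k$ from the KKT condition \eqref{equation:group_lasso_primal_dual}. Your additional checks of the hypotheses along the sequence and of the boundary case $\lambda_0=\overline{\lambda}_{\rm max}$ are consistent with \eqref{eqn:groupLasso_theta_closed_form} and add nothing that conflicts with the paper's derivation.
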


	\section{Experiments}\label{section:experiments}
	
	In this section, we evaluate the proposed EDPP rules for Lasso and group Lasso on both synthetic and real data sets. To measure the performance of our screening rules, we compute the {\it rejection ratio} and {\it speedup} (please refer to Section \ref{subsubsection:EDPP} for details). Because the EDPP rule is safe, i.e., no active features/groups will be mistakenly discarded, the rejection ratio will be less than one.
	
	In Section \ref{subsection:experiment_lasso}, we conduct two sets of experiments to compare the performance of EDPP against several state-of-the-art screening methods. We first compare the performance of the basic versions of EDPP, DOME, SAFE, and strong rule. Then, we focus on the sequential versions of EDPP, SAFE, and strong rule. Notice that, SAFE and EDPP are safe. However, strong rule may mistakenly discard features with nonzero coefficients in the solution. Although DOME is also safe for the Lasso problem, it is unclear if there exists a sequential version of DOME. Recall that, real applications usually favor the sequential screening rules because we need to solve a sequence of of Lasso problems to determine an appropriate parameter value \citep{Tibshirani11}. Moreover, DOME assumes special structure on the data, i.e., each feature and the response vector should be normalized to have unit length.
	
	In Section \ref{subsection:experiment_glasso}, we compare EDPP with strong rule for the group Lasso problem on synthetic data sets. We are not aware of any {\it safe} screening rules for the group Lasso problem at this point. For SAFE and Dome, it is not straightforward to extend them to the group Lasso problem.

	\subsection{EDPP for the Lasso Problem}\label{subsection:experiment_lasso}
	
	For the Lasso problem, we first compare the performance of the basic versions of EDPP, DOME, SAFE and strong rule in Section  \ref{subsubsection:exp_basic_Lasso}. Then, we compare the performance of the sequential versions of EDPP, SAFE and strong rule in Section \ref{subsubsection:exp_seq_Lasso}.
	
	\subsubsection{Evaluation of the Basic EDPP Rule}\label{subsubsection:exp_basic_Lasso}
	
	In this section, we perform experiments on six real data sets to compare the performance of the basic versions of SAFE, DOME, strong rule and EDPP. Briefly speaking, suppose that we are given a parameter value $\lambda$. Basic versions of the aforementioned screening rules always make use of $\beta^*(\lambda_{\rm max})$ to identify the zero components of $\beta^*(\lambda)$. Take EDPP for example. The basic version of EDPP can be obtained by replacing $\beta^*(\lambda_k)$ and ${\bf v}_2^{\perp}(\lambda_{k+1},\lambda_k)$ with $\beta^*(\lambda_0)$ and ${\bf v}_2^{\perp}(\lambda_{k},\lambda_0)$, respectively, in (\ref{ineqn:Lasso_EDPPs}) for all $k=1,\ldots,\mathcal{K}$.
	
	In this experiment, we report the rejection ratios of the basic SAFE, DOME, strong rule and EDPP along a sequence of $100$ parameter values equally spaced on the $\lambda/\lambda_{\rm max}$ scale from $0.05$ to $1.0$. We note that DOME requires that all features of the data sets have unit length. Therefore, to compare the performance of DOME with SAFE, strong rule and EDPP, we normalize the features of all the data sets used in this section. However, it is worthwhile to mention that SAFE, strong rule and EDPP do not assume any specific structures on the data set.
	The data sets used in this section are listed as follows:
	\begin{enumerate}
		\item[a)] Colon Cancer data set \citep{Alon1999};
		\item[b)] Lung Cancer data set \citep{Bhattacharjee2001};
		\item[c)] Prostate Cancer data set \citep{Petricoin2002};
		\item[d)] PIE face image data set \citep{Sim2003,Cai2007};
		\item[e)] MNIST handwritten digit data set \citep{Lecun1998};
		\item[f)] COIL-100 image data set \citep{Nene1996a,Cai2011}.
	\end{enumerate}
	
	\begin{figure*}[ht]
		\centering{
			\subfigure[Colon Cancer, ${\bf X}\in\mathbb{R}^{62\times 2000}$] { \label{fig:ColonCancer}
				\includegraphics[width=0.31\columnwidth]{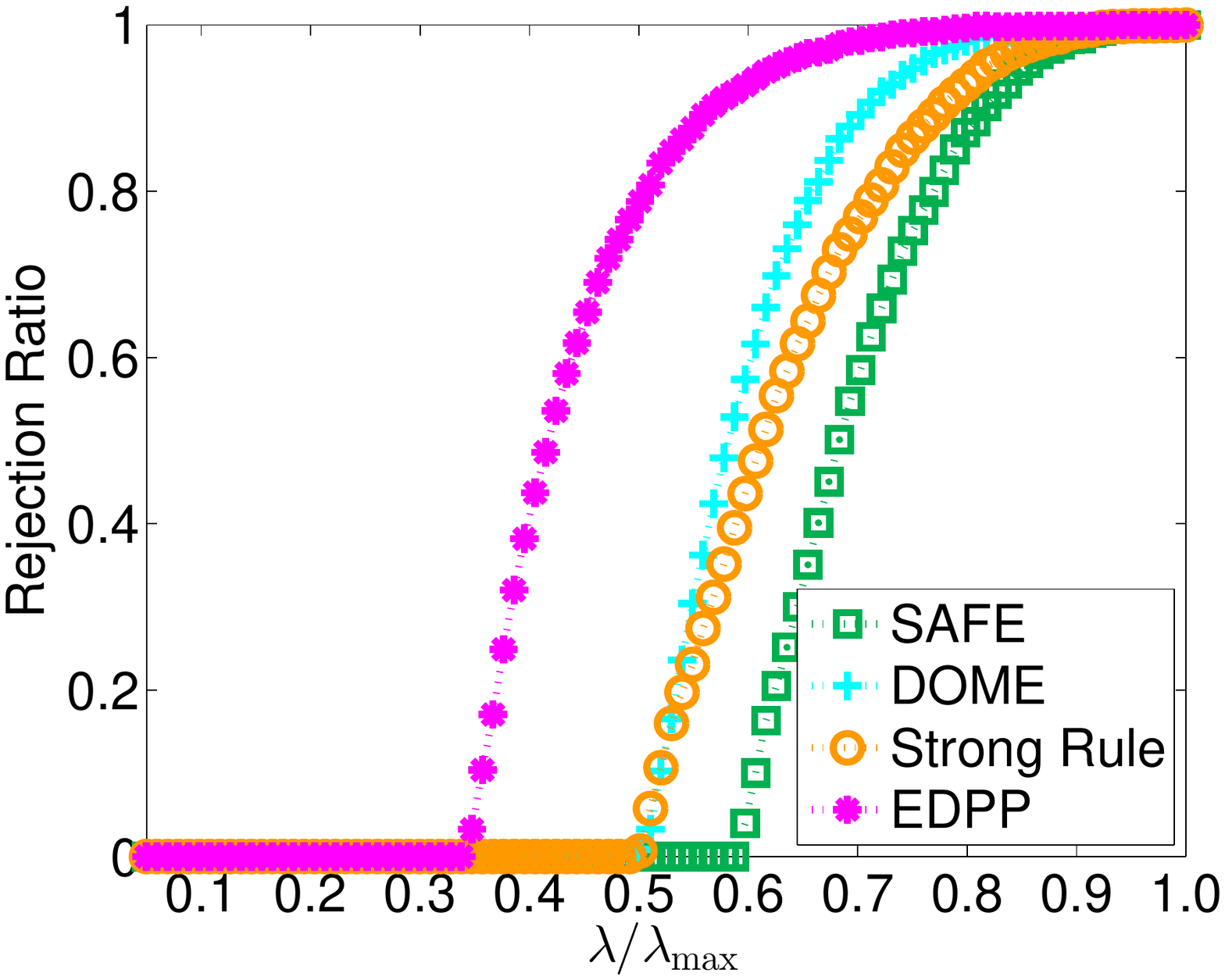}
			}
			\subfigure[Lung Cancer, ${\bf X}\in\mathbb{R}^{203\times 12600}$] { \label{fig:LungCancer}
				\includegraphics[width=0.31\columnwidth]{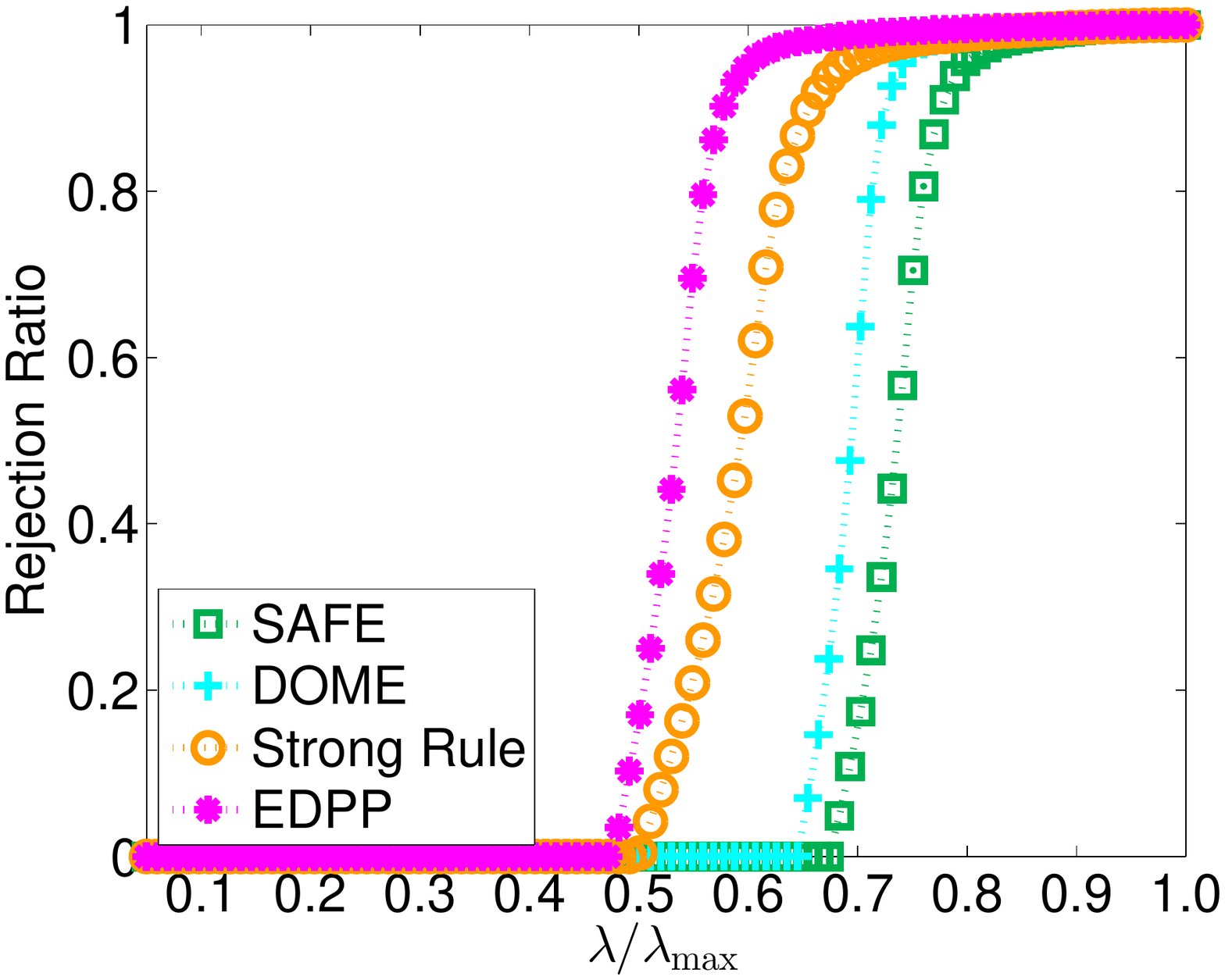}
			}
			\subfigure[{\scriptsize Prostate Cancer}, ${\bf X}\in\mathbb{R}^{132\times 15154}$] { \label{fig:prostatecancer_basic}
				\includegraphics[width=0.31\columnwidth]{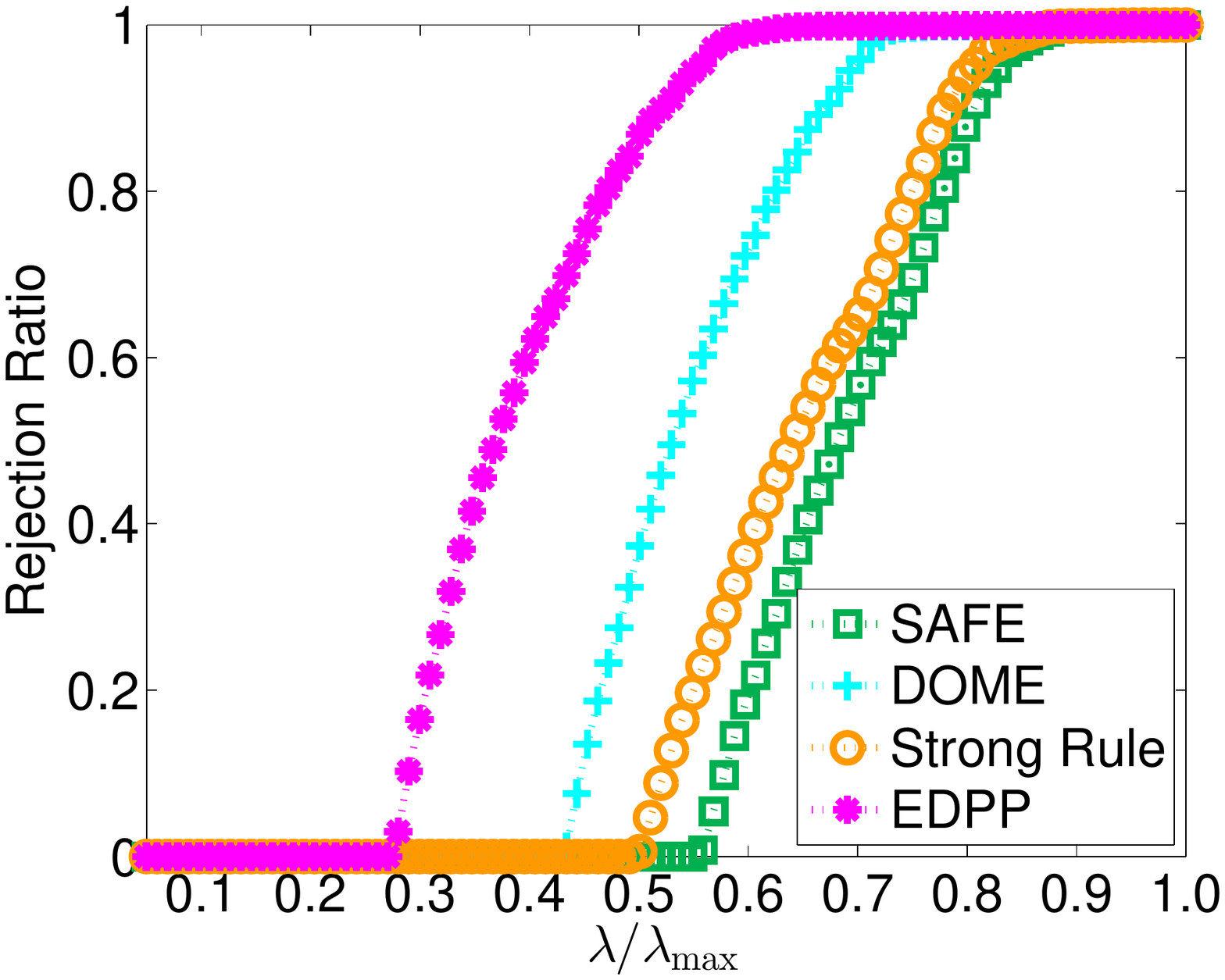}
			}\\
			\vspace{5mm}
			\subfigure[PIE, ${\bf X}\in\mathbb{R}^{1024\times 11553}$] { \label{fig:pie_basic}
				\includegraphics[width=0.31\columnwidth]{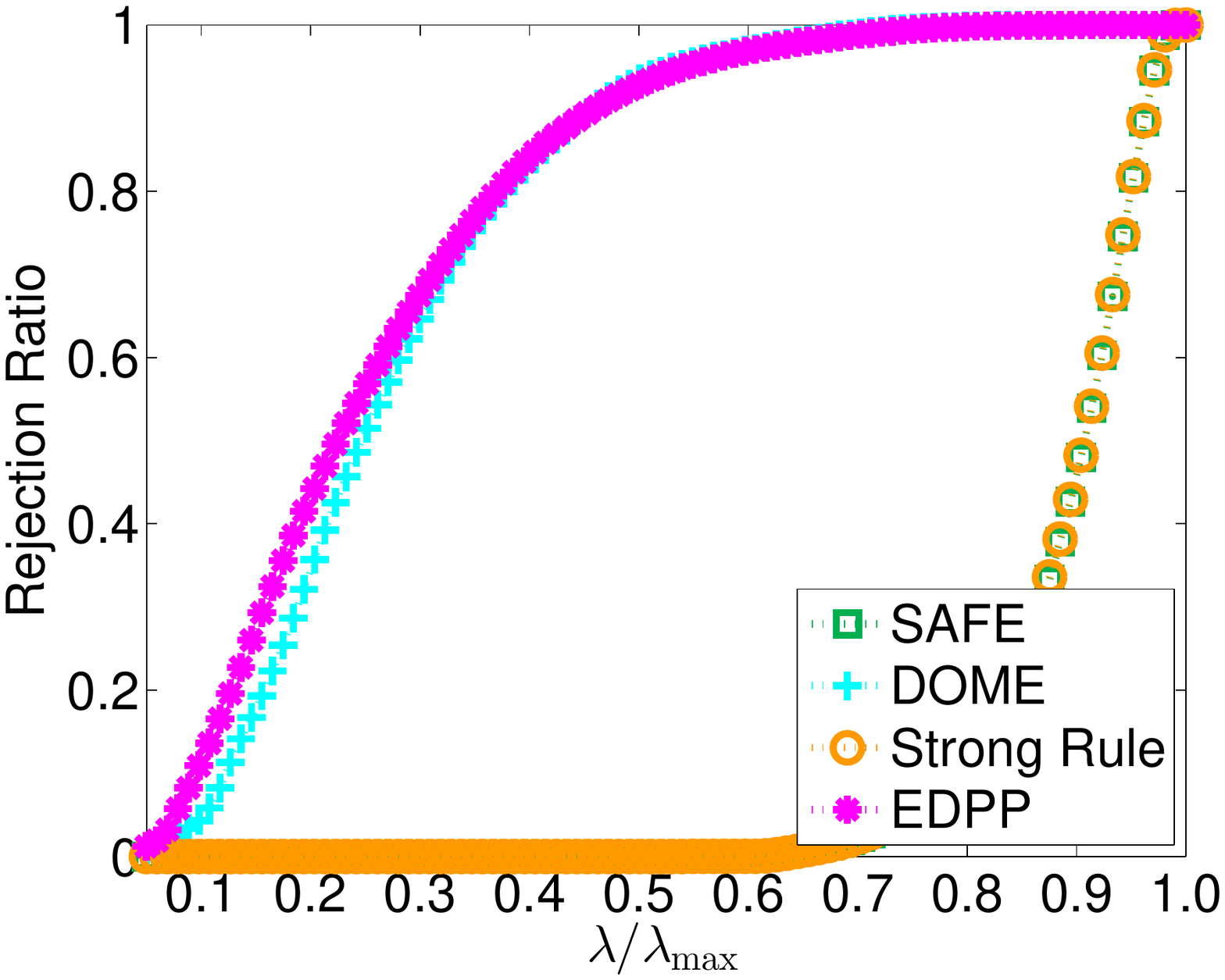}
			}
			\subfigure[MNIST, ${\bf X}\in\mathbb{R}^{784\times 50000}$] { \label{fig:mnist_basic}
				\includegraphics[width=0.31\columnwidth]{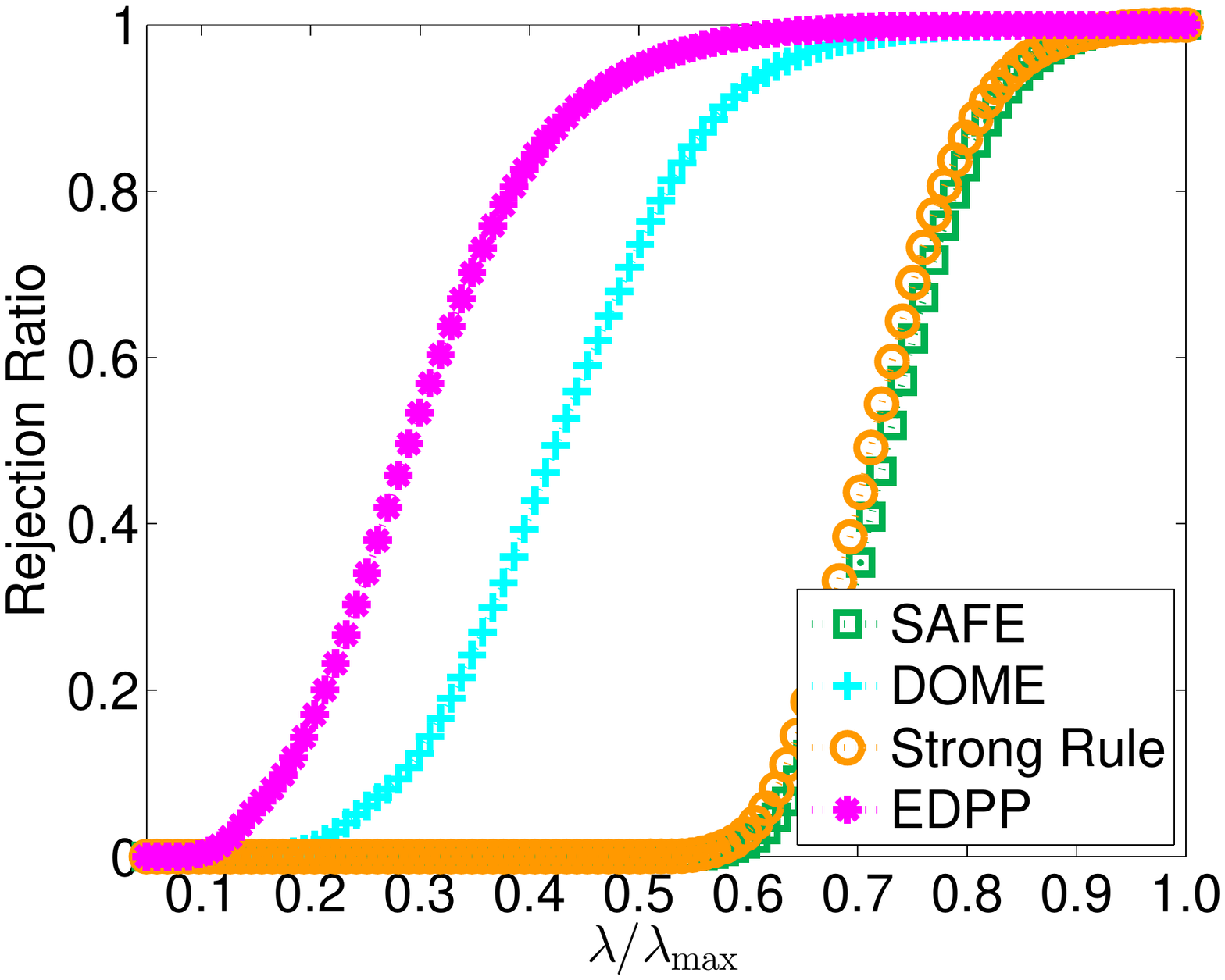}
			}
			\subfigure[COIL-100, ${\bf X}\in\mathbb{R}^{1024\times 7199}$] { \label{fig:olivettifaces}
				\includegraphics[width=0.31\columnwidth]{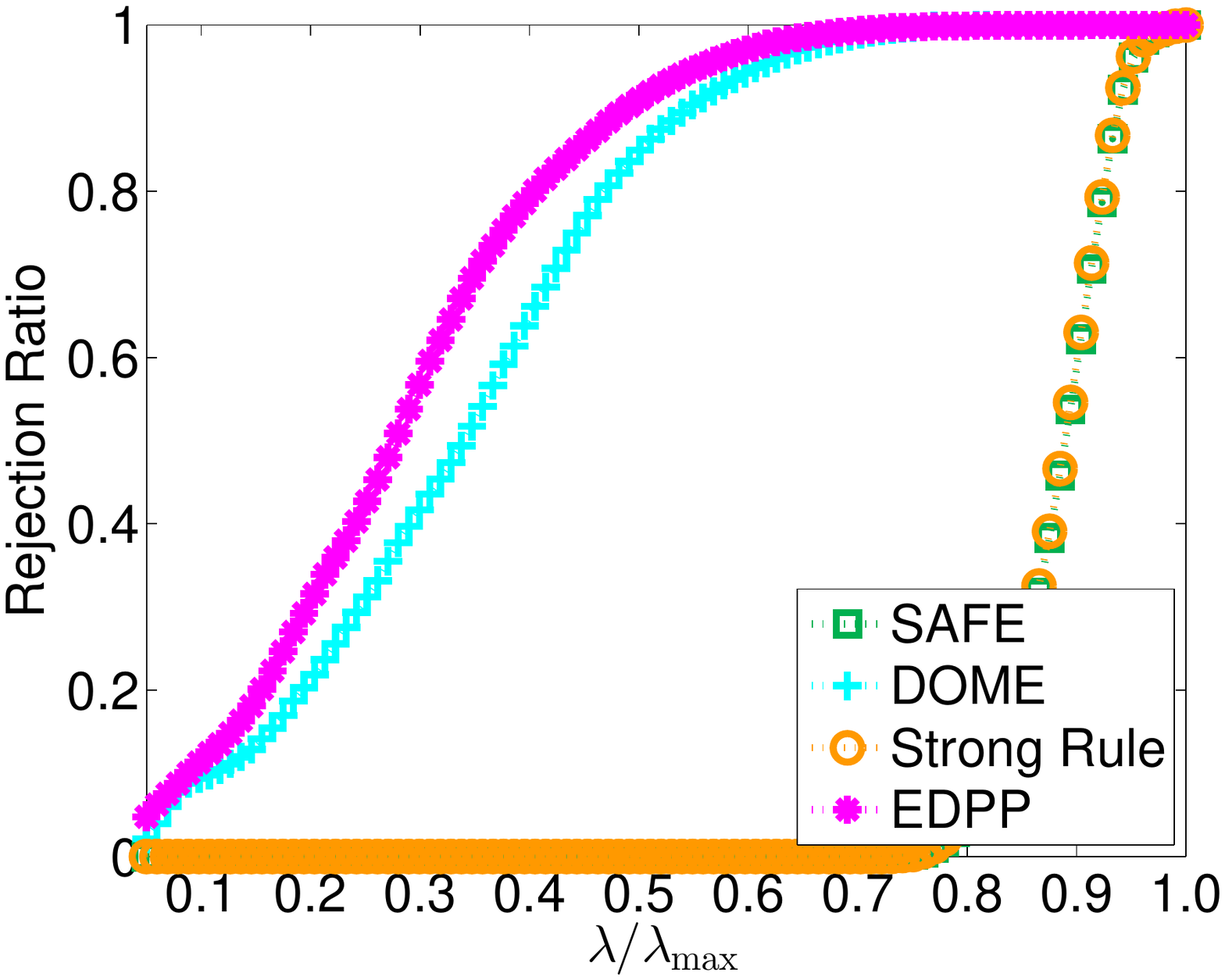}
			}
		}
		\caption{Comparison of basic versions of SAFE, DOME, Strong Rule and EDPP on six real data sets.  }
		\label{fig:lasso_basic}
	\end{figure*}

	\textbf{The Colon Cancer Data Set} This data set contains gene expression information of 22 normal tissues and 40 colon cancer tissues, and each has 2000 gene expression values.
	
	\vspace{2mm}
	\textbf{The Lung Cancer Data Set} This data set contains gene expression information of 186 lung tumors and 17 normal lung specimens. Each specimen has 12600 expression values.
	
	\vspace{2mm}
	\textbf{The COIL-100 Image Data Set} The data set consists of images of 100 objects. The images of each
	object are taken every 5 degree by rotating the object, yielding 72 images per object. The dimension of each image is $32\times 32$. In each trial, we randomly select one image as the response vector and use the remaining ones as the data matrix. We run 100 trials and report the average performance of the screening rules.
	
	\vspace{2mm}
	The description and the experimental settings for the Prostate Cancer data set, the PIE face image data set and the MNIST handwritten digit data set are given in Section \ref{subsubsection:EDPP}.
	
	\figref{fig:lasso_basic} reports the rejection ratios of the basic versions of SAFE, DOME, strong rule and EDPP. We can see that EDPP significantly outperforms the other three screening methods on five of the six data sets, i.e., the Colon Cancer, Lung Cancer, Prostate Cancer, MNIST, and COIL-100 data sets. On the PIE face image data set, EDPP and DOME provide similar performance and both significantly outperform SAFE and strong rule.
	
	However, as pointed out by \citet{Tibshirani11}, the real strength of screening methods stems from their sequential versions. The reason is because the optimal parameter value is unknown in real applications. Typical approaches for model selection usually involve solving the Lasso problems many times along a sequence of parameter values. Thus, the sequential screening methods are more suitable in facilitating the aforementioned scenario and more useful than their basic-version counterparts in practice \citep{Tibshirani11}.

	\subsubsection{Evaluation of the Sequential EDPP Rule}\label{subsubsection:exp_seq_Lasso}
	In this section, we compare the performance of the sequential versions of SAFE, strong rule and EDPP by the rejection ratio and speedup. We first perform experiments on two synthetic data sets. We then apply the three screening rules to six real data sets.
	
	\vspace{2mm}
	\noindent\textbf{Synthetic Data Sets}
	
	First, we perform experiments on several synthetic problems, which have been commonly used in the sparse learning literature \citep{Bondell2008,Zou2005,Tibshirani1996}. We simulate data from the true model
	\begin{align}\label{eqn:regression_model}
		{\bf y}={\bf X}\beta^*+\sigma\epsilon,\hspace{2mm}\epsilon\sim N(0,1).
	\end{align}
	We generate two data sets with $250\times 10000$ entries: Synthetic 1 and Synthetic 2. For Synthetic 1,  the entries of the data matrix ${\bf X}$ are i.i.d. standard Gaussian with
	pairwise correlation zero, i.e., ${\rm corr}({\bf x}_i,{\bf x}_i)=0$. For Synthetic 2, the entries of the data matrix ${\bf X}$ are drawn from i.i.d. standard Gaussian with pairwise correlation $0.5^{|i-j|}$, i.e., ${\rm corr}({\bf x}_i,{\bf x}_j)=0.5^{|i-j|}$. To generate the response vector ${\bf y}\in\mathbb{R}^{250}$ by the model in (\ref{eqn:regression_model}), we need to set the parameter $\sigma$ and construct the ground truth $\beta^*\in\mathbb{R}^{10000}$. Throughout this section, $\sigma$ is set to be $0.1$. To construct $\beta^*$, we randomly select $\overline{p}$ components which are populated from a uniform $[-1,1]$ distribution,  and set the remaining ones as $0$. After we generate the data matrix ${\bf X}$ and the response vector ${\bf y}$, we run the solver with or without screening rules to solve the Lasso problems along a sequence of $100$ parameter values equally spaced on the $\lambda/\lambda_{\rm max}$ scale from $0.05$ to $1.0$. We then run $100$ trials and report the average performance.
	
	\begin{figure*}[th!]
		\centering{
			\includegraphics[width=0.31\columnwidth]{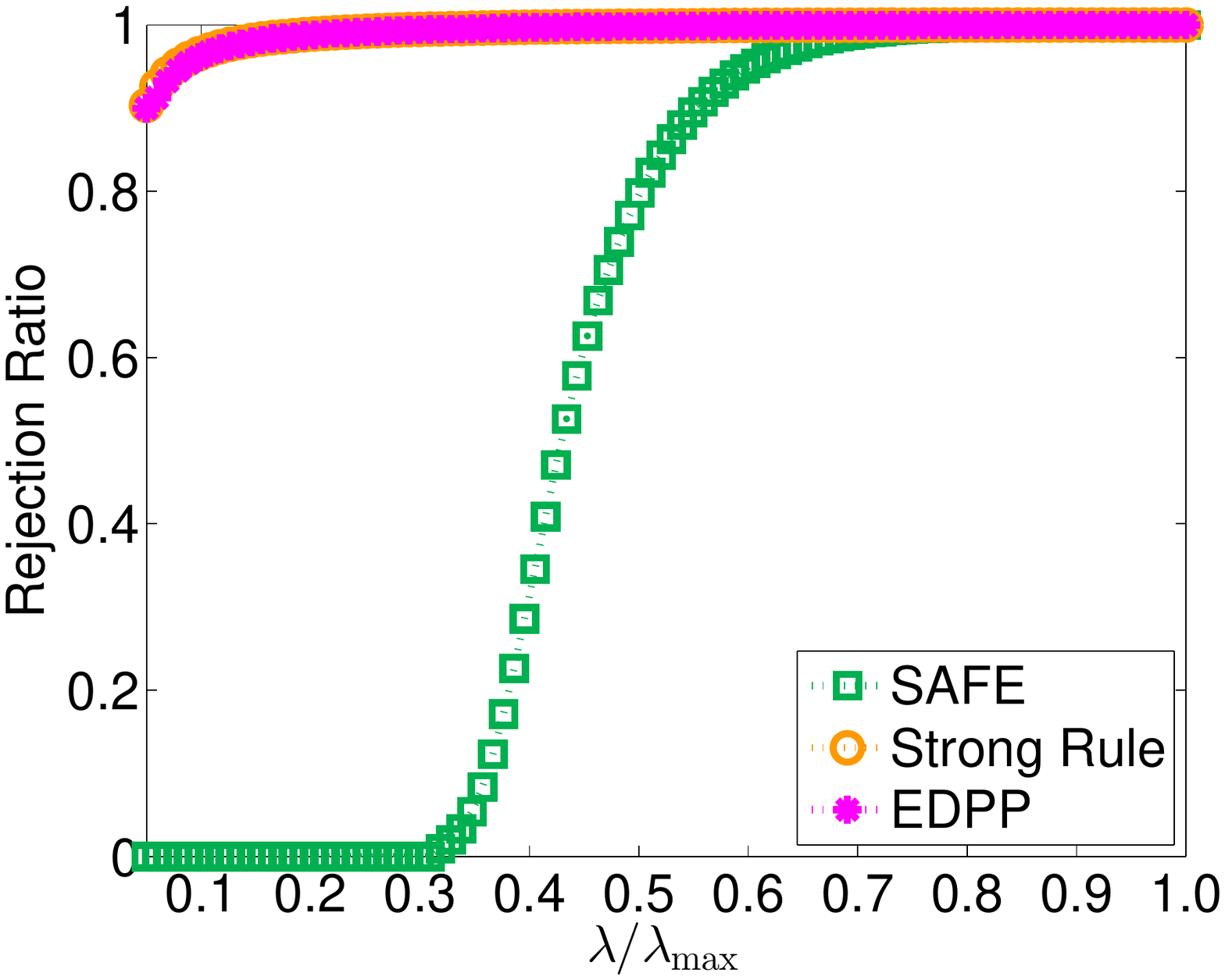}
			\includegraphics[width=0.31\columnwidth]{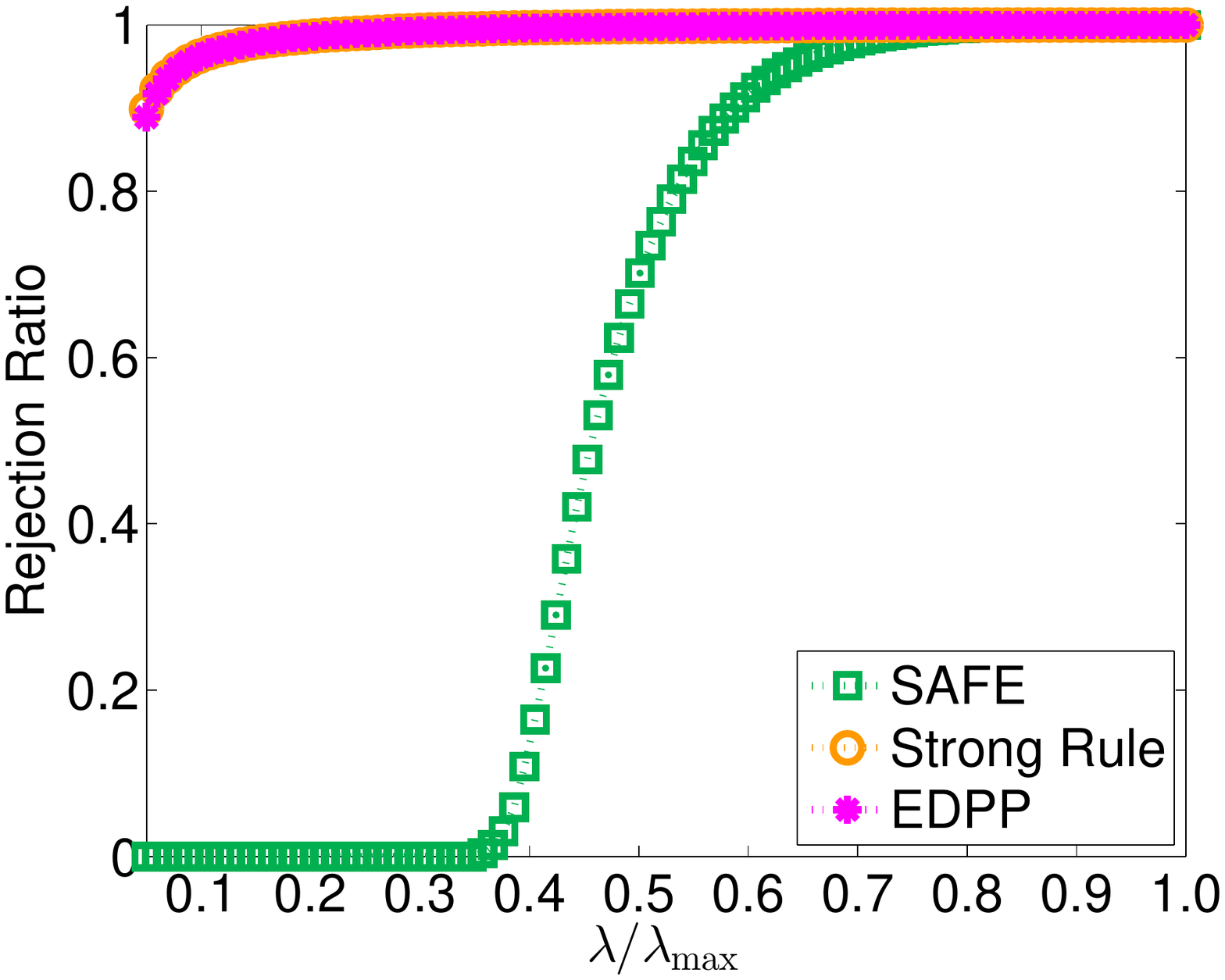}
			\includegraphics[width=0.31\columnwidth]{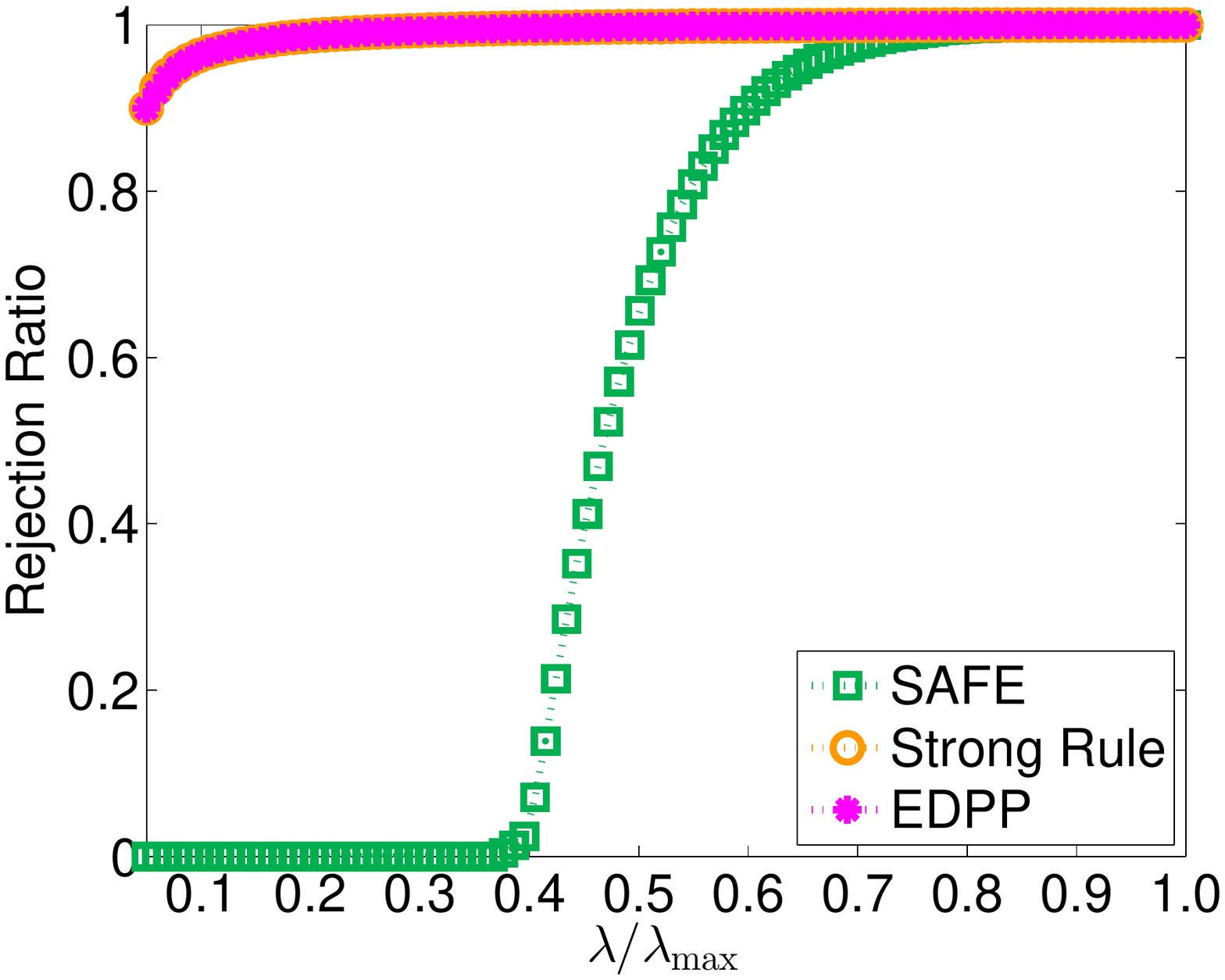}\\
			\subfigure[Synthetic 1, $\overline{p}=100$] { \label{fig:synthetic1_1}
				\includegraphics[width=0.31\columnwidth]{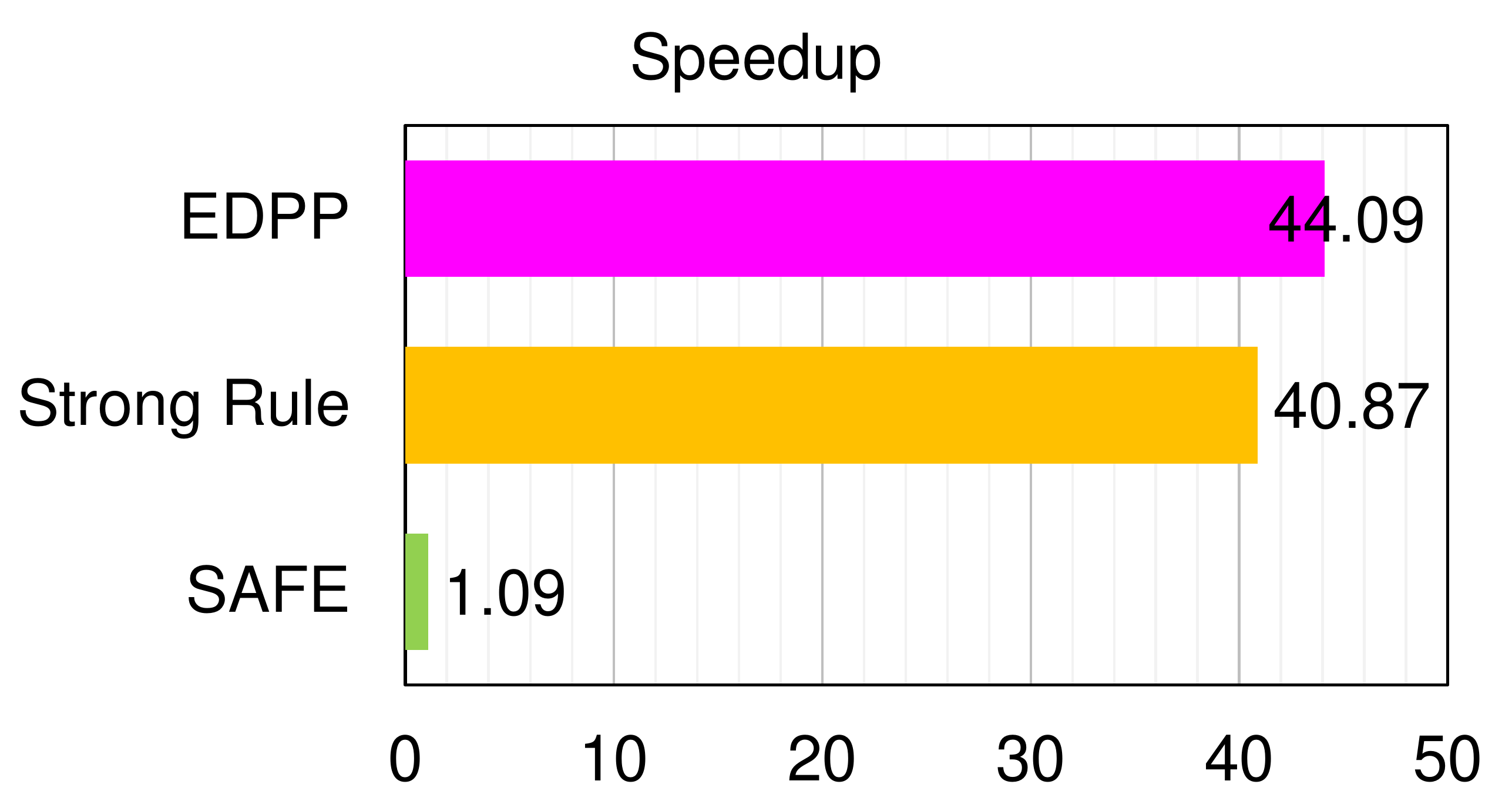}
			}
			\subfigure[Synthetic 1, $\overline{p}=1000$] { \label{fig:synthetic1_2}
				\includegraphics[width=0.31\columnwidth]{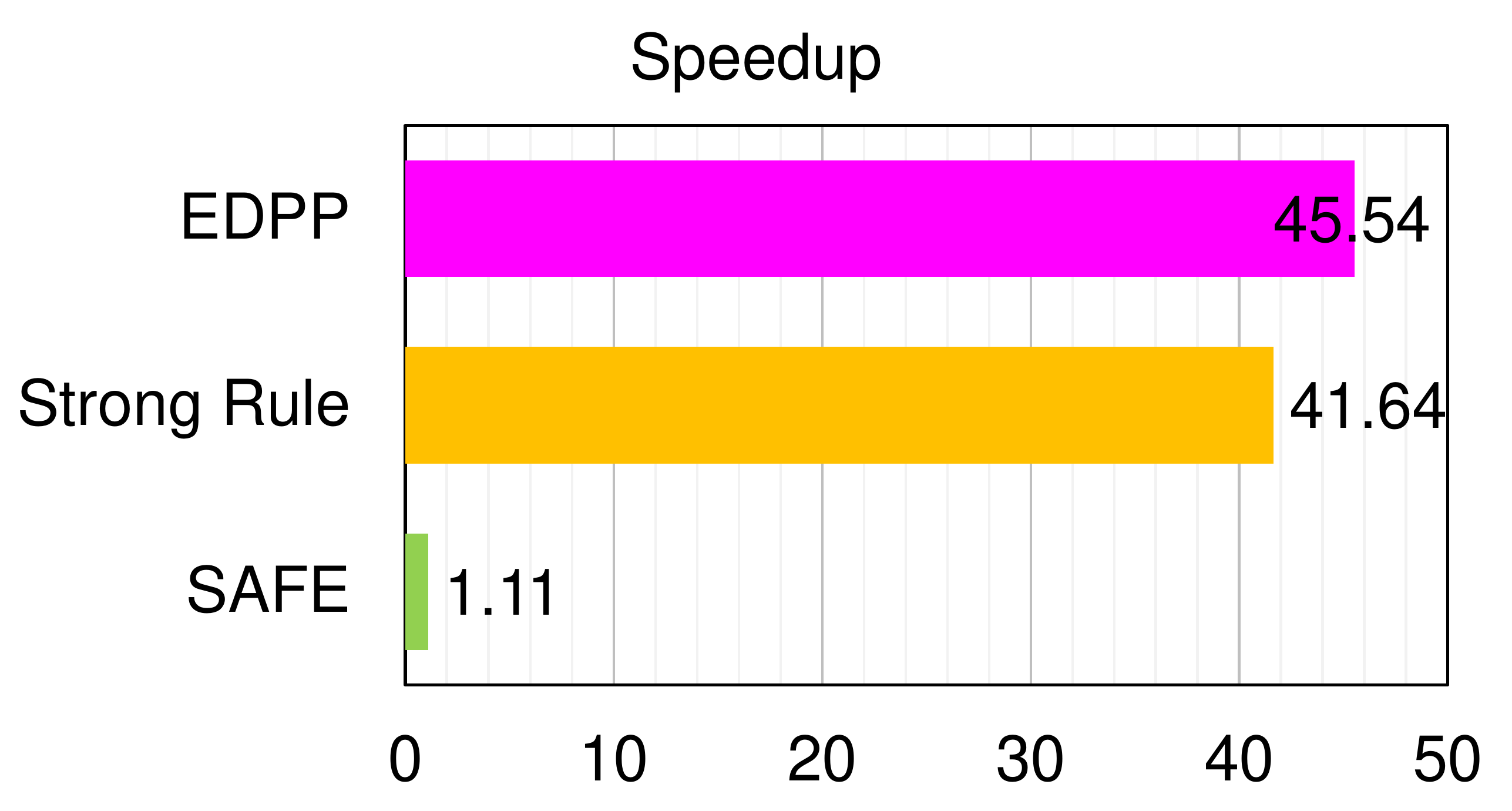}
			}
			\subfigure[Synthetic 1, $\overline{p}=5000$] { \label{fig:synthetic1_3}
				\includegraphics[width=0.31\columnwidth]{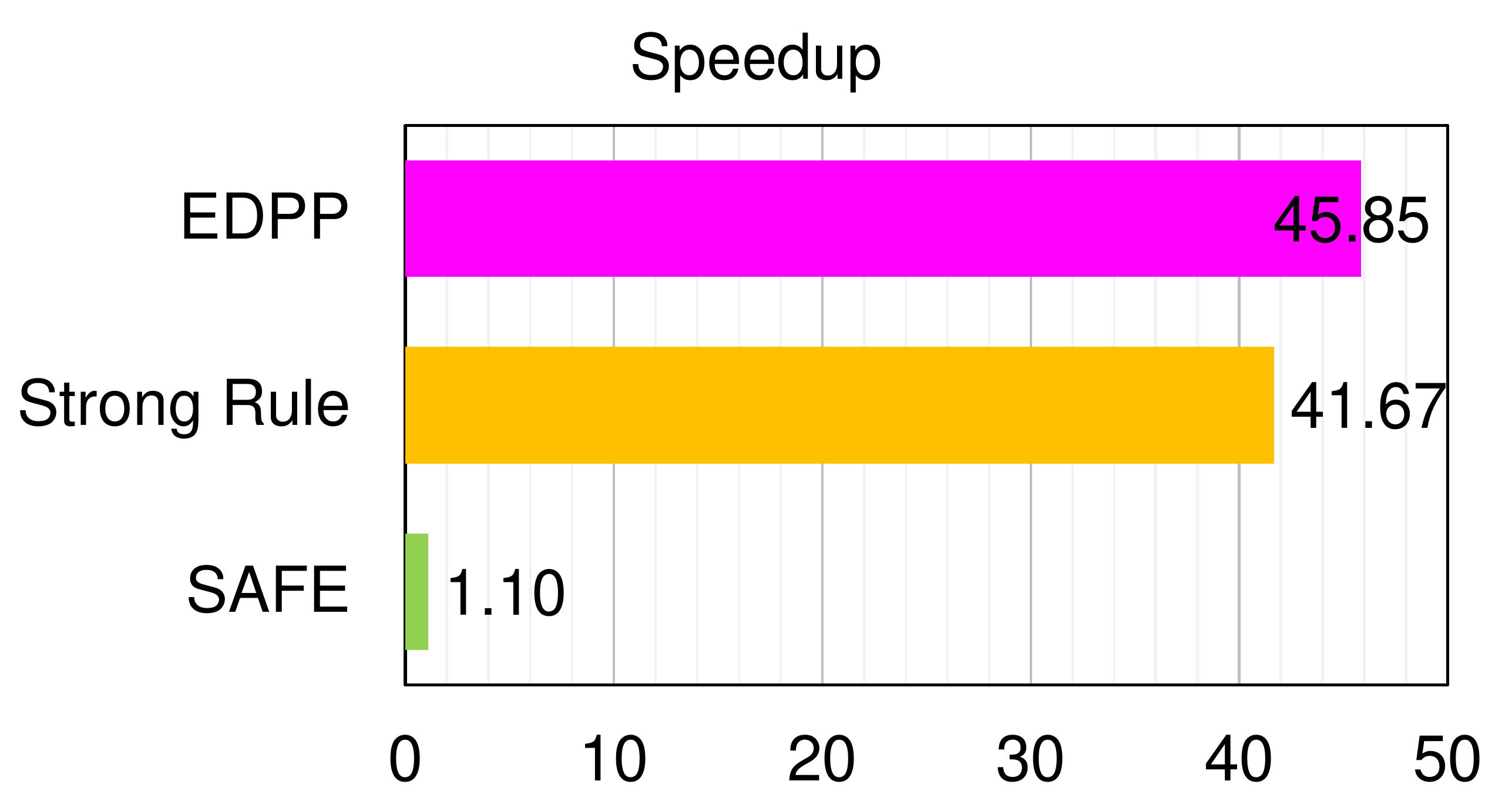}
			}\\
			\vspace{5mm}
			\includegraphics[width=0.31\columnwidth]{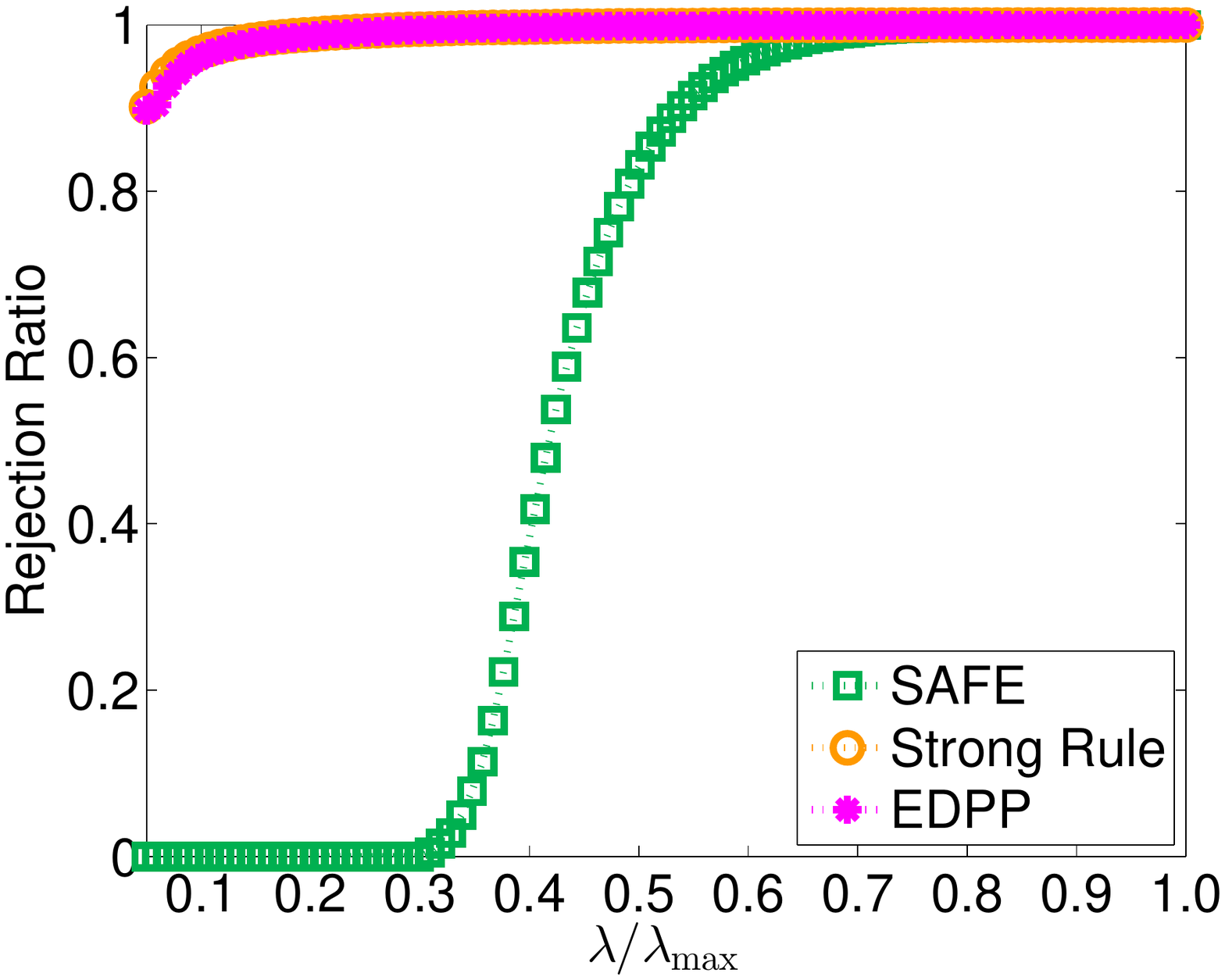}
			\includegraphics[width=0.31\columnwidth]{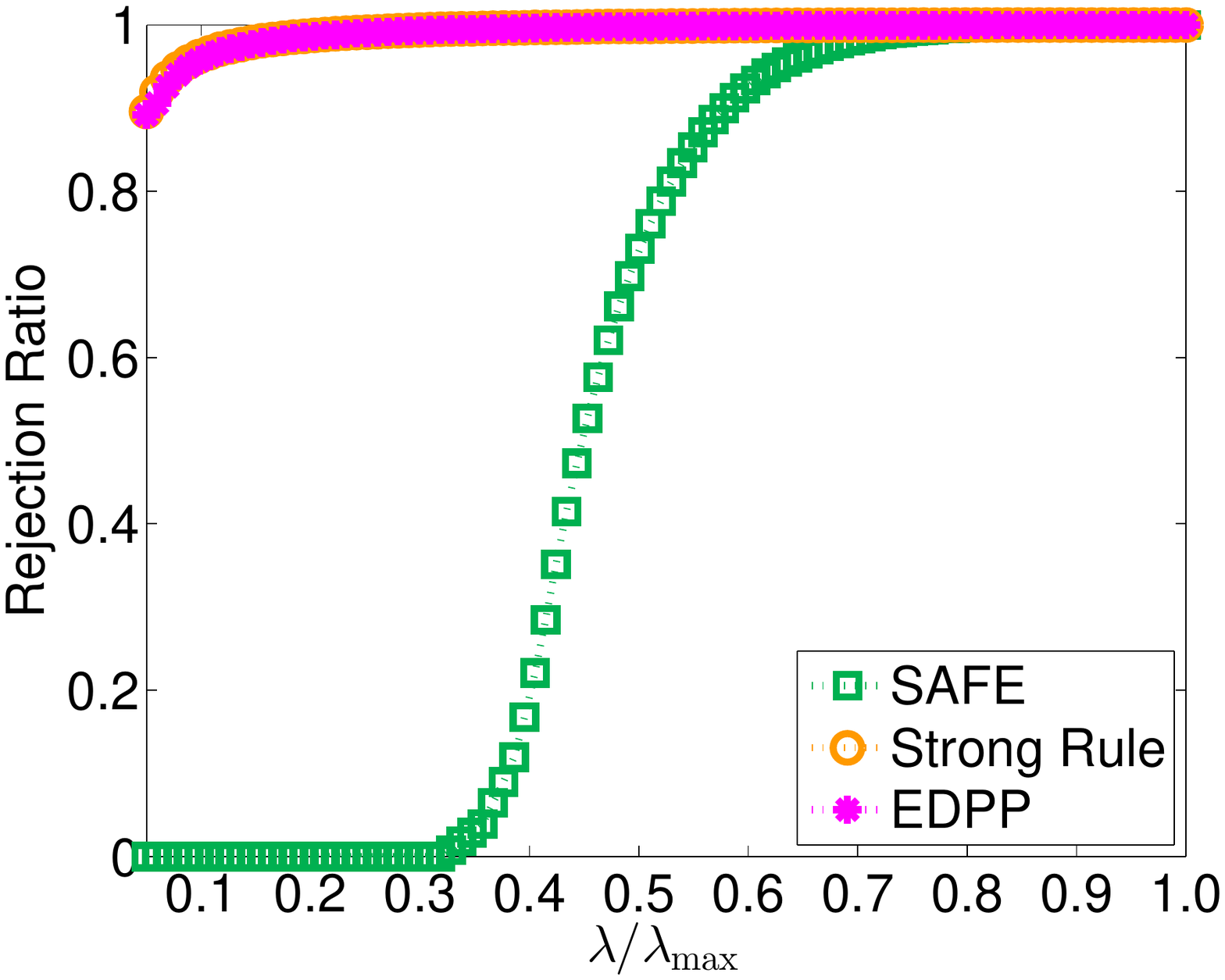}
			\includegraphics[width=0.31\columnwidth]{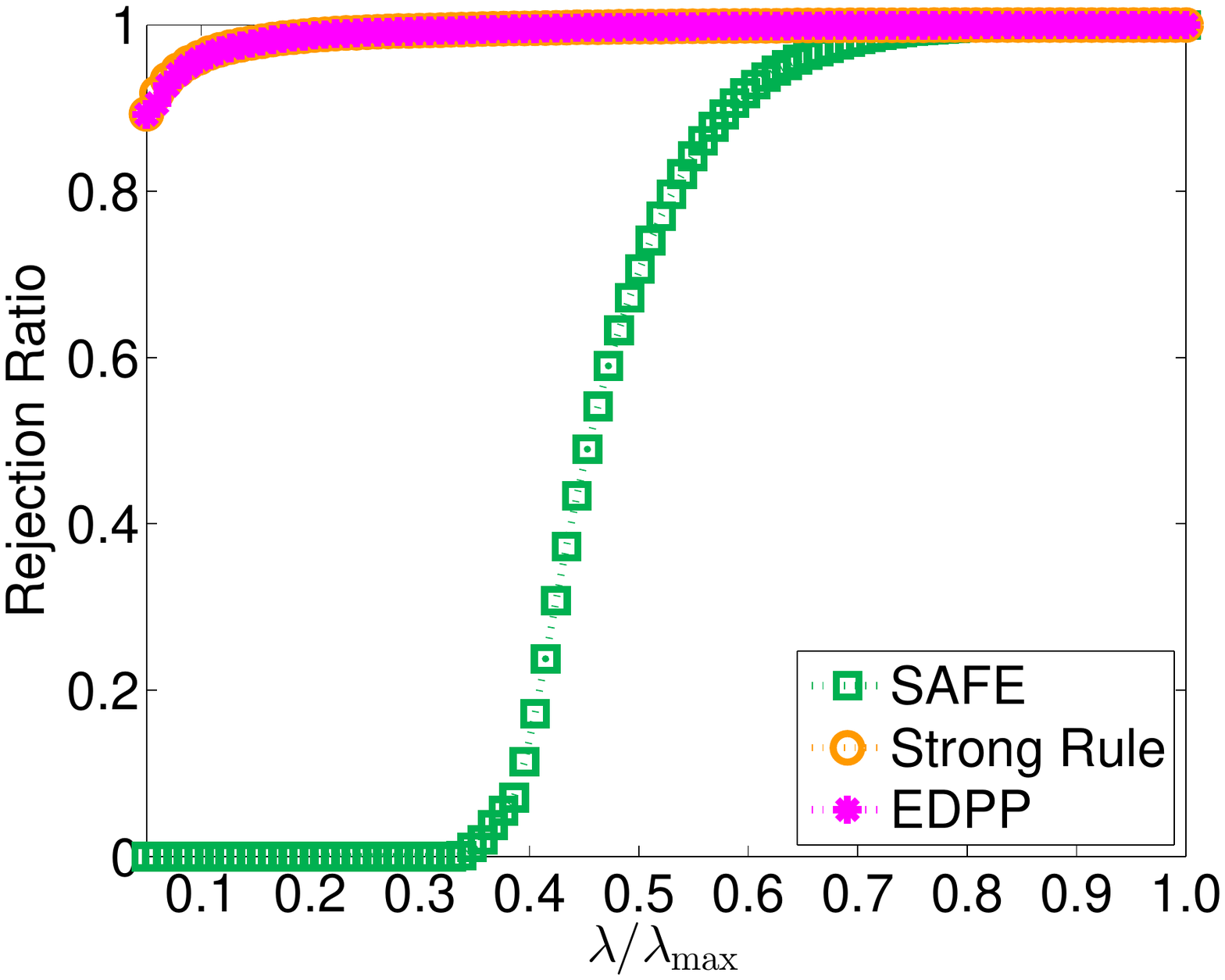}\\
			\subfigure[Synthetic 2, $\overline{p}=100$] { \label{fig:synthetic2_1}
				\includegraphics[width=0.31\columnwidth]{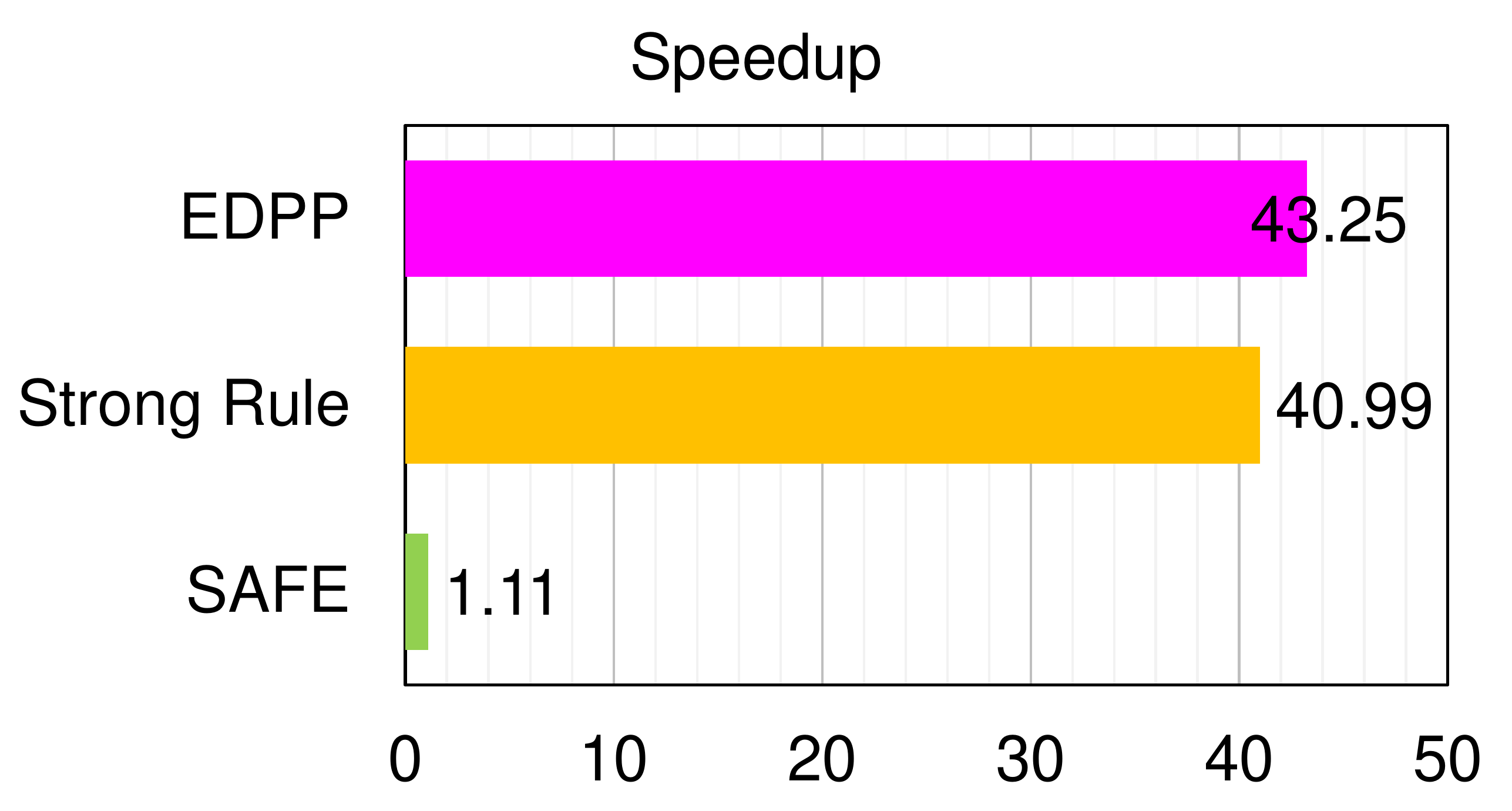}
			}
			\subfigure[Synthetic 2, $\overline{p}=1000$] { \label{fig:synthetic2_2}
				\includegraphics[width=0.31\columnwidth]{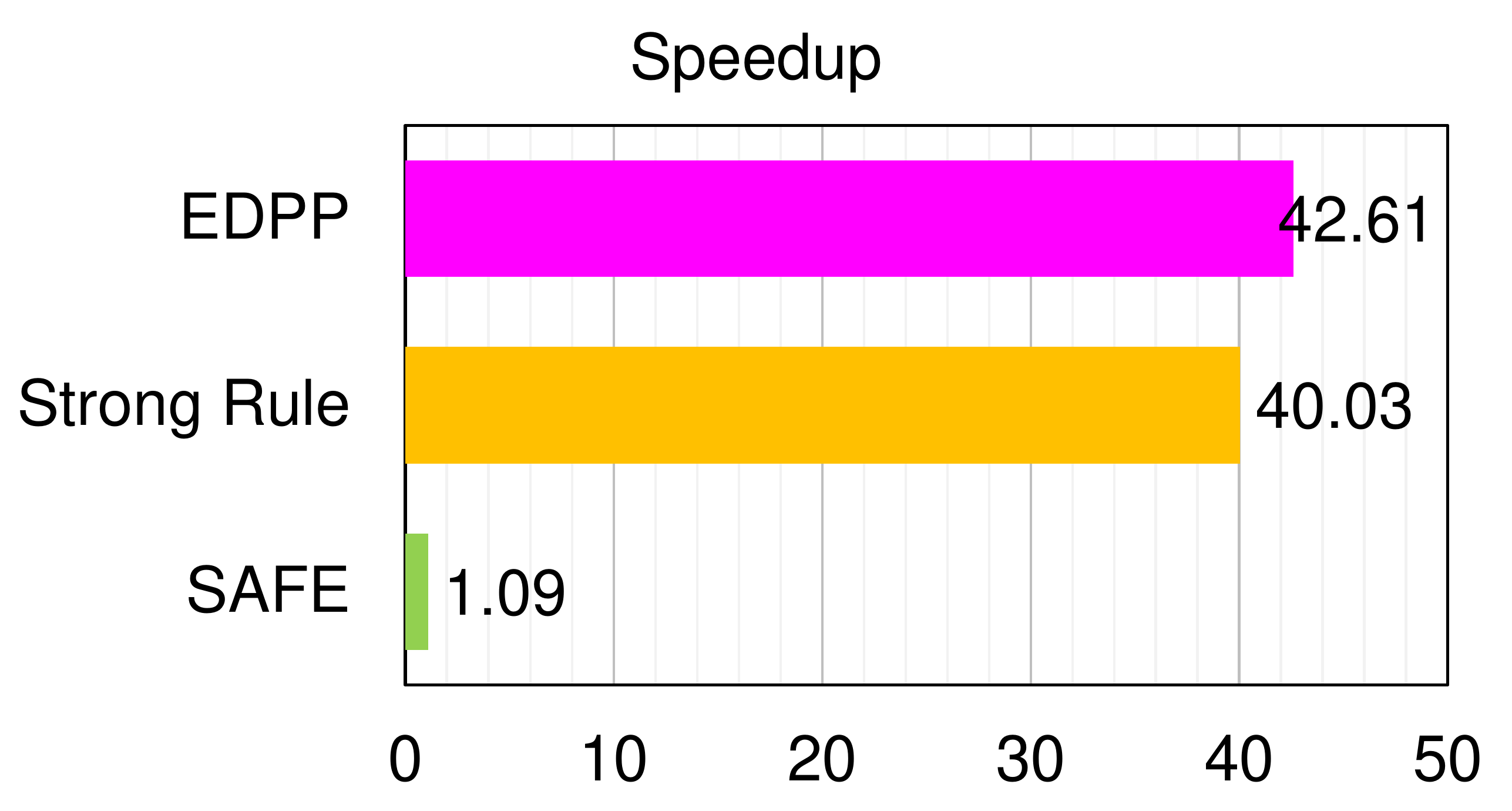}
			}
			\subfigure[Synthetic 2, $\overline{p}=5000$] { \label{fig:synthetic2_3}
				\includegraphics[width=0.31\columnwidth]{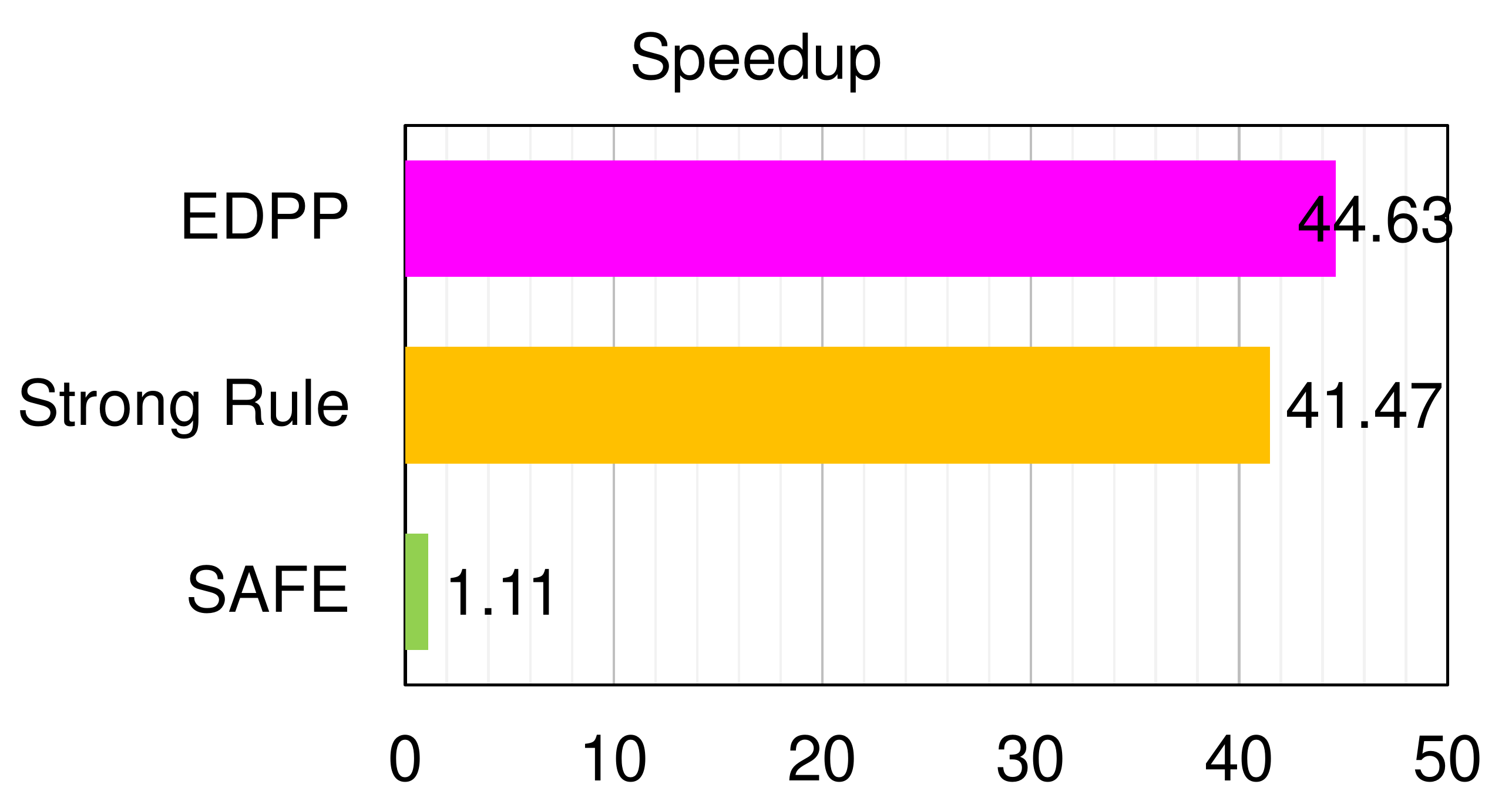}
			}
		}
		\caption{Comparison of SAFE, Strong Rule and EDPP on two synthetic datasets with different numbers of nonzero components of the groud truth.  }
		\label{fig:lasso_synthetic}
	\end{figure*}
	
	We first apply the screening rules, i.e., SAFE, strong rule and EDPP to Synthetic 1 with $\overline{p}=100,1000,5000$ respectively. \figref{fig:synthetic1_1}, \figref{fig:synthetic1_2} and \figref{fig:synthetic1_3} present the corresponding rejection ratios and speedup of SAFE, strong rule and EDPP. We can see that the rejection ratios of strong rule and EDPP are comparable to each other, and both of them are more effective in discarding inactive features than SAFE. In terms of the speedup, EDPP provides better performance than strong rule. The reason is because strong rule is a heuristic screening method, i.e., it may mistakenly discard active features which have nonzero components in the solution. Thus, strong rule needs to check the KKT conditions to ensure the correctness of the screening result. In contrast, the EDPP rule does not need to check the KKT conditions since the discarded features are guaranteed to be absent from the resulting sparse representation. From the last two columns of Table \ref{table:lasso_synthetic_time}, we can observe that the running time of strong rule is about twice of that of EDPP.
	
	\figref{fig:synthetic2_1}, \figref{fig:synthetic2_2} and \figref{fig:synthetic2_3} present the rejection ratios and speedup of SAFE, strong rule and EDPP on Synthetic 2 with $\overline{p}=100,1000,5000$ respectively. We can observe patterns similar to Synthetic 1. Clearly, our method, EDPP, is very robust to the variations of the intrinsic structures of the data sets and the sparsity of the ground truth.

	\setlength{\tabcolsep}{.18em}
	\begin{table}
		\begin{center}
			\begin{footnotesize}
				\def\arraystretch{1.25}
				\begin{tabular}{ |l||c||c||c|c|c||c|c|c| }
					\hline
					Data & $\overline{p}$ & solver & SAFE+solver &  Strong Rule+solver & EDPP+solver & SAFE & Strong Rule & EDPP  \\
					\hline\hline
					\multirow{3}{*}{Synthetic 1}  & 100 & 109.01 & 100.09 & 2.67 & 2.47 & 4.60 & 0.65 & 0.36  \\ \cline{2-9}
					& 1000 & 123.60 & 111.32 & 2.97 & 2.71 & 4.59 & 0.66 & 0.37  \\\cline{2-9}
					& 5000 & 124.92 & 113.09 & 3.00 & 2.72 & 4.57 &  0.65 & 0.36 \\\hline\hline
					\multirow{3}{*}{Synthetic 2}  & 100 & 107.50 & 96.94 & 2.62 & 2.49 & 4.61 & 0.67 & 0.37 \\\cline{2-9}
					& 1000 & 113.59 & 104.29 & 2.84 & 2.67 & 4.57 & 0.63 & 0.35  \\\cline{2-9}
					& 5000 & 125.25 & 113.35 & 3.02 & 2.81 & 4.62 & 0.65 &  0.36 \\\hline
				\end{tabular}
			\end{footnotesize}
		\end{center}
		\caption{Running time (in seconds) for solving the Lasso problems along a sequence of $100$ tuning parameter values equally spaced on the scale of ${\lambda}/{\lambda_{\rm max}}$ from $0.05$ to $1$ by (a): the solver \citep{SLEP} (reported in the third column) without screening; (b): the solver combined with different screening methods (reported in the $4^{th}$ to the $6^{th}$ columns).
			The last four columns report the total running time (in seconds) for the screening methods.
		}
		\label{table:lasso_synthetic_time}
	\end{table}
	
	\vspace{2mm}
	\noindent\textbf{Real Data Sets}
	
	In this section, we compare the performance of the EDPP rule with SAFE and strong rule on six real data sets along a sequence of $100$ parameter values equally spaced on the $\lambda/\lambda_{\rm max}$ scale from $0.05$ to $1.0$. The data sets are listed as follows:
	\begin{enumerate}
		\item[a)] Breast Cancer data set \citep{West2001,Shevade2003};
		\item[b)] Leukemia data set \citep{Armstrong2002};
		\item[c)] Prostate Cancer data set \citep{Petricoin2002};
		\item[d)] PIE face image data set \citep{Sim2003,Cai2007};
		\item[e)] MNIST handwritten digit data set \citep{Lecun1998};
		\item[f)] Street View House Number (SVHN) data set \citep{Netzer2001}.
	\end{enumerate}
	We present the rejection ratios and speedup of EDPP, SAFE and strong rule in \figref{fig:lasso_real}. Table \ref{table:lasso_real_time} reports the running time of the solver with or without screening for solving the $100$ Lasso problems, and that of the screening rules.
	
	\begin{figure*}[h!t]
		\centering{
			\includegraphics[width=0.31\columnwidth]{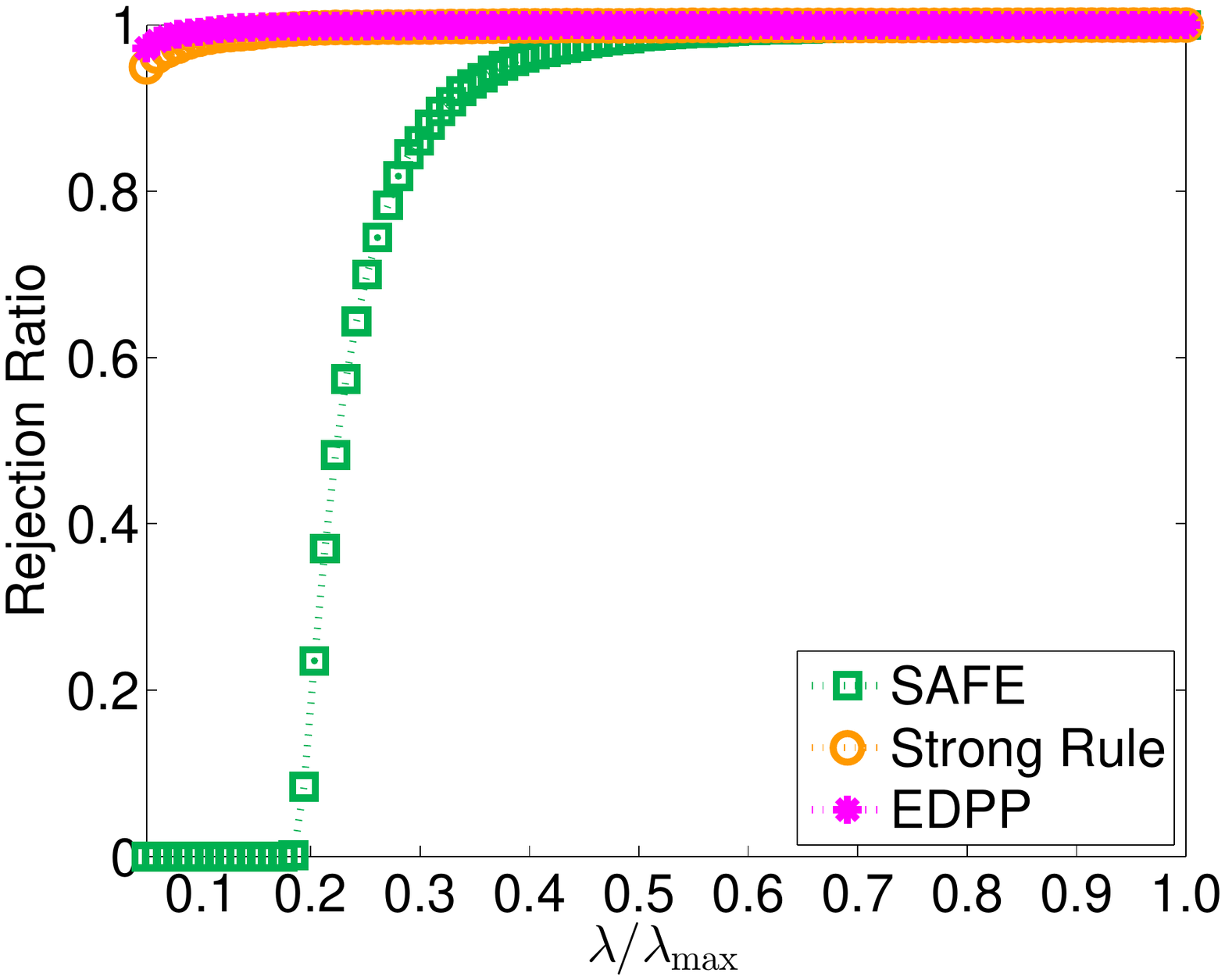}
			\includegraphics[width=0.31\columnwidth]{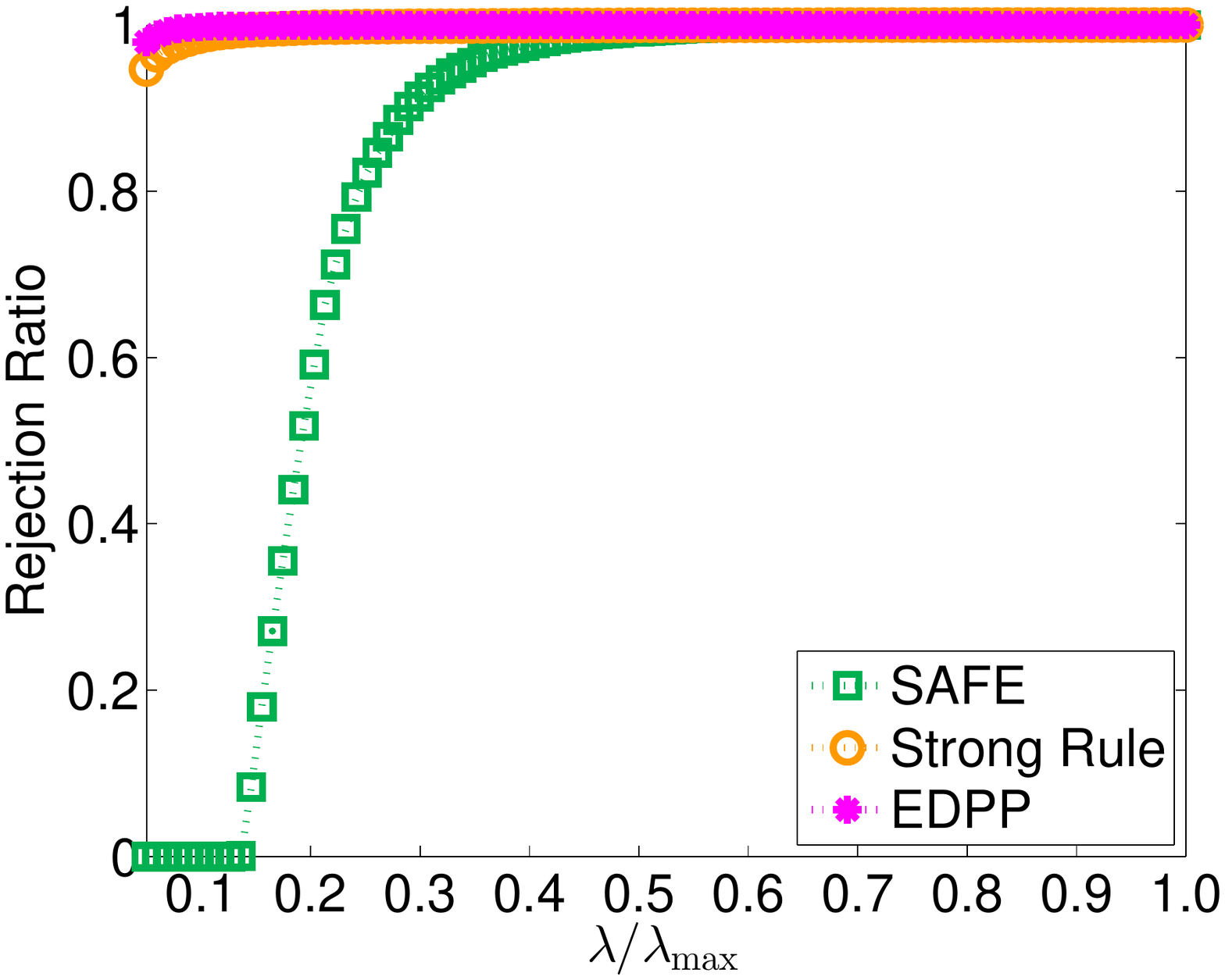}
			\includegraphics[width=0.31\columnwidth]{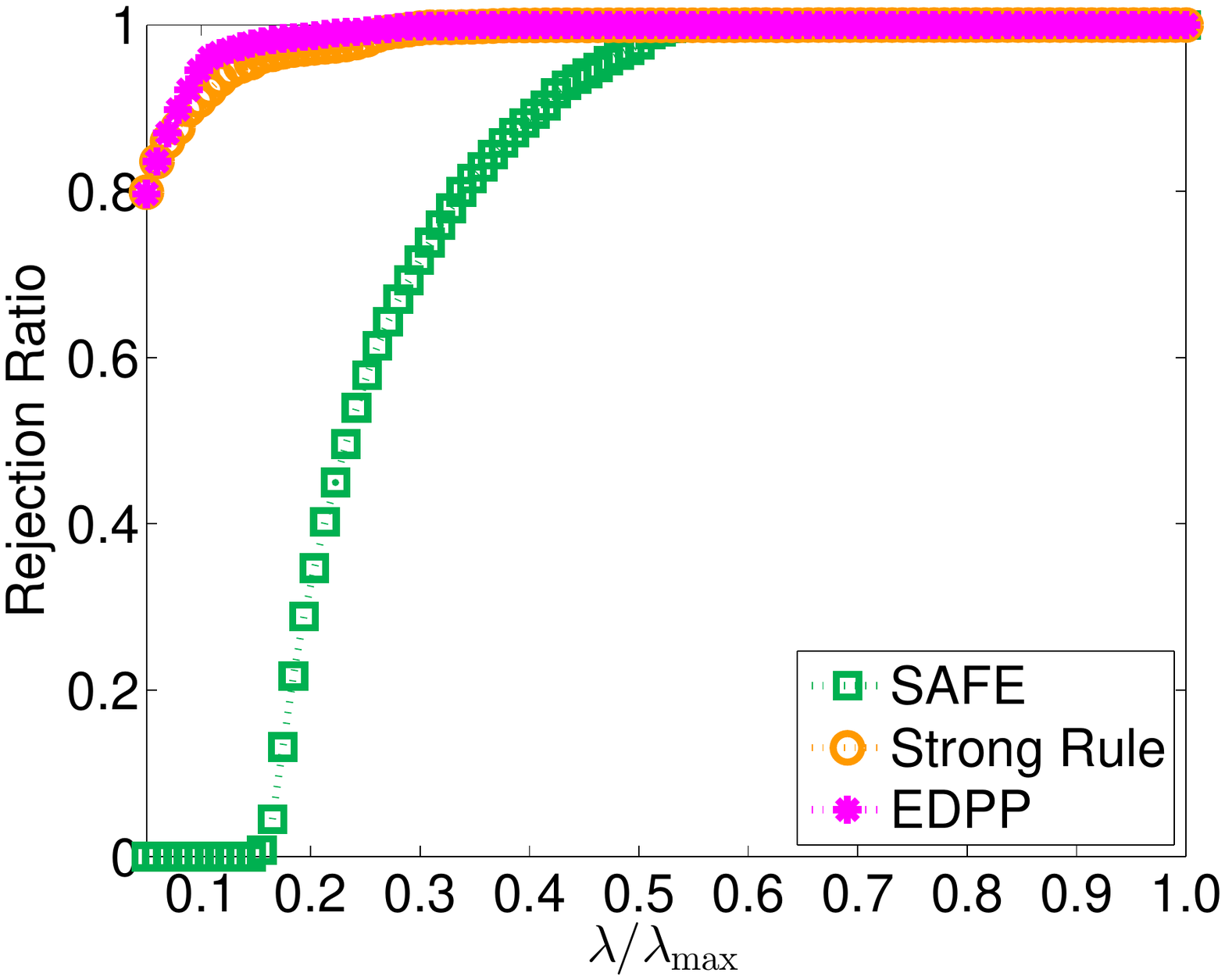}\\
			\subfigure[Breast Cancer, ${\bf X}\in\mathbb{R}^{44\times 7129}$] { \label{fig:breastcancer}
				\includegraphics[width=0.31\columnwidth]{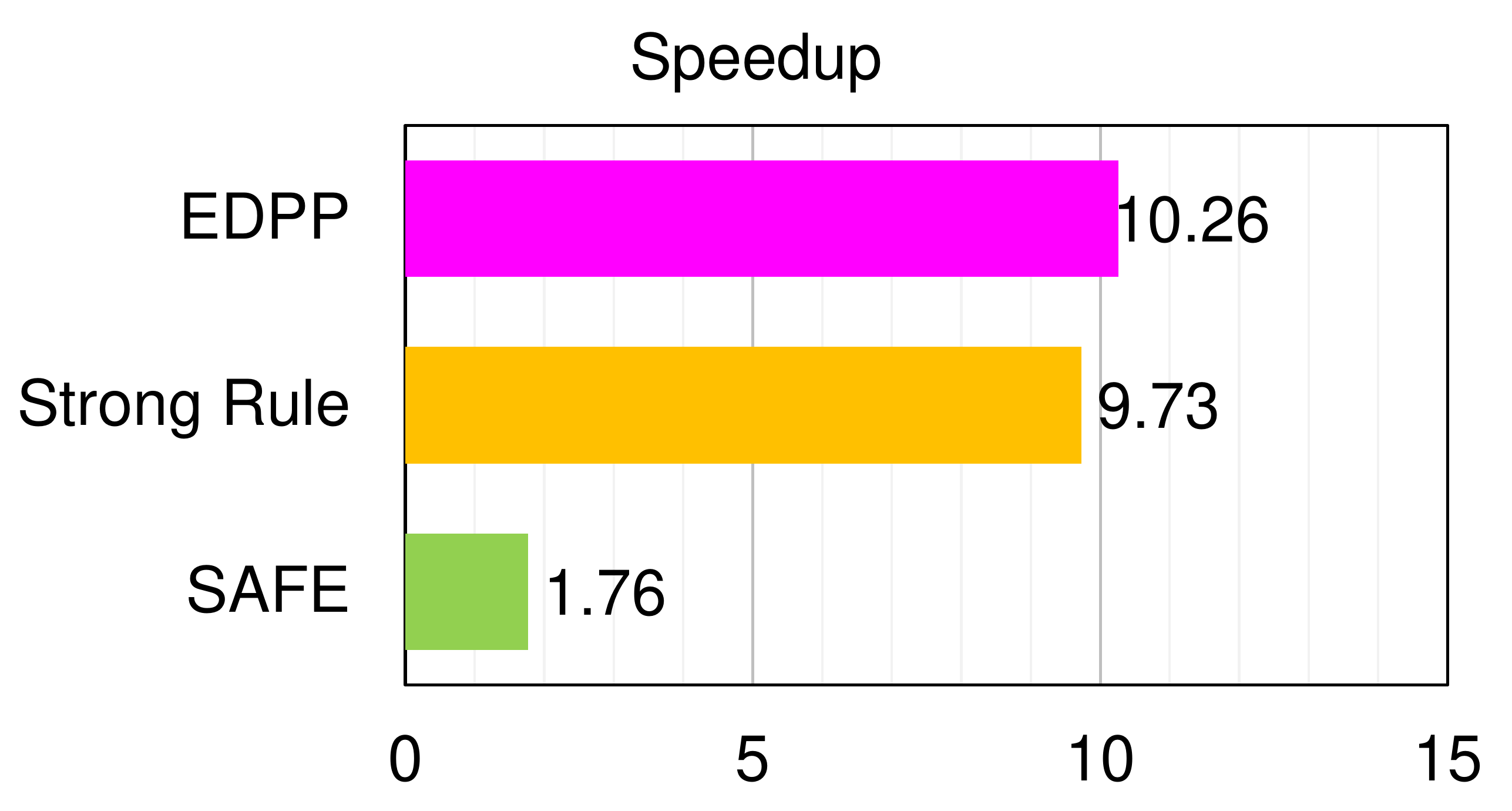}
			}
			\subfigure[Leukemia, ${\bf X}\in\mathbb{R}^{55\times 11225}$] { \label{fig:Leukemia}
				\includegraphics[width=0.31\columnwidth]{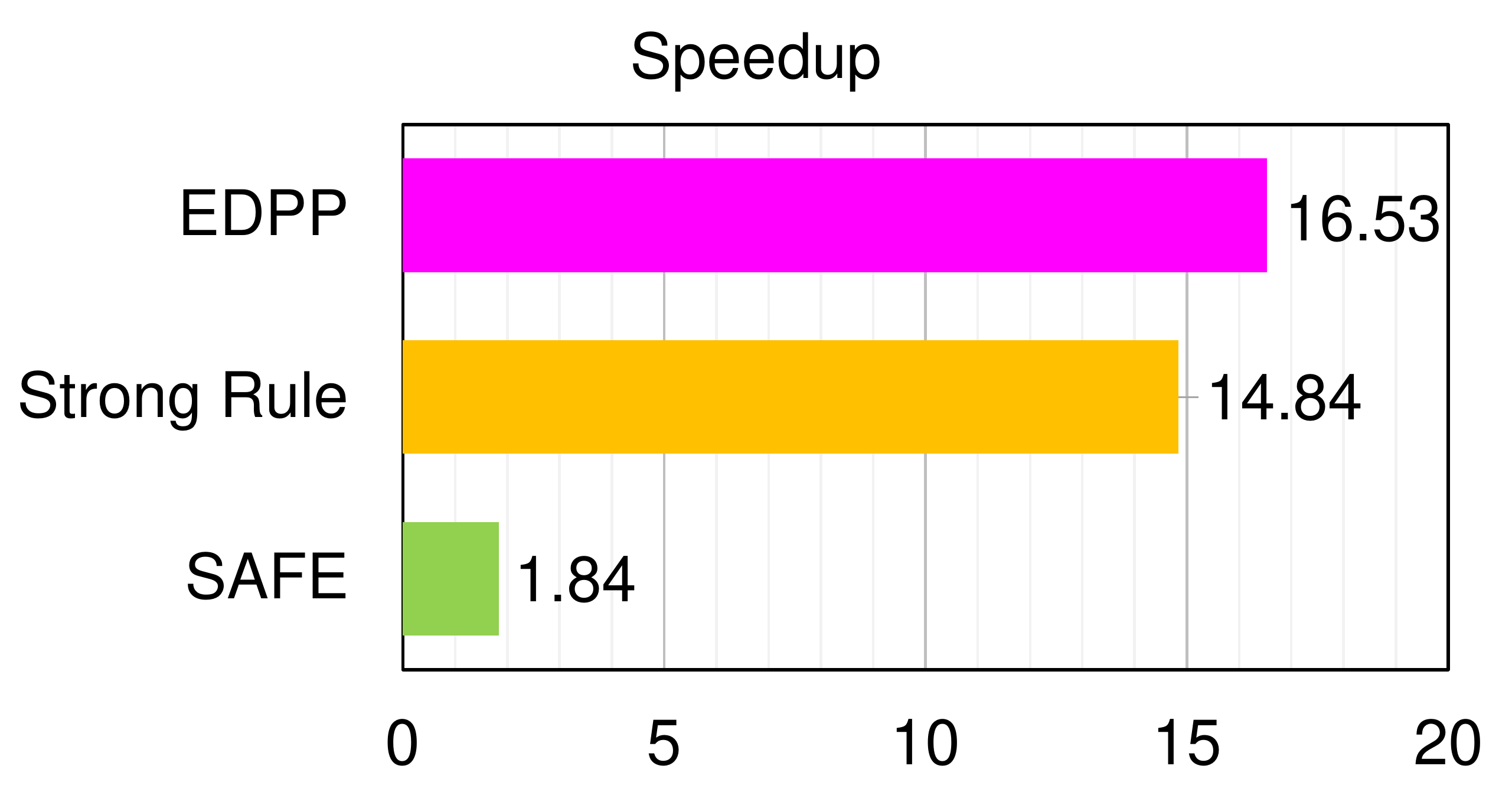}
			}
			\subfigure[{\scriptsize Prostate Cancer}, ${\bf X}\in\mathbb{R}^{132\times 15154}$] { \label{fig:prostatecancer_e2}
				\includegraphics[width=0.31\columnwidth]{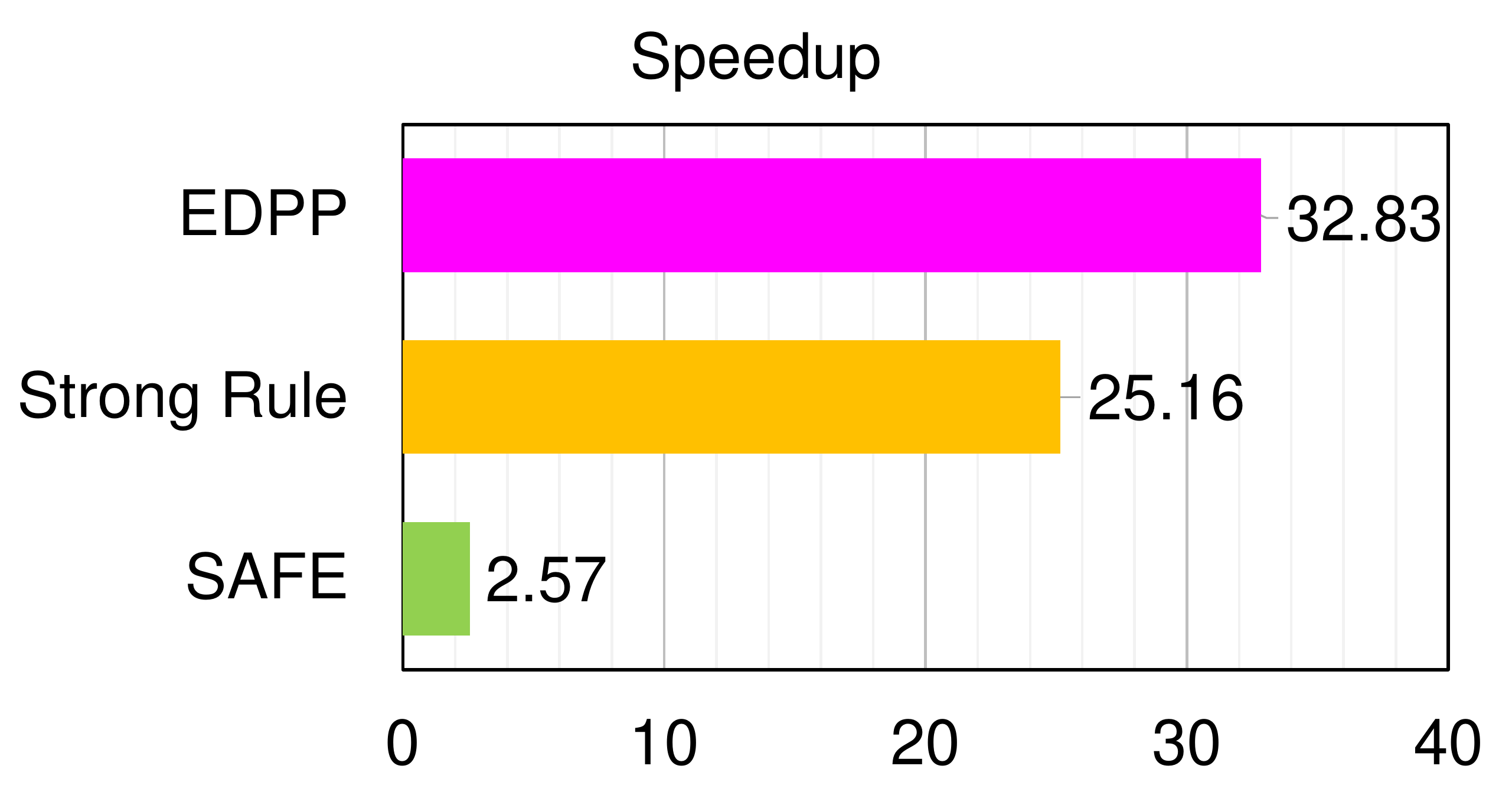}
			}\\
			\vspace{5mm}
			\includegraphics[width=0.31\columnwidth]{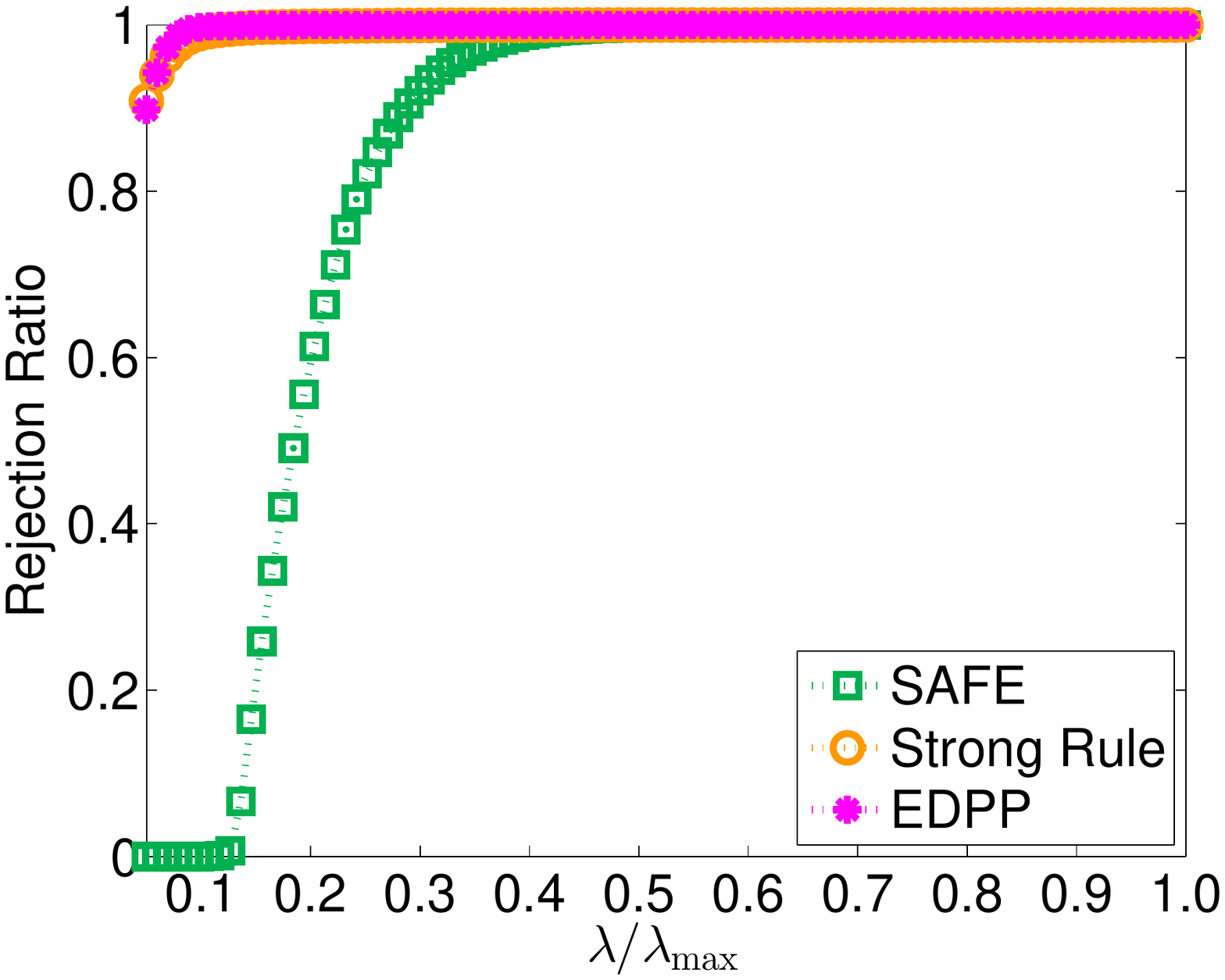}
			\includegraphics[width=0.31\columnwidth]{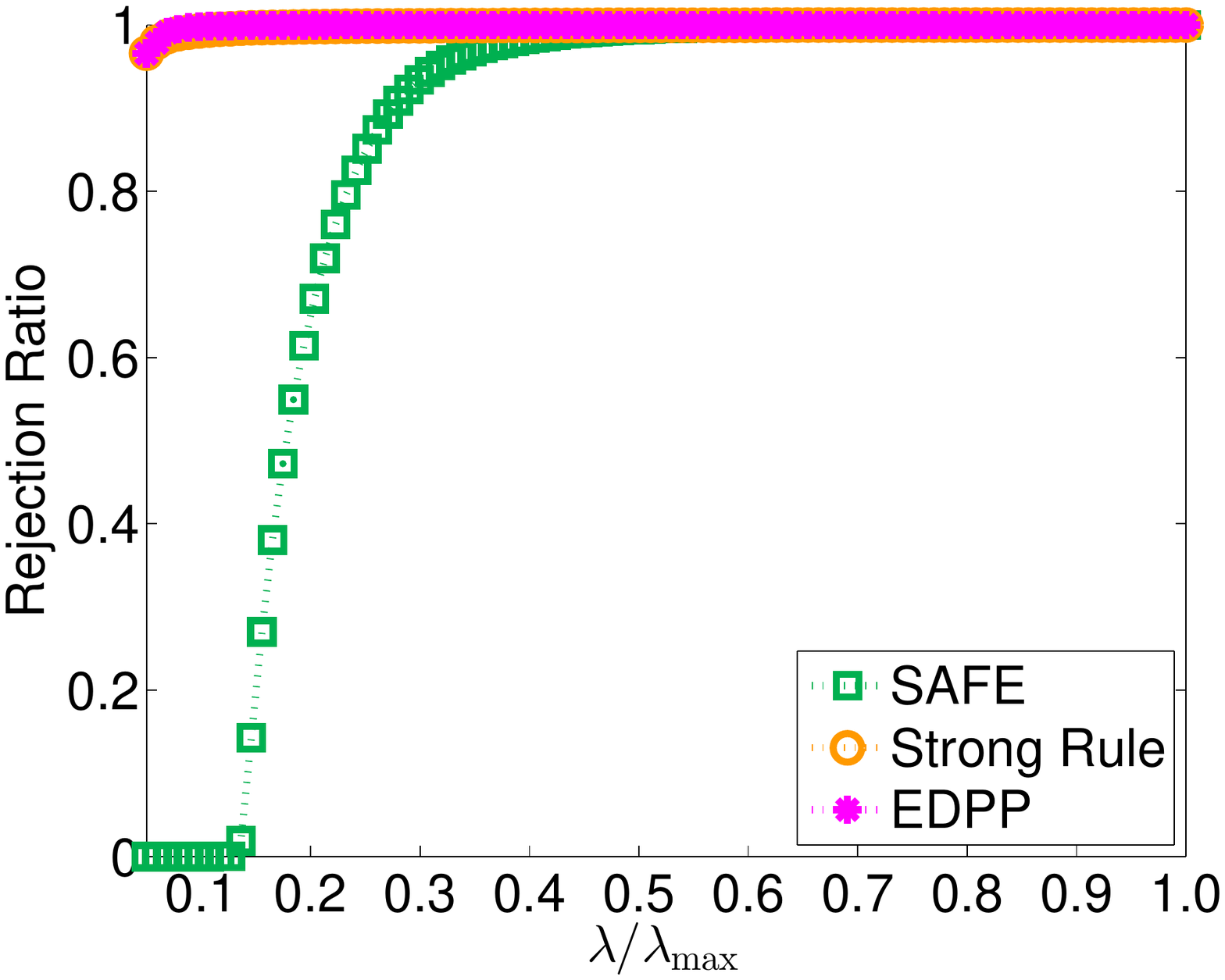}
			\includegraphics[width=0.31\columnwidth]{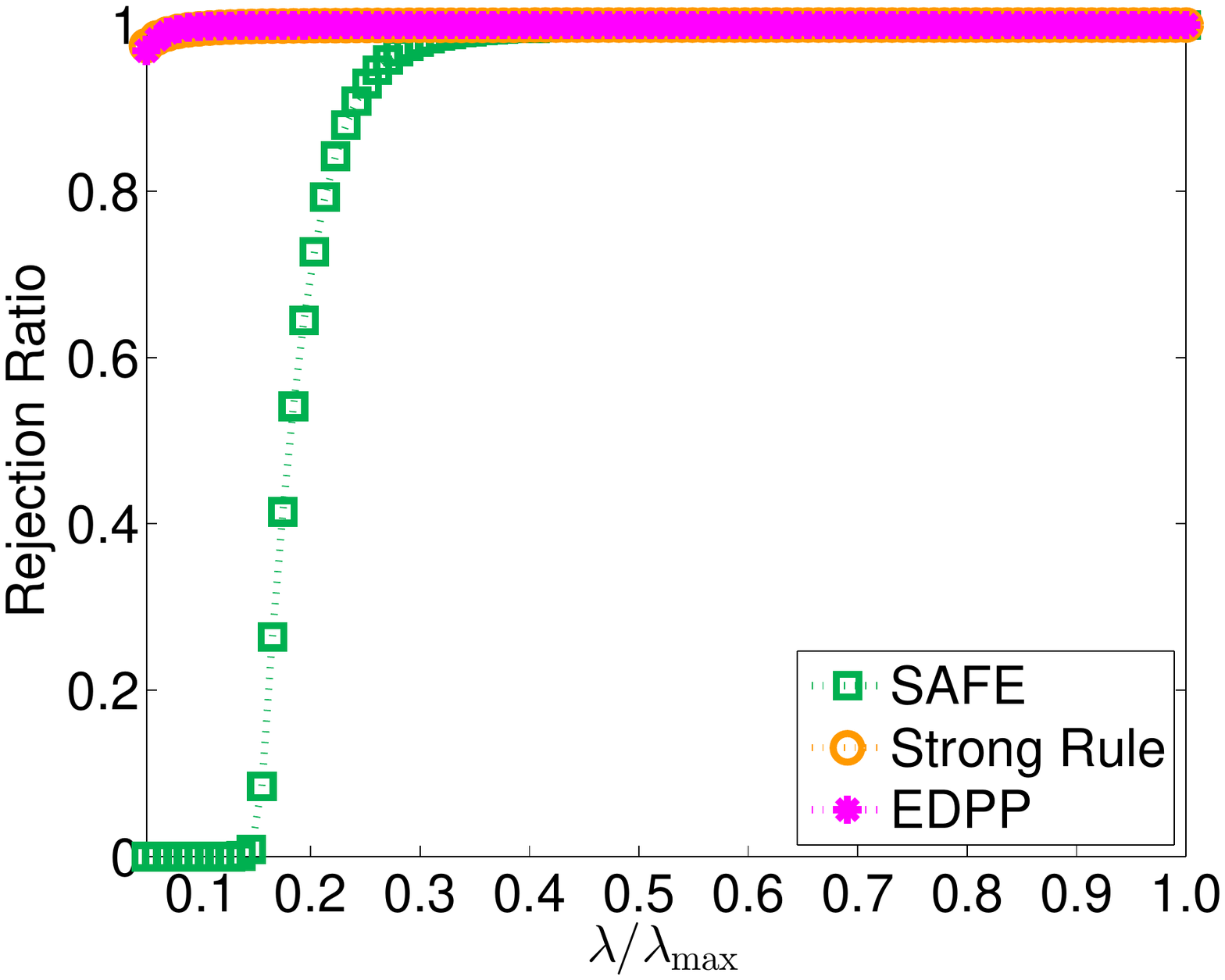}\\
			\subfigure[PIE, ${\bf X}\in\mathbb{R}^{1024\times 11553}$] { \label{fig:pie_e2}
				\includegraphics[width=0.31\columnwidth]{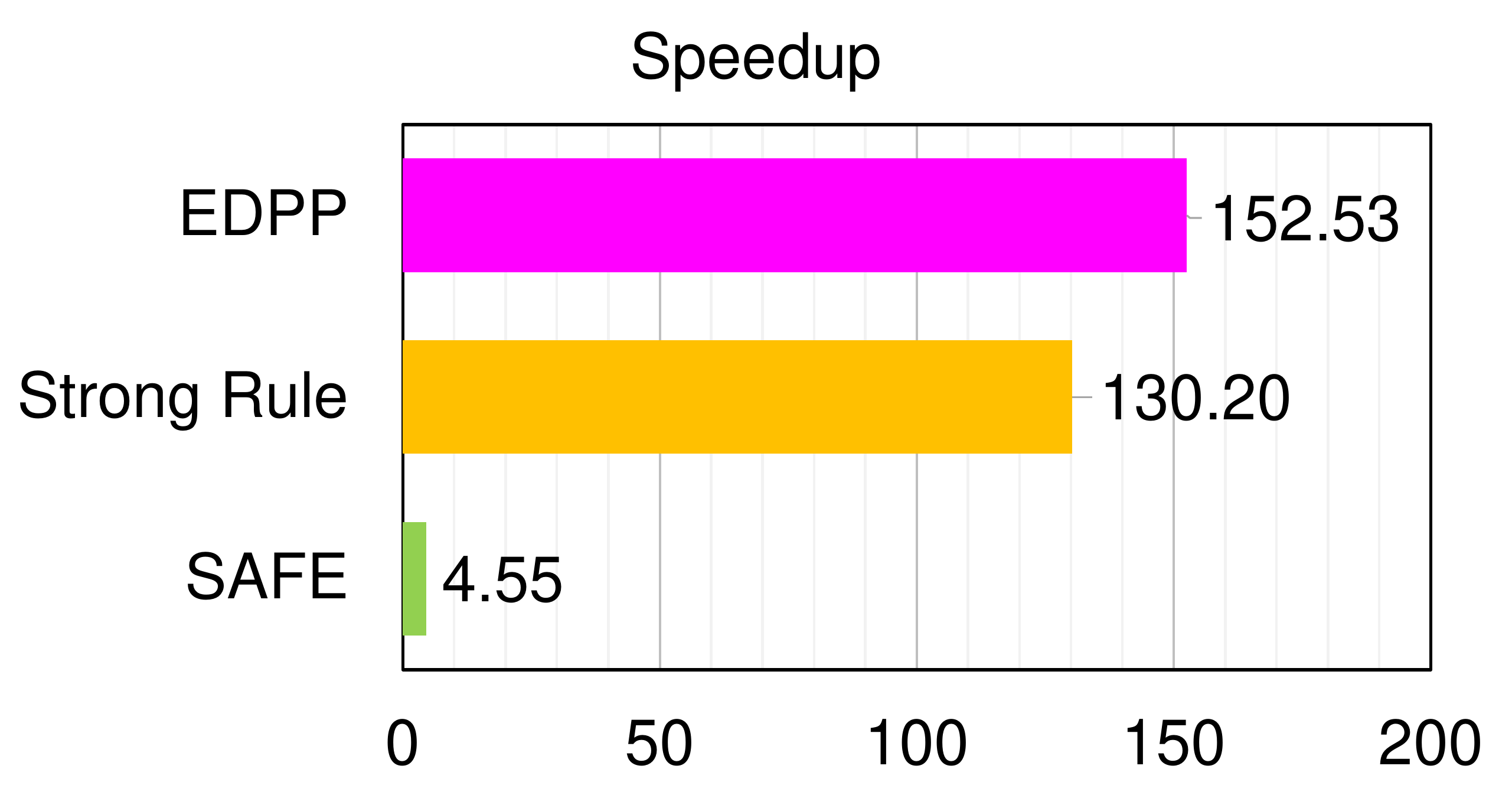}
			}
			\subfigure[MNIST, ${\bf X}\in\mathbb{R}^{784\times 50000}$] { \label{fig:mnist_e2}
				\includegraphics[width=0.31\columnwidth]{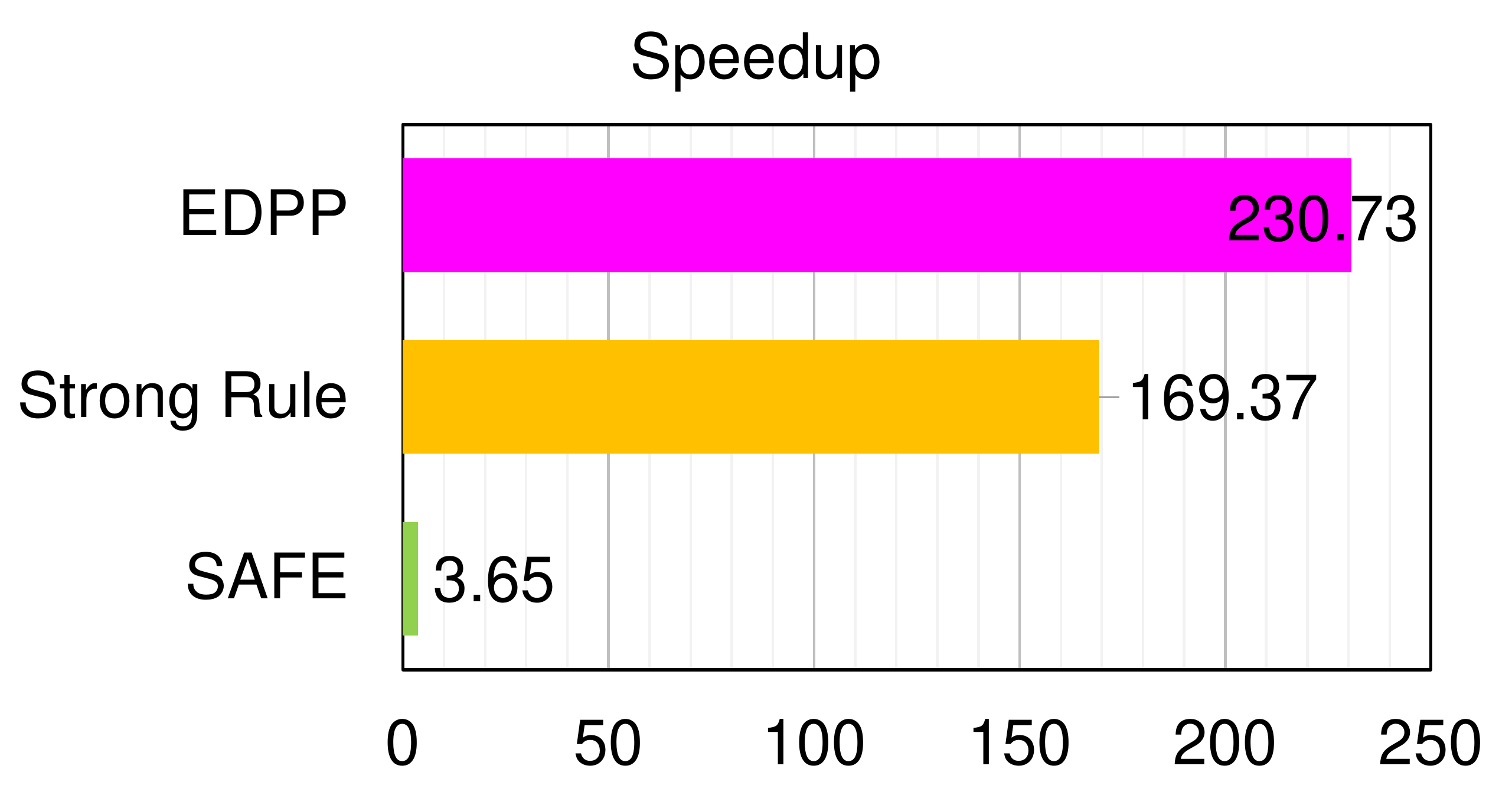}
			}
			\subfigure[SVHN, ${\bf X}\in\mathbb{R}^{3072\times 99288}$] { \label{fig:svhn}
				\includegraphics[width=0.31\columnwidth]{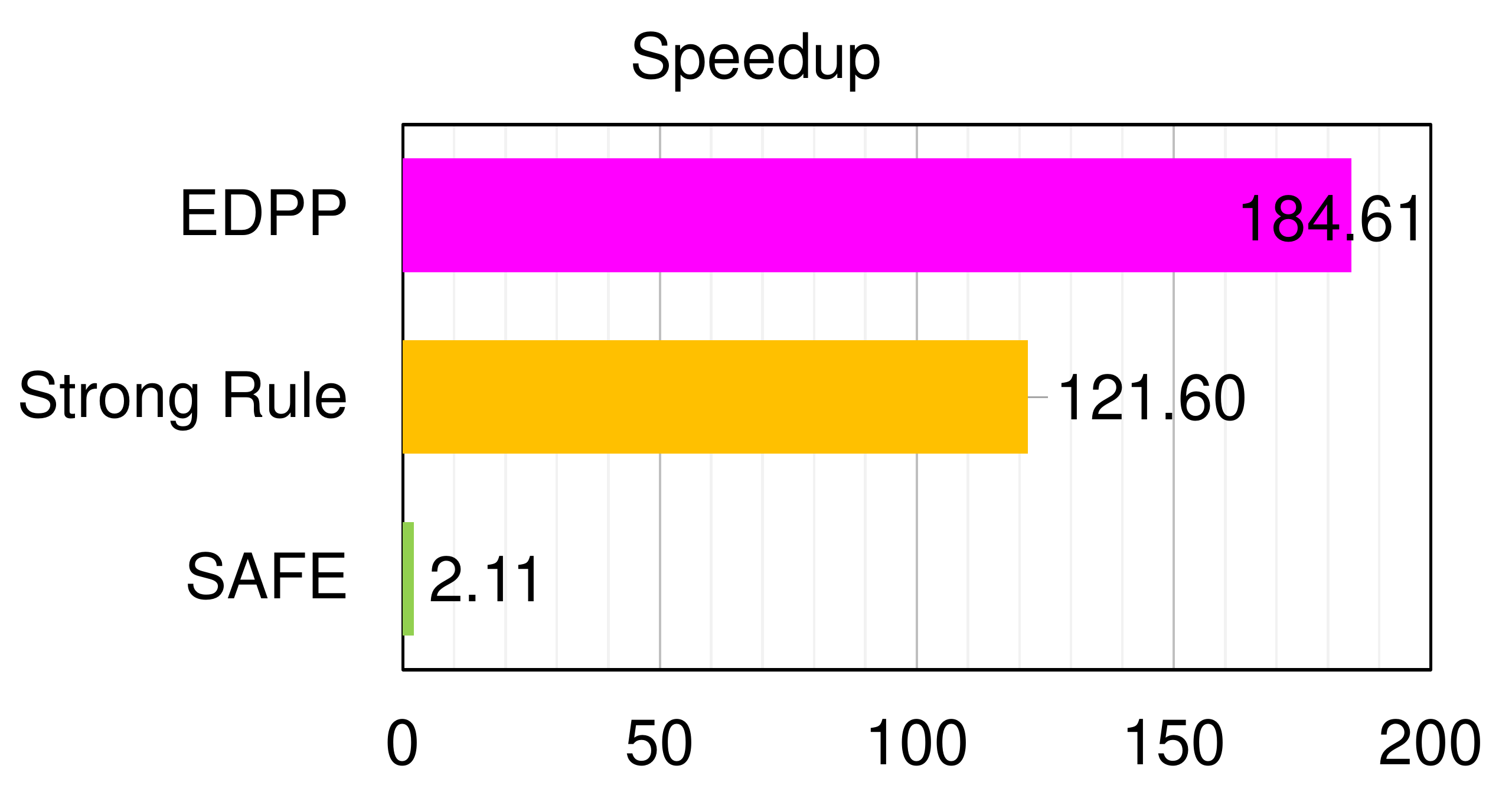}
			}
		}
		\caption{Comparison of SAFE, Strong Rule, and EDPP on six real data sets.  }
		\label{fig:lasso_real}
	\end{figure*}
	
	\vspace{2mm}
	{\bf The Breast Cancer Data Set} This data set contains $44$ tumor samples, each of which is represented by $7129$ genes. Therefore, the data matrix ${\bf X}$ is of $44\times 7129$. The response vector ${\bf y}\in\{1,-1\}^{44}$ contains the binary label of each sample.
	
	{\bf The Leukemia Data Set} This data set is a DNA microarray data set, containing $52$ samples and $11225$ genes. Therefore, the data matrix ${\bf X}$ is of $55\times 11225$. The response vector ${\bf y}\in\{1,-1\}^{52}$ contains the binary label of each sample.
	
	{\bf The SVHN Data set} The SVHN data set contains color images of street view house numbers, including $73257$ images for training and $26032$ for testing. The dimension of each image is $32\times 32$. In each trial, we first randomly select an image as the response ${\bf y}\in\mathbb{R}^{3072}$, and then use the remaining ones to form the data matrix ${\bf X}\in\mathbb{R}^{3072\times 99288}$. We run $100$ trials and report the average performance.
	
	\vspace{2mm}
	The description and the experiment settings for the Prostate Cancer data set, the PIE face image data set and the MNIST handwritten digit data set are given in Section \ref{subsubsection:EDPP}.
	
	\setlength{\tabcolsep}{.18em}
	\begin{table}
		\begin{center}
			\begin{footnotesize}
				\def\arraystretch{1.25}
				\begin{tabular}{ |l||c||c||c|c|c||c|c|c| }
					\hline
					Data &  solver & SAFE+solver &  Strong Rule+solver & EDPP+solver & SAFE & Strong Rule & EDPP  \\
					\hline\hline
					Breast Cancer  &  12.70 & 7.20 & 1.31 & 1.24 & 0.44 & 0.06 & 0.05  \\ \hline
					Leukemia  &  16.99 & 9.22 & 1.15 & 1.03 & 0.91 & 0.09 & 0.07  \\\hline
					Prostate Cancer   &  121.41 & 47.17 & 4.83 & 3.70 & 3.60 &  0.46 & 0.23 \\\hline
					PIE  &  629.94 & 138.33 & 4.84 & 4.13 & 19.93 & 2.54 & 1.33 \\\hline
					MNIST  &  2566.26 & 702.21 & 15.15 & 11.12 & 64.81 & 8.14 & 4.19  \\\hline
					SVHN  &  11023.30 & 5220.88 & 90.65 & 59.71 & 583.12 & 61.02 &  31.64 \\\hline
				\end{tabular}
			\end{footnotesize}
		\end{center}
		\caption{Running time (in seconds) for solving the Lasso problems along a sequence of $100$ tuning parameter values equally spaced on the scale of ${\lambda}/{\lambda_{\rm max}}$ from $0.05$ to $1$ by (a): the solver \citep{SLEP} (reported in the second column) without screening; (b): the solver combined with different screening methods (reported in the $3^{rd}$ to the $5^{th}$ columns).
			The last three columns report the total running time (in seconds) for the screening methods.
		}
		\label{table:lasso_real_time}
	\end{table}
	
	From \figref{fig:lasso_real}, we can see that the rejection ratios of strong rule and EDPP are comparable to each other. Compared to SAFE, both of strong rule and EDPP are able to identify far more inactive features, leading to a much higher speedup. However, because strong rule needs to check the KKT conditions to ensure the correctness of the screening results, the speedup gained by EDPP is higher than that by strong rule. When the size of the data matrix is not very large, e.g., the Breast Cancer and Leukemia data sets, the speedup gained by EDPP are slightly higher than that by strong rule. However, when the size of the data matrix is large, e.g., the MNIST and SVHN data sets, the speedup gained by EDPP are significantly higher than that by strong rule. Moreover, we can also observe from \figref{fig:lasso_real} that, the larger the data matrix is, the higher the speedup can be gained by EDPP. More specifically, for the small data sets, e.g., the Breast Cancer, Leukemia and Prostate Cancer data sets, the speedup gained by EDPP is about $10$, $17$ and $30$ times. In contrast, for the large data sets, e.g., the PIE, MNIST and SVHN data sets, the speedup gained by EDPP is two orders of magnitude. Take the SVHN data set for example. The solver without screening needs about $3$ {\it hours} to solve the $100$ Lasso problems. Combined with the EDPP rule, the solver only needs less than $1$ {\it minute} to complete the task.
	
	Clearly, the proposed EDPP screening rule is very effective in accelerating the computation of Lasso especially for large-scale problems, and outperforms the state-of-the-art approaches like SAFE and strong rule. Notice that, the EDPP method is safe in the sense that the discarded features are guaranteed to have zero coefficients in the solution.
	
	\vspace{2mm}
	\noindent\textbf{EDPP with Least-Angle Regression (LARS)}
	
	As we mentioned in the introduction, we can combine EDPP with any existing solver. In this experiment, we integrate EDPP and strong rule with another state-of-the-art solver for Lasso, i.e., Least-Angle Regression (LARS) \citep{Efron04}. We perform experiments on the same real data sets used in the last section with the same experiment settings. Because the rejection ratios of screening methods are irrelevant to the solvers, we only report the speedup. Table \ref{table:lasso_real_time_LARS} reports the running time of LARS with or without screening for solving the 100 Lasso problems, and that of the screening methods. \figref{fig:lasso_real_LARS} shows the speedup of these two methods. We can still observe a substantial speedup gained by EDPP. The reason is that EDPP has a very low computational cost (see  Table \ref{table:lasso_real_time_LARS}) and it is very effective in discarding inactive features (see \figref{fig:lasso_real}).
	
	\setlength{\tabcolsep}{.18em}
	\begin{table}
		\begin{center}
			\begin{footnotesize}
				\def\arraystretch{1.25}
				\begin{tabular}{ |l||c||c||c|c|c||c|c|c| }
					\hline
					Data &  LARS &  Strong Rule+LARS & EDPP+LARS & Strong Rule & EDPP  \\
					\hline\hline
					Breast Cancer  &  1.30 &  0.06 & 0.04 & 0.04 & 0.03  \\ \hline
					Leukemia  &  1.46 & 0.09 & 0.05 & 0.07 & 0.04  \\\hline
					Prostate Cancer   &  5.76 & 1.04 & 0.37 & 0.42 & 0.24 \\\hline
					PIE  &  22.52 & 2.42 & 1.31 & 2.30 & 1.21 \\\hline
					MNIST  &  92.53 & 8.53 & 4.75 & 8.36 & 4.34  \\\hline
					SVHN  &  1017.20 & 65.83 & 35.73 & 62.53 &  32.00 \\\hline
				\end{tabular}
			\end{footnotesize}
		\end{center}
		\caption{Running time (in seconds) for solving the Lasso problems along a sequence of $100$ tuning parameter values equally spaced on the scale of ${\lambda}/{\lambda_{\rm max}}$ from $0.05$ to $1$ by (a): the solver \citep{Efron04,Mairal2010} (reported in the second column) without screening; (b): the solver combined with different screening methods (reported in the $3^{rd}$ and $4^{th}$ columns).
			The last two columns report the total running time (in seconds) for the screening methods.
		}
		\label{table:lasso_real_time_LARS}
	\end{table}
	
	\begin{figure*}[ht]
		\centering{
			\subfigure[Breast Cancer, ${\bf X}\in\mathbb{R}^{44\times 7129}$] { \label{fig:breastcancer_LARS}
				\includegraphics[width=0.31\columnwidth]{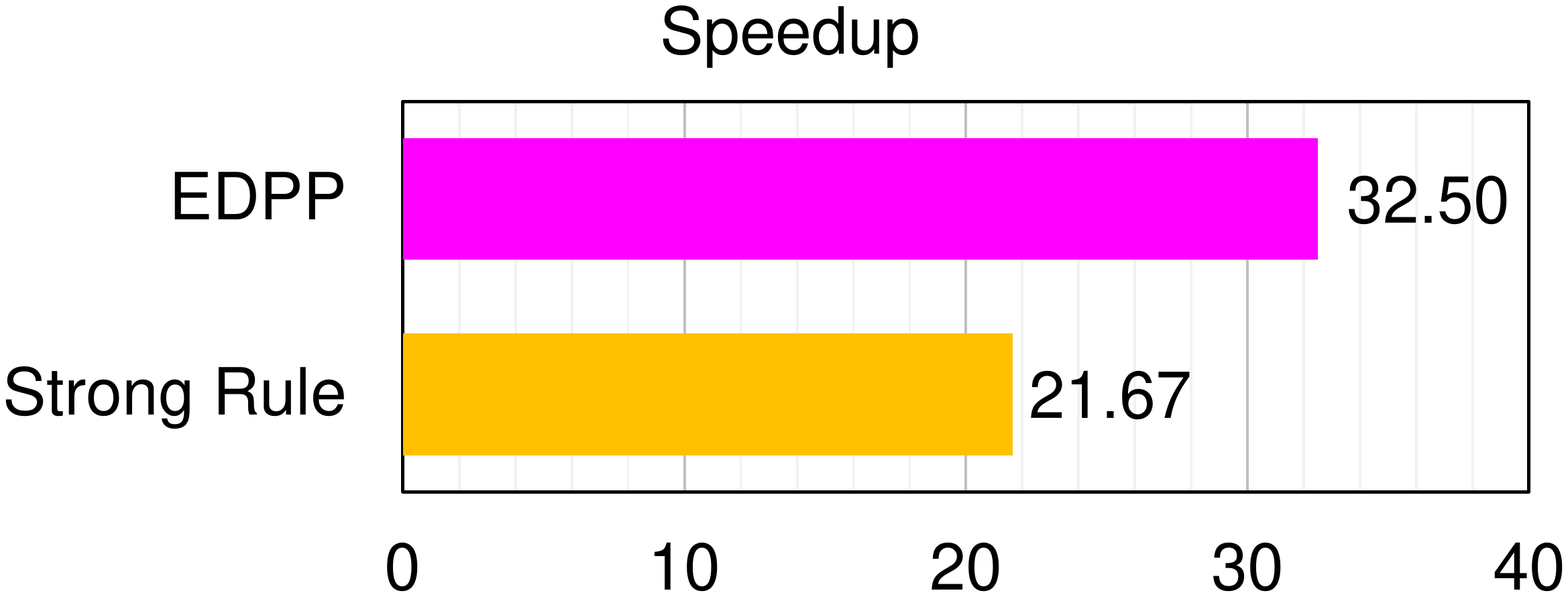}
			}
			\subfigure[Leukemia, ${\bf X}\in\mathbb{R}^{55\times 11225}$] { \label{fig:Leukemia_LARS}
				\includegraphics[width=0.31\columnwidth]{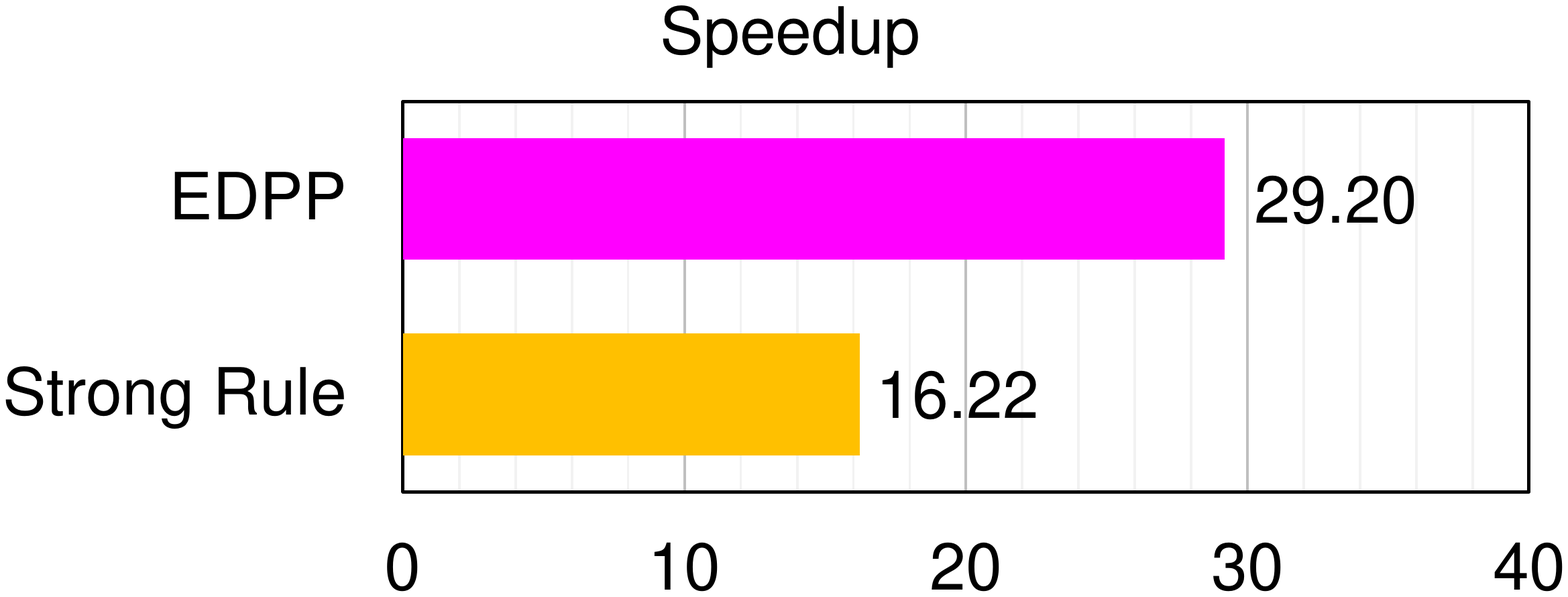}
			}
			\subfigure[{\scriptsize Prostate Cancer}, ${\bf X}\in\mathbb{R}^{132\times 15154}$] { \label{fig:prostatecancer_LARS}
				\includegraphics[width=0.31\columnwidth]{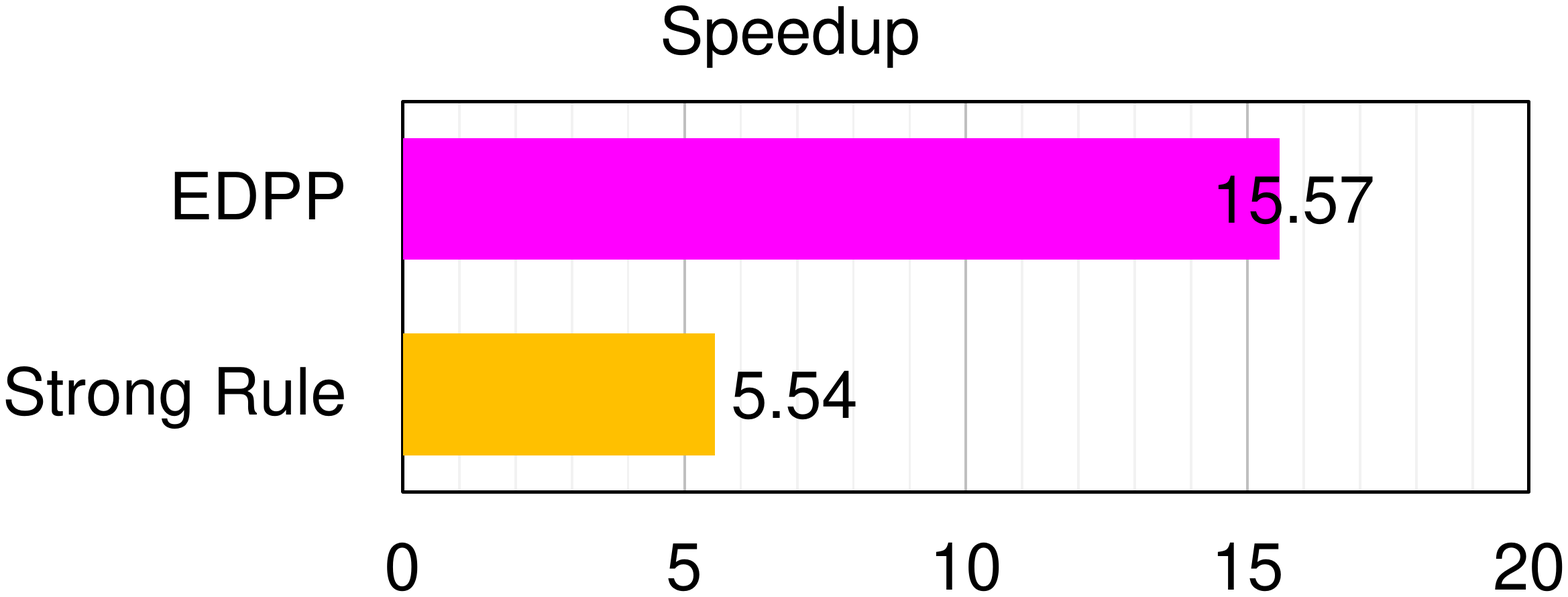}
			}\\
			\subfigure[PIE, ${\bf X}\in\mathbb{R}^{1024\times 11553}$] { \label{fig:pie_LARS}
				\includegraphics[width=0.31\columnwidth]{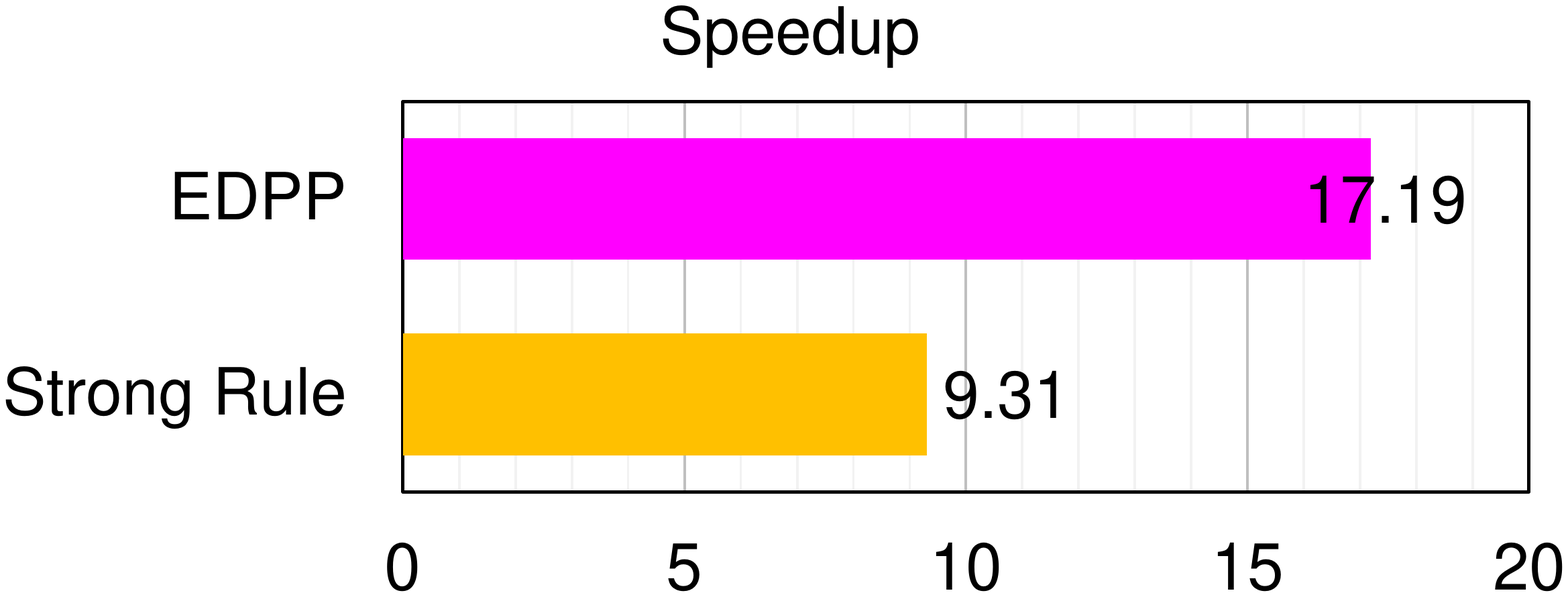}
			}
			\subfigure[MNIST, ${\bf X}\in\mathbb{R}^{784\times 50000}$] { \label{fig:mnist_LARS}
				\includegraphics[width=0.31\columnwidth]{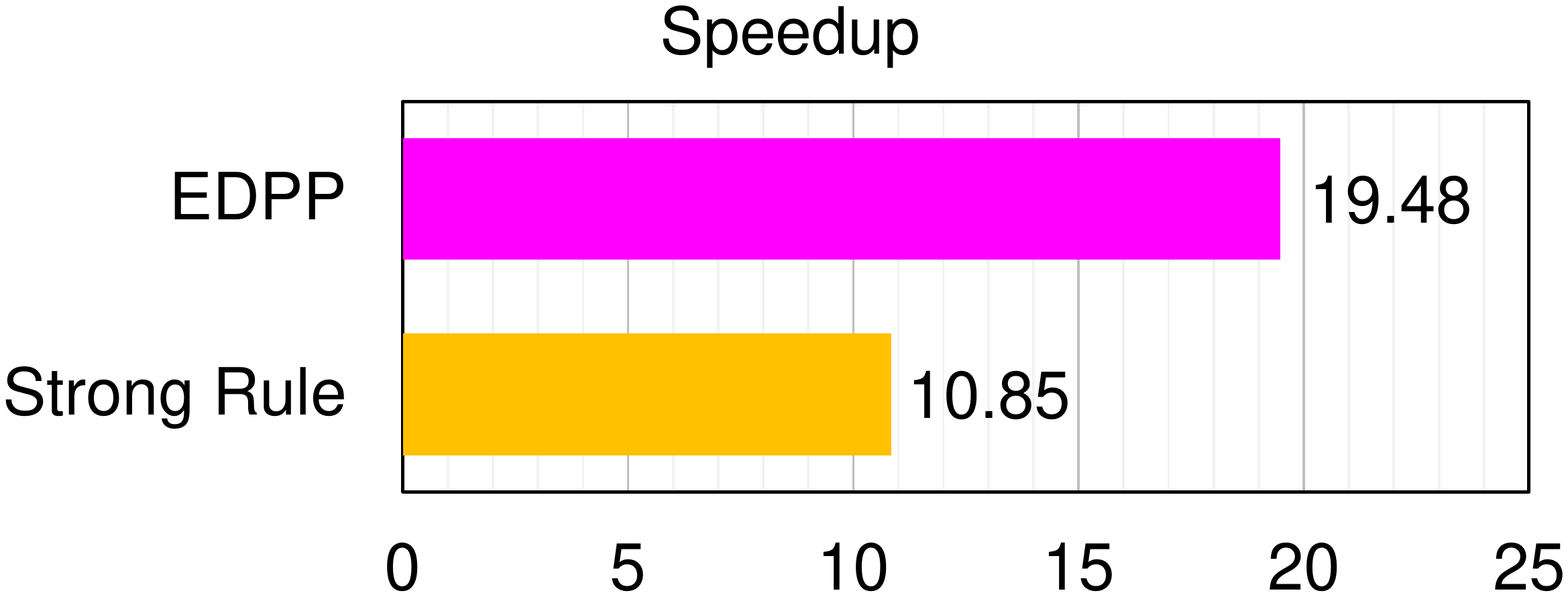}
			}
			\subfigure[SVHN, ${\bf X}\in\mathbb{R}^{3072\times 99288}$] { \label{fig:svhn_LARS}
				\includegraphics[width=0.31\columnwidth]{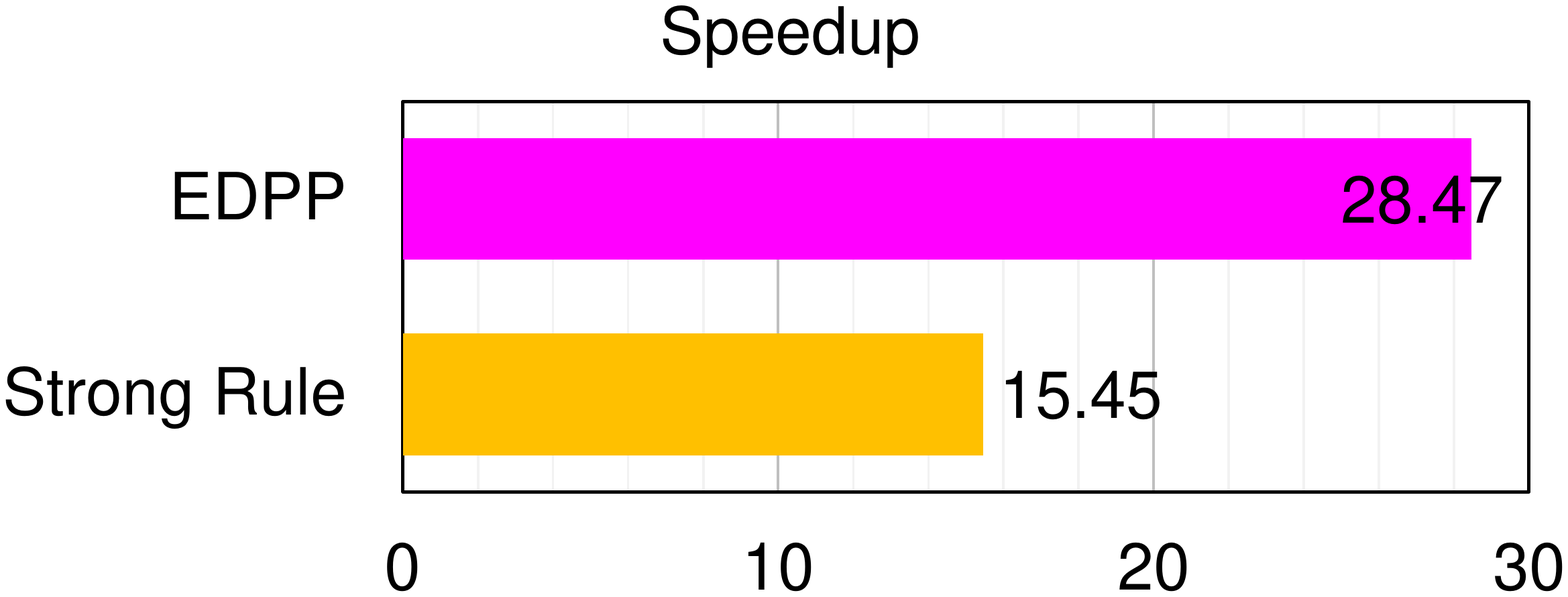}
			}
		}
		\caption{The speedup gained by Strong Rule and EDPP with LARS on six real data sets.}
		\label{fig:lasso_real_LARS}
	\end{figure*}
	
	\subsection{EDPP for the Group Lasso Problem}\label{subsection:experiment_glasso}
	
	In this experiment, we evaluate the performance of EDPP and strong rule with different numbers of groups. The data matrix ${\bf X}$ is fixed to be $250\times200000$. The entries of the response vector ${\bf y}$ and the data matrix ${\bf X}$  are generated i.i.d. from a standard Gaussian distribution. For each experiment, we repeat the computation $20$ times and report the average results. Moreover, let $n_g$ denote the number of groups and $s_g$ be the average group size. For example, if $n_g$ is $10000$, then $s_g=p/n_g=20$.
	
	\begin{figure}[th!]
		\centering{
			\includegraphics[width=0.31\columnwidth]{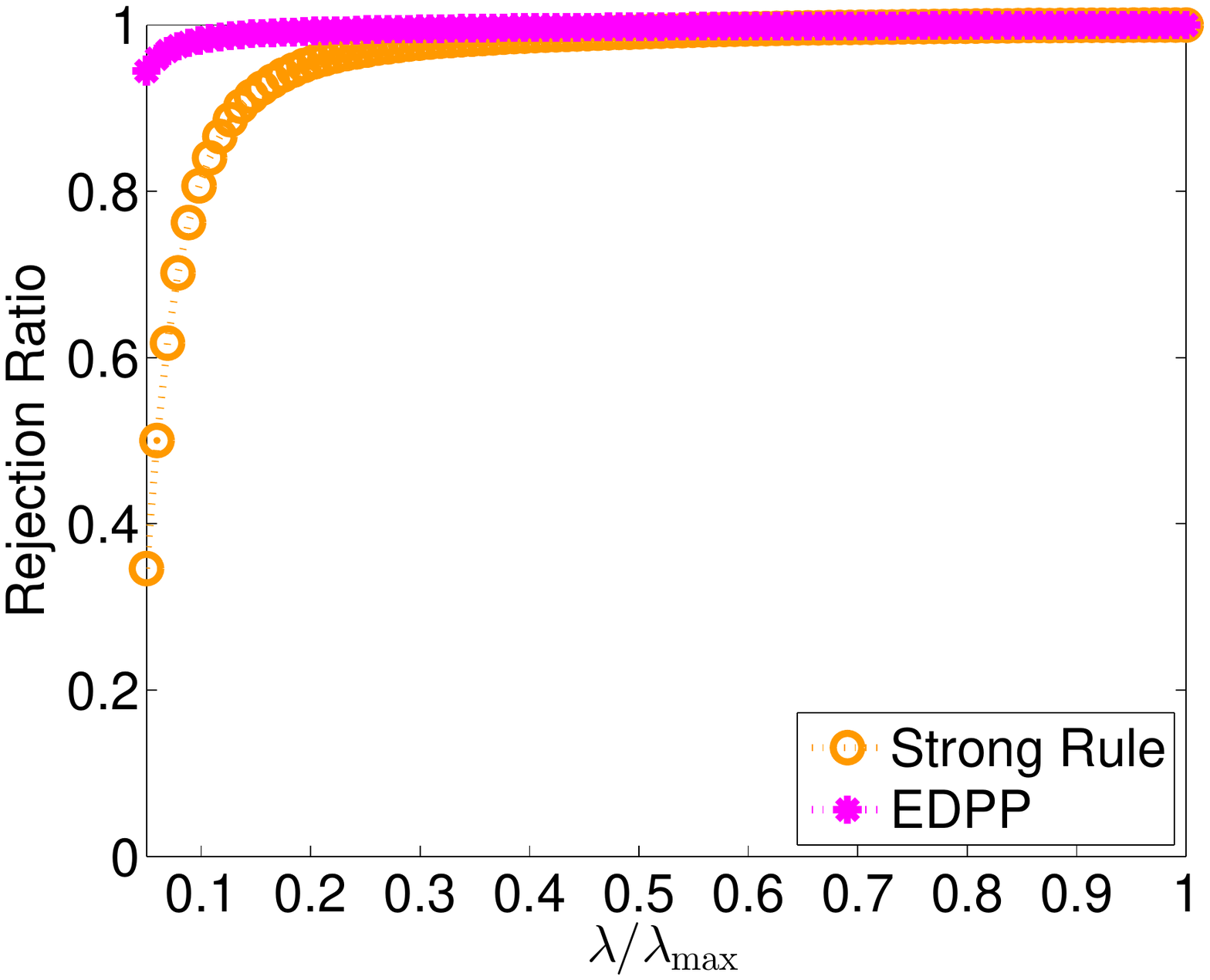}
			\includegraphics[width=0.31\columnwidth]{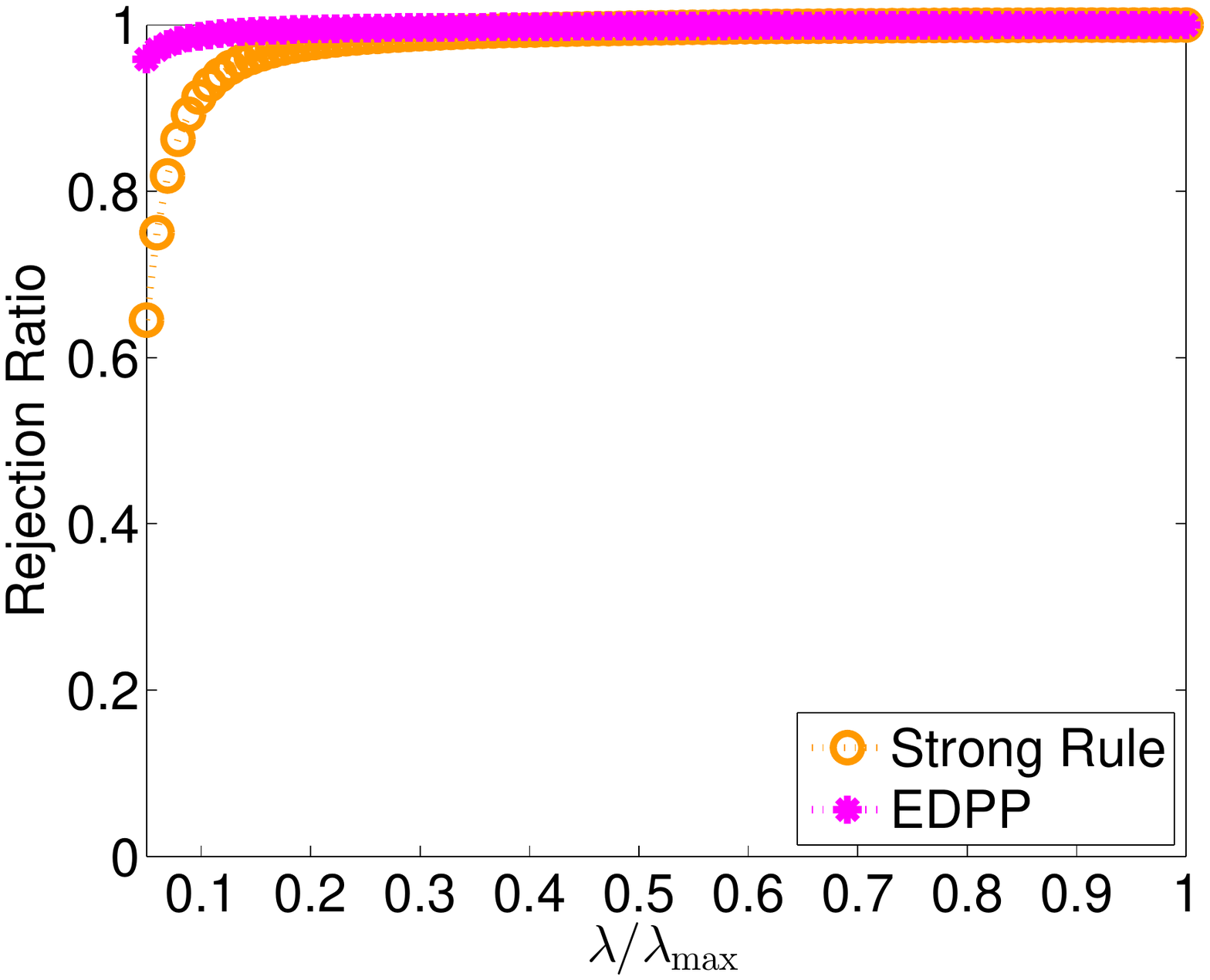}
			\includegraphics[width=0.31\columnwidth]{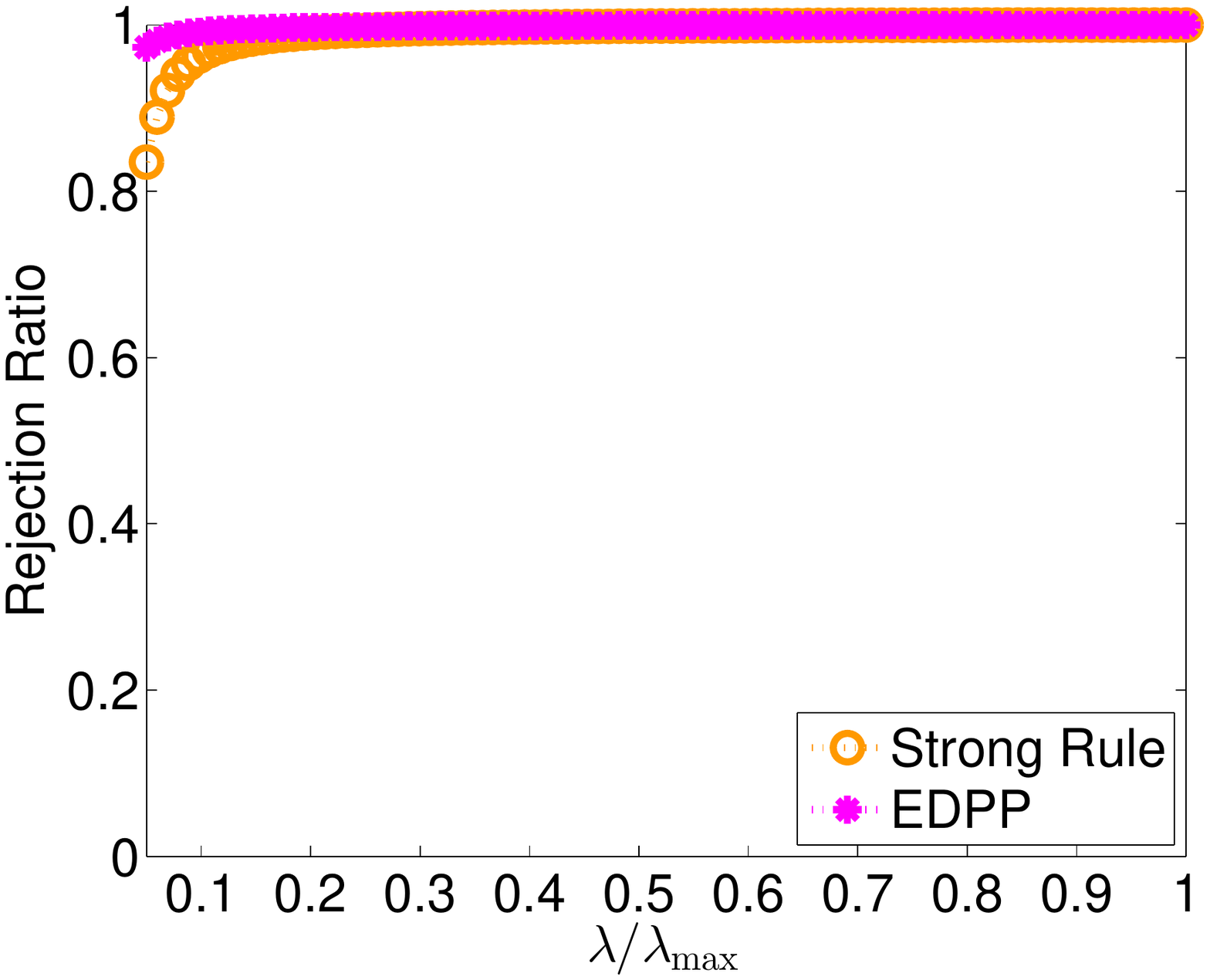}\\
			\subfigure[$n_g=10000$] { \label{fig:ng10000}
				\includegraphics[width=0.31\columnwidth]{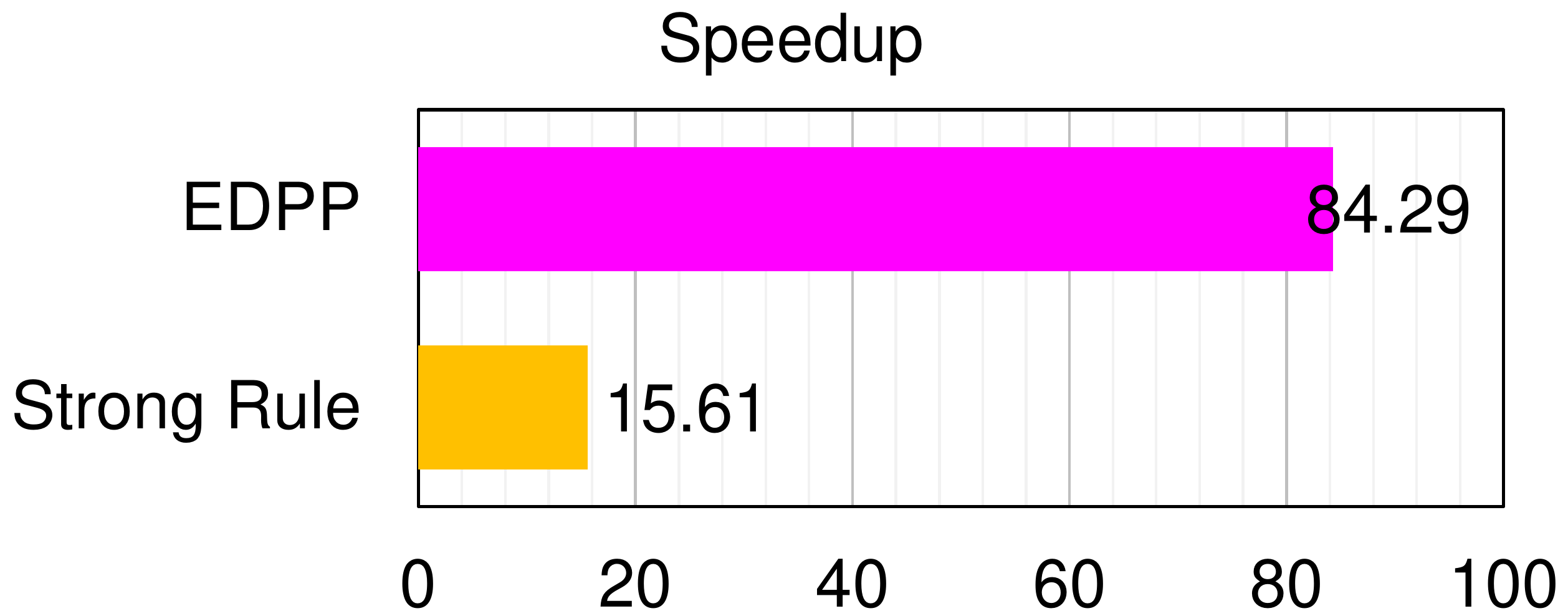}
			}
			\subfigure[$n_g=20000$] { \label{fig:ng20000}
				\includegraphics[width=0.31\columnwidth]{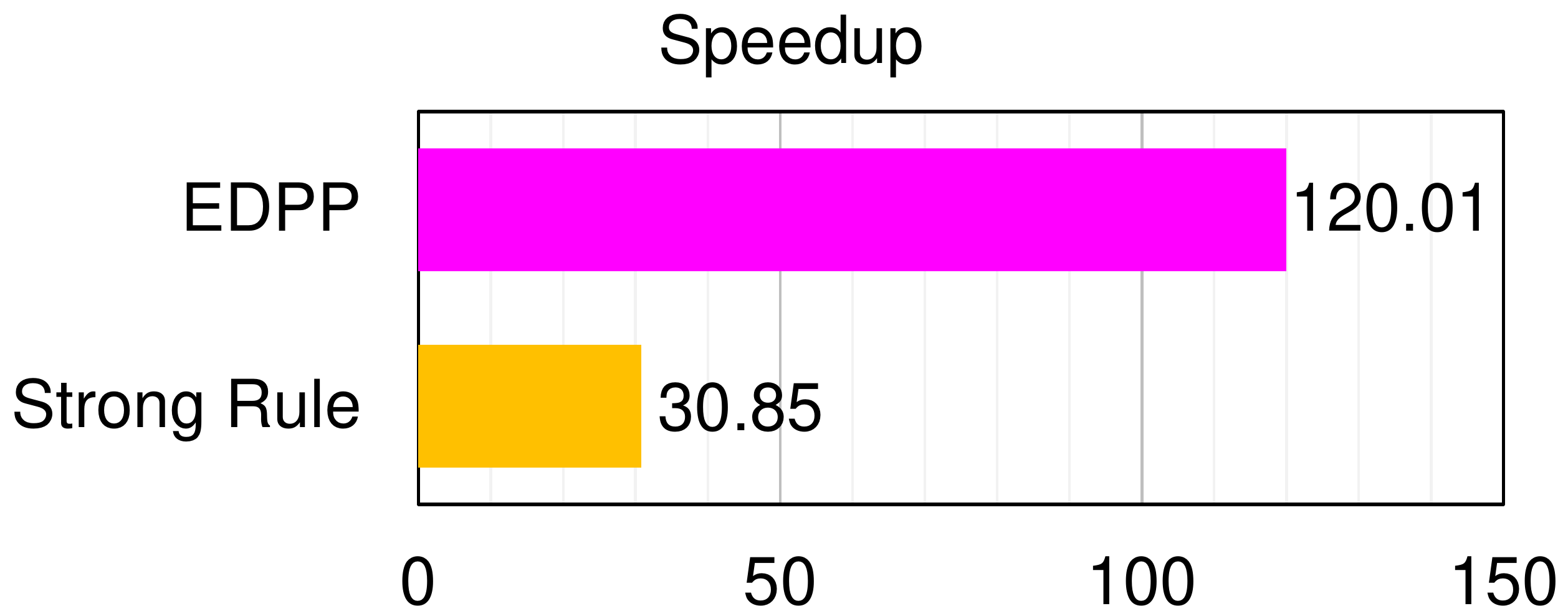}
			}
			\subfigure[$n_g=40000$] { \label{fig:ng40000}
				\includegraphics[width=0.31\columnwidth]{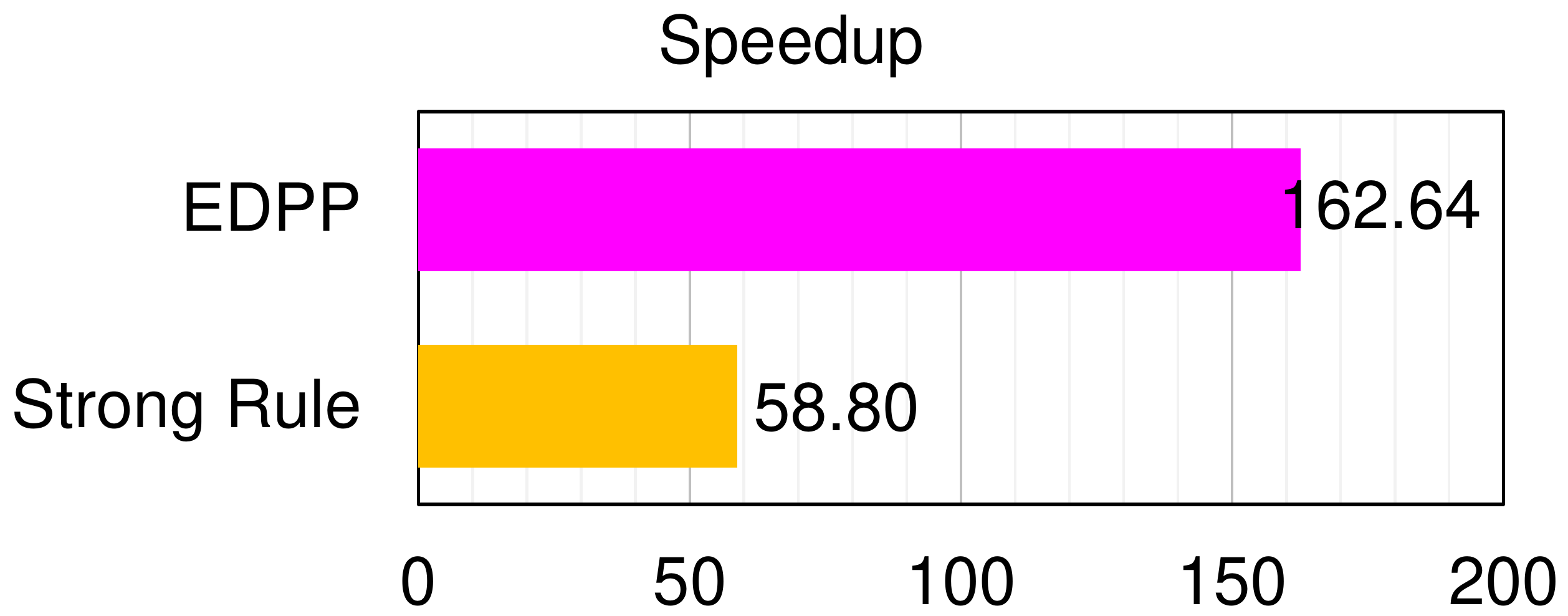}
			}
		}
		\caption{Comparison of EDPP and strong rules with different numbers of groups.}
		\label{fig:rej_ratio_e2}
	\end{figure}
	
	From Figure \ref{fig:rej_ratio_e2}, we can see that EDPP and strong rule are able to discard more inactive groups when the number of groups $n_g$ increases. The intuition behind this observation is that the estimation of the dual optimal solution is more accurate with a smaller group size. Notice that, a large $n_g$ implies a small average group size. Figure \ref{fig:rej_ratio_e2} also implies that compared to strong rule, EDPP is able to discard more inactive groups and is more robust with respect to different values of $n_g$.
	
	\begin{table}
		\begin{center}
			\begin{footnotesize}
				\def\arraystretch{1.25}
				\begin{tabular}{ |l||c||c|c|c||c|c|c| }
					\hline
					$n_g$ & solver & Strong Rule+solver & EDPP+solver &  Strong Rule & EDPP  \\
					\hline\hline
					$10000$  & 4535.54 &  296.60 & 53.81 &  13.99 & 8.32 \\\hline
					$20000$  & 5536.18 &  179.48 & 46.13 &  14.16 & 8.61 \\\hline
					$40000$  & 6144.48 &  104.50 & 37.78 &  13.13 & 8.37 \\\hline
				\end{tabular}
			\end{footnotesize}
		\end{center}
		\vspace{0.1in}
		\caption{Running time (in seconds) for solving the group Lasso problems along a sequence of $100$ tuning parameter values equally spaced on the scale of ${\lambda}/{\lambda_{\rm max}}$ from $0.05$ to $1.0$ by (a): the solver from SLEP (reported in the second column) without screening; (b): the solver combined with different screening methods (reported in the $3^{rd}$ and $4^{th}$ columns).
			The last two columns report the total running time (in seconds) for the screening methods. The data matrix ${\bf X}$ is of size $250\times200000$.
		}
		\label{table:time_diff_ng}
	\end{table}
	
	Table \ref{table:time_diff_ng} further demonstrates the effectiveness of EDPP in improving the efficiency of the solver. When $n_g=10000$, the efficiency of the solver is improved by about $80$ times. When $n_g=20000$ and $40000$, the efficiency of the solver is boosted by about $120$ and $160$ times with EDPP respectively.

\section{Conclusion}\label{section:conclusion}
In this paper, we develop new screening rules for the Lasso problem by making use of the properties of the projection operators with respect to a closed convex set. Our proposed methods, i.e., DPP screening rules, are able to effectively identify inactive predictors of the Lasso problem, thus greatly reducing the size of the optimization problem. Moreover, we further improve DPP rule and propose the enhanced DPP rule, which is more effective in discarding inactive features than DPP rule. The idea of the family of DPP rules can be easily generalized to identify the inactive groups of the group Lasso problem. Extensive numerical experiments on both synthetic and real data demonstrate the effectiveness of the proposed rules. It is worthwhile to mention that the family of DPP rules can be combined with any Lasso solver as a speedup tool.
In the future, we plan to generalize our ideas to other sparse formulations consisting of more general structured sparse penalties, e.g., tree/graph Lasso, fused Lasso.


\newpage

\appendix
\section*{Appendix A.}
In this appendix, we give the detailed derivation of the dual problem of Lasso.
\subsection*{A1. Dual Formulation}
Assuming the data matrix is ${\bf X}\in\mathbb{R}^{N\times p}$, the standard Lasso problem is given by:
\begin{equation}\label{equation:origin_prob}
	\inf_{\beta\in\mathbb{R}^p}\frac{1}{2}\|{\bf y - X\beta}\|_2^2+\lambda\|{\beta}\|_1.
\end{equation}
For completeness, we give a detailed deviation of the dual formulation of (\ref{equation:origin_prob}) in this section. Note that problem (\ref{equation:origin_prob}) has no constraints. Therefore the dual problem is trivial and useless. A common trick \citep{Boyd04} is to introduce a new set of variables ${\bf z = y - X\beta}$ such that problem (\ref{equation:origin_prob}) becomes:
\begin{align}\label{Equation origin_prob_new}
	\inf_{\beta}\qquad&\frac{1}{2}\|{\bf z}\|_2^2+\lambda\|{\beta}\|_1,\\ \nonumber
	\mbox{subject to} \qquad&{\bf z = y - X\beta}.
\end{align}

By introducing the dual variables $\eta\in\mathbb{R}^N$, we get the Lagrangian of problem (\ref{Equation origin_prob_new}):
\begin{equation}\label{Equation Lagrangin}
	L({\beta, {\bf z}, \eta})= \frac{1}{2}\|{\bf z}\|_2^2+\lambda\|{\beta}\|_1 + \eta^{T}\cdot({\bf y - X\beta - z}).
\end{equation}

For the Lagrangian, the primal variables are ${\beta}$ and ${\bf z}$.
And the dual function $g(\eta)$ is:
\begin{align}\label{Equation Dual}
	g(\eta) = \inf_{\beta, {\bf z}}L({\beta, {\bf z}, \eta})=\eta^{T}{\bf y}+\inf_{\beta}(-\eta^{T}{\bf X\beta}+\lambda\|{\beta}\|_1)+\inf_{\bf z}\big(\frac{1}{2}\|{\bf z}\|_2^2-\eta^{T}{\bf z}\big).
\end{align}

In order to get $g(\eta)$, we need to solve the following two optimization problems.
\begin{equation}\label{Equation minw}
	\inf_{\beta}-\eta^{T}{\bf X\beta}+\lambda\|{\beta}\|_1,
\end{equation}
and
\begin{equation}\label{Equation miny}
	\inf_{\bf z}\frac{1}{2}\|{\bf z}\|_2^2-\eta^{T}{\bf z}.
\end{equation}

Let us first consider problem (\ref{Equation minw}). Denote the objective function of problem (\ref{Equation minw}) as
\begin{equation}\label{Equation obj_minw}
	f_1({\beta}) = -\eta^{T}{\bf X\beta}+\lambda\|{\beta}\|_1.
\end{equation}
$f_1(\beta)$ is convex but not smooth. Therefore let us consider its subgradient $$\partial f_1({\beta}) = -{\bf X}^{T}\eta + \lambda{\bf v},$$ in which $\|{\bf v}\|_{\infty}\leq 1$ and ${\bf v^{T}\beta}=\|{\beta}\|_1$, i.e., ${\bf v}$ is the subgradient of $\|\beta\|_1$.

The necessary condition for $f_1$ to attain an optimum is $$\exists\,{\beta}^{\prime}, \mbox{ such that }0\in\partial f_1({\beta}^{\prime}) = \{-{\bf X}^{T}\eta + \lambda{\bf v}^{\prime}\},$$
where ${\bf v}^{\prime}\in\partial\|\beta^{\prime}\|_1$.
In other words, ${\beta^{\prime}, {\bf v}^{\prime}}$ should satisfy
$${\bf v^{\prime}} = \frac{{\bf X}^{T}\eta}{\lambda},\|{\bf v}^{\prime}\|_{\infty}\leq 1, {\bf v^{\prime}}^{T}{\beta}^{\prime}=\|{\beta}^{\prime}\|_1,$$
which is equivalent to
\begin{equation}\label{Equation constraint}
	|{\bf x}_i^{T}\eta|\leq\lambda, i = 1, 2, \ldots, p.
\end{equation}
Then we plug ${\bf v^{\prime}} = \frac{{\bf X}^{T}\eta}{\lambda}$ and ${\bf v^{\prime}}^{T}{\beta}^{\prime}=\|{\beta}^{\prime}\|_1$ into \eqref{Equation obj_minw}:
\begin{equation}
	f_1({\beta}^{\prime}) = \inf_{\beta}f_1({\beta})=-\eta^{T}{\bf X}{\beta}^{\prime}+\lambda\big(\frac{{\bf X}^{T}\eta}{\lambda}\big)^{T}{\beta}^{\prime}=0.
\end{equation}
Therefore, the optimum value of problem (\ref{Equation minw}) is $0$.

Next, let us consider problem (\ref{Equation miny}). Denote the objective function of problem (\ref{Equation miny}) as $f_2({\bf z})$. Let us rewrite $f_2({\bf z})$ as:
\begin{equation}
	f_2({\bf z}) = \frac{1}{2}(\|{\bf z} - \eta\|_2^2 - \|\eta\|_2^2).
\end{equation}
Clearly, $${\bf z}^{\prime} = \mathop{\argmin}_{\bf z}f_2({\bf z}) = \eta,$$
and $$\inf_{\bf z}f_2({\bf z}) = -\frac{1}{2}\|\eta\|_2^2.$$

Combining everything above, we get the dual problem:
\begin{align}\label{Equation dual_prob}
	\sup_{\eta}\quad &g(\eta)=\eta^{T}{\bf y}-\frac{1}{2}\|\eta\|_2^2,\\ \nonumber
	\mbox{subject to}\quad &|{\bf x}_i^{T}\eta|\leq \lambda,\,i=1,2,\ldots,p.
\end{align}
which is equivalent to
\begin{align}\label{Equation dual_prob_new1}
	\sup_{\eta}\quad &g(\eta)=\frac{1}{2}\|{\bf y}\|_2^2 - \frac{1}{2}\|\eta - {\bf y}\|_2^2,\\ \nonumber
	\mbox{subject to}\quad &|{\bf x}_i^{T}\eta|\leq \lambda,\,i=1,2,\ldots,p.
\end{align}

By a simple re-scaling of the dual variables $\eta$, i.e., let $\theta = \frac{\eta}{\lambda}$, problem (\ref{Equation dual_prob_new1}) transforms to:
\begin{align}\label{Equation dual_prob_new2}
	\sup_{\theta}\quad &g(\theta)=\frac{1}{2}\|{\bf y}\|_2^2 - \frac{\lambda^2}{2}\|\theta - \frac{{\bf y}}{\lambda}\|_2^2,\\ \nonumber
	\mbox{subject to}\quad &|{\bf x}_i^{T}\theta|\leq 1,\,i=1,2,\ldots,p.
\end{align}

\subsection*{A2. The KKT Conditions}

Problem (\ref{Equation origin_prob_new}) is clearly convex and its constraints are all affine. By Slater's condition, as long as problem (\ref{Equation origin_prob_new}) is feasible we will have strong duality. Denote ${\beta}^{\ast}$, ${\bf z}^{\ast}$ and $\theta^{\ast}$ as optimal primal and dual variables. The Lagrangian is

\begin{equation}\label{Equation Lagrangin_new}
	L({\beta, {\bf z}, \theta})= \frac{1}{2}\|{\bf z}\|_2^2+\lambda\|{\beta}\|_1 + \lambda \theta^{T}\cdot({\bf y - X\beta - z}).
\end{equation}
From the KKT condition, we have
\begin{equation}\label{Equation KKT_w}
	0\in\partial_{\beta}L({\beta}^{\ast}, {\bf z}^{\ast}, \theta^{\ast})=-\lambda{\bf X}^{T}\theta^{\ast}+\lambda{\bf v}, \mbox{ in which }\|{\bf v}\|_{\infty}\leq 1 \mbox{ and }{\bf v}^{T}{\beta}^{\ast}=\|{\beta}^{\ast}\|_1,
\end{equation}

\begin{equation}\label{Equation KKT_y}
	\nabla_{\bf z}L({\beta}^{\ast}, {\bf z}^{\ast}, \theta^{\ast})={\bf z}^{\ast}-\lambda\theta^{\ast}=0,
\end{equation}

\begin{equation}\label{Equation KKT_theta}
	\nabla_{\theta}L({\beta}^{\ast}, {\bf z}^{\ast}, \theta^{\ast})=\lambda({\bf y} - {\bf X}{\beta}^{\ast}-{\bf z}^{\ast})=0.
\end{equation}

From \eqref{Equation KKT_y} and (\ref{Equation KKT_theta}), we have:
\begin{equation}\label{Equation primal_dual_variables_lasso}
	{\bf y}={\bf X}{\beta}^{\ast}+\lambda\theta^{\ast}.
\end{equation}
From \eqref{Equation KKT_w}, we know there exists ${\bf v}^{\ast}\in\partial\|\beta^{\ast}\|_1$ such that $${\bf X}^{T}\theta^{\ast}={\bf v}^{\ast},\,\|{\bf v}^{\ast}\|_{\infty}\leq 1\mbox{ and } ({\bf v^{\ast}})^{T}{\beta}^{\ast}=\|{\beta}^{\ast}\|_1,$$
which is equivalent to
\begin{equation}\label{Equation dual_theta_b}
	|{\bf x}_i^{T}\theta^{\ast}|\leq 1, i=1,2,\ldots,p,\mbox{ and }(\theta^{\ast})^{T}{\bf X}{\beta}^{\ast}=\|{\beta}^{\ast}\|_1.
\end{equation}
From \eqref{Equation dual_theta_b}, it is easy to conclude:
\begin{equation}\label{Equation dual_w}
	(\theta^{\ast})^{T}{\bf x}_i\in
	\begin{cases}
		{\sign(\beta^*_i)}\mbox{ if }\beta_i^{\ast}\neq 0,  \\
		[-1, 1]\hspace{3.5mm}\mbox{ if }\beta_i^{\ast}=0.   \\
	\end{cases}
\end{equation}

\section*{Appendix B.}
In this appendix, we present the detailed derivation of the dual problem of group Lasso.
\subsection*{B1. Dual Formulation}
Assuming the data matrix is ${\bf X}_g\in\mathbb{R}^{N\times n_g}$ and $p=\sum_{g=1}^{G}n_g$, the group Lasso problem is given by:
\begin{equation}\label{problem:group_lasso_primal}
	\inf_{\beta\in\mathbb{R}^{p}}\frac{1}{2}\|{\bf y}-\sum_{g=1}^{G}{\bf X}_g\beta_g\|_2^2+\lambda\sum_{g=1}^{G}\sqrt{n_g}\|\beta_g\|_2.
\end{equation}
Let ${\bf z}={\bf y}-\sum_{g=1}^{G}{\bf X}_g\beta_g$ and problem (\ref{problem:group_lasso_primal}) becomes:
\begin{align}\label{problem:group_lasso_primal_v1}
	\inf_{\beta}\qquad&\frac{1}{2}\|{\bf z}\|_2^2+\lambda\sum_{g=1}^{G}\sqrt{n_g}\|\beta_g\|_2,\\ \nonumber
	\mbox{subject to} \qquad&{\bf z}={\bf y}-\sum_{g=1}^{G}{\bf X}_g\beta_g.
\end{align}

By introducing the dual variables $\eta\in\mathbb{R}^N$, the Lagrangian of problem (\ref{problem:group_lasso_primal_v1}) is:
\begin{equation}\label{equation:g_Lagrangin}
	L({\beta, {\bf z}, \eta})= \frac{1}{2}\|{\bf z}\|_2^2+\lambda\sum_{g=1}^{G}\sqrt{n_g}\|\beta_g\|_2 + \eta^{T}\cdot({\bf y}-\sum_{g=1}^{G}{\bf X}_g\beta_g - {\bf z}).
\end{equation}
and the dual function $g(\eta)$ is:
\begin{align}\label{equation:g_Dual}
	g(\eta) = \inf_{\beta, {\bf z}}L({\beta, {\bf z}, \eta})=\eta^{T}{\bf y}+\inf_{\beta}\bigg(-\eta^{T}\sum_{g=1}^{G}{\bf X}_g\beta_g+\lambda\sum_{g=1}^{G}\sqrt{n_g}\|\beta_g\|_2\bigg)+\inf_{\bf z}\big(\frac{1}{2}\|{\bf z}\|_2^2-\eta^{T}{\bf z}\big).
\end{align}

In order to get $g(\eta)$, let us solve the following two optimization problems.
\begin{equation}\label{equation:g_minbeta}
	\inf_{\beta}-\eta^{T}\sum_{g=1}^{G}{\bf X}_g\beta_g+\lambda\sum_{g=1}^{G}\sqrt{n_g}\|\beta_g\|_2,
\end{equation}
and
\begin{equation}\label{equation:g_minz}
	\inf_{\bf z}\frac{1}{2}\|{\bf z}\|_2^2-\eta^{T}{\bf z}.
\end{equation}

Let us first consider problem (\ref{equation:g_minbeta}). Denote the objective function of problem (\ref{equation:g_minbeta}) as
\begin{align}\label{equation:g_obj_minbeta}
	\hat f({\beta}) = -\eta^{T}\sum_{g=1}^{G}{\bf X}_g\beta_g+\lambda\sum_{g=1}^{G}\sqrt{n_g}\|\beta_g\|_2,
\end{align}
Let $$\hat f_g(\beta_g)=-\eta^{T}{\bf X}_g\beta_g+\lambda\sqrt{n_g}\|\beta_g\|_2,\qquad g=1,2,\ldots,G.$$
then we can split problem (\ref{equation:g_minbeta}) into a set of subproblems.
Clearly $\hat f_g(\beta_g)$ is convex but not smooth because it has a singular point at $0$. Consider the subgradient of $\hat f_g$,
$$\partial\hat f_g({\beta}_g) = -{\bf X}_g^{T}\eta + \lambda\sqrt{n_g}{\bf v}_g,\qquad g=1,2,\ldots,G,$$
where ${\bf v}_g$ is the subgradient of $\|\beta_g\|_2$:
\begin{equation}\label{equation:g_subgradient}
	{\bf v}_g\in
	\begin{cases}
		\frac{\beta_g}{\|\beta_g\|_2}\qquad\qquad\mbox{ if }\beta_g\neq 0,  \\
		{\bf u},\, \|{\bf u}\|_2\leq 1\quad\mbox{ if }\beta_g=0.   \\
	\end{cases}
\end{equation}
Let $\beta_g'$ be the optimal solution of $\hat f_g$, then $\beta_g'$ satisfy $$\exists {\bf v}_g'\in\partial\|\beta_g'\|_2,\quad -{\bf X}_g^{T}\eta + \lambda\sqrt{n_g}{\bf v}_g' = 0.$$

If $\beta_g'=0$, clearly, $\hat f_g(\beta_g')=0$. Otherwise, since $\lambda\sqrt{n_g}{\bf v}_g' = {\bf X}_g^{T}\eta$ and ${\bf v}_g'=\frac{\beta_g'}{\|\beta_g'\|_2}$, we have
$$\hat f_g(\beta_g')=-\lambda\sqrt{n_g}\frac{(\beta_g')^T}{\|\beta_g'\|_2}\beta_g'+\lambda\sqrt{n_g}\|\beta_g'\|_2=0.$$

All together, we can conclude the
$$\inf_{\beta_g} \hat f_g(\beta_g)=0,\quad g = 1,2,\ldots,G$$
and thus $$\inf_{\beta}\hat f(\beta)=\inf_{\beta}\sum_{g=1}^{G}\hat f_g(\beta_g)=\sum_{g=1}^{G}\inf_{\beta_g}\hat f_g(\beta_g)=0.$$
The second equality is due to the fact that $\beta_g$'s are independent.

Note, from \eqref{equation:g_subgradient}, it is easy to see $\|{\bf v}_g\|_2\leq 1$. Since $\lambda\sqrt{n_g}{\bf v}_g' = {\bf X}_g^{T}\eta$, we get a constraint on $\eta$, i.e., $\eta$ should satisfy:
$$\|{\bf X}_g^{T}\eta\|_2\leq\lambda\sqrt{n_g},\qquad g=1,2,\ldots,G.$$

Next, let us consider problem (\ref{equation:g_minz}). Since problem (\ref{equation:g_minz}) is exactly the same as problem (\ref{Equation miny}), we conclude:
$${\bf z}^{\prime} = \mathop{\argmin}_{\bf z}\frac{1}{2}\|{\bf z}\|_2^2-\eta^T{\bf z} = \eta,$$
and $$\inf_{\bf z}\frac{1}{2}\|{\bf z}\|_2^2-\eta^T{\bf z} = -\frac{1}{2}\|\eta\|_2^2.$$
Therefore the dual function $g(\eta)$ is:
$$g(\eta)=\eta^T{\bf y}-\frac{1}{2}\|\eta\|_2^2.$$

Combining everything above, we get the dual formulation of the group Lasso:
\begin{align}\label{problem:group_lasso_dual}
	\sup_{\eta}\quad &g(\eta)=\eta^T{\bf y}-\frac{1}{2}\|\eta\|_2^2,\\ \nonumber
	\mbox{subject to}\quad &\|{\bf X}_g^{T}\eta\|_2\leq \lambda\sqrt{n_g},\,g=1,2,\ldots,G.
\end{align}
which is equivalent to
\begin{align}\label{Equation dual_prob_new1_glasso}
	\sup_{\eta}\quad &g(\eta)=\frac{1}{2}\|{\bf y}\|_2^2 - \frac{1}{2}\|\eta - {\bf y}\|_2^2,\\ \nonumber
	\mbox{subject to}\quad &\|{\bf X}_g^{T}\eta\|_2\leq \lambda\sqrt{n_g},\,g=1,2,\ldots,G.
\end{align}

By a simple re-scaling of the dual variables $\eta$, i.e., let $\theta = \frac{\eta}{\lambda}$, problem (\ref{Equation dual_prob_new1_glasso}) transforms to:
\begin{align}\label{problem:group_lasso_dual_f}
	\sup_{\theta}\quad &g(\theta)=\frac{1}{2}\|{\bf y}\|_2^2 - \frac{\lambda^2}{2}\|\theta - \frac{{\bf y}}{\lambda}\|_2^2,\\ \nonumber
	\mbox{subject to}\quad &\|{\bf X}_g^{T}\theta\|_2\leq \sqrt{n_g},\,g=1,2,\ldots,G.
\end{align}

\subsection*{B2. The KKT Conditions}

Clearly, problem (\ref{problem:group_lasso_primal_v1}) is convex and its constraints are all affine. By Slater's condition, as long as problem (\ref{problem:group_lasso_primal_v1}) is feasible we will have strong duality. Denote ${\beta}^{\ast}$, ${\bf z}^{\ast}$ and $\theta^{\ast}$ as optimal primal and dual variables. The Lagrangian is

\begin{equation}\label{Equation g_Lagrangin_new}
	L({\beta, {\bf z}, \theta})= \frac{1}{2}\|{\bf z}\|_2^2+\lambda\sum_{g=1}^{G}\sqrt{n_g}\|\beta_g\|_2 + \lambda\theta^{T}\cdot({\bf y}-\sum_{g=1}^{G}{\bf X}_g\beta_g - {\bf z}).
\end{equation}
From the KKT condition, we have
\begin{equation}\label{Equation g_KKT_beta}
	0\in\partial_{\beta_g}L({\beta}^{\ast}, {\bf z}^{\ast}, \theta^{\ast})=-\lambda{\bf X}_g^{T}\theta^{\ast}+\lambda\sqrt{n_g}{\bf v}_g, \mbox{ in which }{\bf v}_g\in\partial\|\beta_g^{\ast}\|_2,\quad g=1,2,\ldots,G,
\end{equation}

\begin{equation}\label{Equation g_KKT_z}
	\nabla_{\bf z}L({\beta}^{\ast}, {\bf z}^{\ast}, \theta^{\ast})={\bf z}^{\ast}-\lambda\theta^{\ast}=0,
\end{equation}

\begin{equation}\label{Equation g_KKT_theta}
	\nabla_{\theta}L({\beta}^{\ast}, {\bf z}^{\ast}, \theta^{\ast})=\lambda\cdot({\bf y}-\sum_{g=1}^{G}{\bf X}_g\beta_g^{\ast} - {\bf z}^{\ast})=0.
\end{equation}

From \eqref{Equation g_KKT_z} and (\ref{Equation g_KKT_theta}), we have:
\begin{equation}\label{Equation primal_dual_variables_glasso}
	{\bf y}=\sum_{g=1}^{G}{\bf X}_g\beta_g^{\ast}+\lambda\theta^{\ast}.
\end{equation}

From \eqref{Equation g_KKT_beta}, we know there exists ${\bf v}_g'\in\partial\|\beta_g^{\ast}\|_2$ such that $${\bf X}_g^{T}\theta^{\ast}=\sqrt{n_g}{\bf v}_g'$$
and
\begin{equation*}
	{\bf v}_g'\in
	\begin{cases}
		\frac{\beta_g^{\ast}}{\|\beta_g^{\ast}\|_2}\qquad\qquad\mbox{ if }\beta_g^{\ast}\neq 0,  \\
		{\bf u},\, \|{\bf u}\|_2\leq 1\quad\mbox{ if }\beta_g^{\ast}=0,   \\
	\end{cases}
\end{equation*}

Then the following holds:
\begin{equation}\label{equation:g_KKT}
	{\bf X}_g^{T}\theta^{\ast}\in
	\begin{cases}
		\sqrt{n_g}\frac{\beta_g^{\ast}}{\|\beta_g^{\ast}\|_2}\hspace{10.5mm}\mbox{ if }\beta_g^{\ast}\neq 0,  \\
		\sqrt{n_g}{\bf u},\,\|{\bf u}\|_2\leq 1\mbox{ if }\beta_g^{\ast}= 0,   \\
	\end{cases}
\end{equation}
for $g=1,2,\ldots,G$. Clearly, if $\|{\bf X}_g^{T}\theta^{\ast}\|_2<\sqrt{n_g}$, we can conclude $\beta_g^{\ast}=0$.

\vskip 0.2in
\bibliographystyle{plainnat}
\bibliography{refs}

\end{document}